\setlist[itemize]{leftmargin=5.5mm} % Smaller margins to the left in itemize
\setlist[enumerate]{leftmargin=5.5mm} % Smaller margins to the left in itemize
\title{A Consistent Estimator for Confounding Strength}
\author{%
}
\begin{document}

\maketitle
\textbf{Luca Rendsburg$^{\boldsymbol{*}}$}
\hfill \textsc{luca.rendsburg@uni-tuebingen.de} \\
\textit{Department of Computer Science}\\
\textit{University of T\"{u}bingen}\medskip\\
\textbf{Leena Chennuru Vankadara$^{\boldsymbol{*}}$}
\hfill \textsc{leena.chennuru-vankadara@uni-tuebingen.de} \\
\textit{Department of Computer Science}\\
\textit{University of T\"{u}bingen}\medskip\\
\textbf{Debarghya Ghoshdastidar}
\hfill \textsc{ghoshdas@cit.tum.de} \\
\textit{School of Computation Information and Technology}\\
\textit{Technical University of Munich}\\
\textit{Munich Data Science Institute}\medskip\\
\textbf{Ulrike von Luxburg}
\hfill \textsc{ulrike.luxburg@uni-tuebingen.de} \\
\textit{Department of Computer Science}\\
\textit{University of T\"{u}bingen}

\def\thefootnote{$\boldsymbol{*}$}\footnotetext{Equal contribution.}
\def\thefootnote{\arabic{footnote}}

\title{\huge A Consistent Estimator for Confounding Strength}

\begin{abstract}
Regression on observational data can fail to capture a causal relationship in the presence of unobserved confounding. 
Confounding strength measures this mismatch, but estimating it requires itself additional assumptions. A common assumption is the independence of causal mechanisms, which relies on concentration phenomena in high dimensions. While high dimensions enable the estimation of confounding strength, they also necessitate adapted estimators. In this paper, we derive the asymptotic behavior of the confounding strength estimator by \citet{Jan:2018} and show that it is generally not consistent. We then use tools from random matrix theory to derive an adapted, consistent estimator.
\end{abstract}

%%%%%%%%%%%%%%%%%%%%%%%%%%%%%%%%%%%%%%%%%%%%%%%%%%%%%%%%%%%%
\section{Introduction}\label{sec:intro}

% Causal influence X -> Y from observational data
A common machine learning task is to learn the influence of features $x$ on a target variable $y$ from a set of observations $\{(x_i,y_i)\}_{i=1}^n$.
In many applications, we are not only interested in the statistical problem of predicting $y$ after \textit{observing} $x$; instead, we ask the causal question of how $y$ changes after \textit{intervening} on $x$.
% Problem: unidentifiable causal dependence structure
Unfortunately, the causal dependence structure between $x$ and $y$ is in general not identifiable from their statistical dependencies \citep{Pea:2009book}. 
Simply regressing $y$ on $x$ attributes all dependencies to direct causal influence and is therefore only appropriate when $x$ causes $y$ without hidden confounders.
However, this solution can be grossly misleading in the other possible cases where $y$ causes $x$ or both are caused by a common confounder \citep{Rei:1956}.

% Example
For example, assume we want to predict how increasing a person's education $x$ affects their income $y$.
It could be that a higher education is a requirement for well-paying jobs (education causes income), in which case increasing the education directly increases the income. However, even if we rule out the possibility that income causes education, education and income could both be affected by some hidden confounders such as the socioeconomic status of the parents. A priori, it is unclear to what extent the observed statistical dependence between $x$ and $y$ is due to direct causal influence or due to such confounding factors.

% Dealing with non-identifiability
This fundamental non-identifiability issue of causal from observational structure can be addressed in different ways. 
%% Additional data (interventional, IV)
One way is access to additional data such as data from different environments \citep{Pet:2016, Hei:2018} or instrumental variables \citep{Bow:1990, Imb:1994}, which reduces the causal learning problem to a statistical one.
%% Structural assumptions
Alternatively, one can assume that the underlying causal model follows a certain data-generating process such as additive noise models \citep{Kan:2003, Hoy:2008Nonlin, Zha:2009}. 
This reduces the number of causal models which can explain a given observational structure and therefore mitigates the non-identifiability. 
%% Postulates
A more abstract approach to choose a causal model among those compatible with an observational structure is to postulate certain information-theoretic properties of the causal model. 
For example, the causal directions are those that maximize conditional entropies or the causal factorization of the joint distribution is the one with minimal Kolmogorov complexity \citep{Sun:2006, Jan:2010AlgMarkov, Blo:2018, Mar:2019}.

% This paper
In this paper, we theoretically analyze the confounding strength estimator by \citet{Jan:2018}.
This estimator assumes that $x$ causes $y$ and aims to estimate the strength of unobserved confounding based on observational data $\{(x_i,y_i)\}_{i=1}^n$. 
Here, the confounding strength is defined as the discrepancy between the causal effect of $x$ on $y$ and the statistical regression vector.
To mitigate the non-identifiability, the estimator considers a linear Gaussian causal model under the assumption of independent causal mechanisms, a common assumption in causal learning \citep{Jan:2010AlgMarkov, Lem:2013, Pet:2017}. Abstractly, this principle states that the different causal mechanisms share no information. While the task of confounding strength estimation remains ill-posed in finite dimensions, it becomes solvable in the high-dimensional limit due to concentration of measure phenomena.
% Problem with this estimator
Crucially, this approach therefore requires large dimension $d$ to reduce the non-identifiability error, but at the same time requires an even larger number of samples $n\gg d$ to reduce the finite-sample error. This is because it uses the empirical covariance matrix and regression vector in an intermediate step to estimate the corresponding population quantities, which is only consistent for $n\gg d$. 
It is therefore not guaranteed that this estimator is consistent in the high-dimensional regime.
We address this issue by analyzing this estimator, from here on referred to as the plug-in estimator, in the proportional asymptotic regime $n, d\to \infty$ with $d/n\to\gamma\in [0,1)$ and make the following contributions:
\begin{itemize}
    \item We derive the asymptotic behavior of the plug-in estimator for confounding strength from \citet{Jan:2018} in the proportional asymptotic regime and show that it is not generally consistent. We also show that the approach based on population instead of finite-sample quantities is consistent.
    \item We derive a consistent estimator for confounding strength by correcting the above estimator with tools from random matrix theory. 
    \item We demonstrate the improvement experimentally on finite-dimensional data from our causal model.
\end{itemize}

The paper is structured as follows.
Section~\ref{sec:related_work} gives an overview of related work on causal inference under unobserved confounding.
Section~\ref{sec:preliminaries} introduces the confounded causal model, the measure of confounding strength, and basic notions from random matrix theory which are needed for the analysis.
Section~\ref{sec:population_and_plugin} describes the general approach of \citet{Jan:2018} and shows that it is consistent based on population quantities in Section~\ref{sec:population}, but generally biased based on plug-in quantities in Section~\ref{sec:plugin}.
A corrected, consistent estimator for confounding strength is then derived in Section~\ref{sec:consistent_estimator}.
Section~\ref{sec:discussion} concludes with a discussion.

\section{Related work}\label{sec:related_work}

Learning causal relationships under the presence of unobserved confounding has been investigated by multiple works.
\citet{Hoy:2008Lin} detect the causal direction in linear non-Gaussian models based on the structure of the mixing matrix and \citet{Jan:2009} do so for non-linear additive noise models.
\citet{Jan:2011} detect low-complexity confounding based on a purity criterion for conditional distributions.
\citet{Kal:2019} decide whether a causal model is confounded based on the algorithmic Markov condition.
\citet{Che:2022} consider the stability of the regression vectors under different environments as an indication for causal influence.

Our paper falls into another line of work that detects confounding based on the assumption of independent causal mechanisms. This assumption induces certain non-generic alignments between the coefficients of the observational distribution, which can be used to identify confounding.
\citet{Bel:2021} use this assumption to learn a sparse causal DAG under dense confounding.
\citet{Jan:2017} introduce the notion of confounding strength and estimate it under scalar confounding. Their method is based on the observation that a weighted spectral measure of the covariance matrix concentrates in high dimensions.
\citet{Liu:2018} build on this idea by moving from the spectral measure to its first moment.
\citet{Jan:2018} extend this setting to multivariate confounding, which is the setting of our work.
\citet{Jan:2019} considers a subsequent task of learning a causal model with ridge regression. It uses an estimate of confounding strength to choose an appropriate regularization parameter, which is motivated by an analogy between finite sample error and confounding.
\citet{Van:2022} generalize the notion of confounding strength beyond independent causal mechanisms and characterize the relationship between confounding strength and the causal risk of ridge regression in the high-dimensional limit.

% Treatment-effect literature
Another related field is sensitivity analysis for treatment-effect studies based on observational data. Sensitivity analysis aims to quantify how sensitive causal conclusions are to potential unobserved confounding \citep{Cor:2009}. Since this task suffers from the same non-identifiability issue as described above, early work relies on assumptions about the unobserved confounder \citep{Fla:1990, Van:2011}. A more recent, popular approach 
without assumptions gives bounds based on two (unknown) sensitivity parameters for how strong confounding would need to be in order to explain away any observed statistical associations between treatment and effect \citep{Din:2016, Sjo:2020, Pen:2022}.
The region of sensitivity parameters that explain away associations can be condensed into a single E-value, which acts as a measure of confounding strength and can be computed from observational data \citep{Van:2017, Van:2019}.

\section{Preliminaries}\label{sec:preliminaries}
This preliminary section introduces our confounded causal and a notion of confounding strength in Section~\ref{sec:causal_model}, as well as basic tools from random matrix theory needed for analysis in Section~\ref{sec:rmt_basics}.

\subsection{The confounded causal model}\label{sec:causal_model}
% Figure: DAGs
% \begin{figure*}[htbp]
% \begin{center}
%     \begin{subfigure}[b]{.46\linewidth}
%     \centering
%     \input{fig/causal_DAG}
%     \caption{}
%     \label{fig:causal_DAG}
%     \end{subfigure}
%     \hspace{10pt}
%     \begin{subfigure}[b]{.46\linewidth}
%     \centering
%     \input{fig/statistical_DAG}
%     \caption{}
%     \label{fig:statistical_DAG}
%     \end{subfigure}
% \caption{\TODO{Figure taken from \citet{Van:2022}}}
% \label{fig:DAGs}
% \end{center}
% \end{figure*}
\begin{figure*}[t]
\centering
\begin{tikzpicture}[x=1.5cm,y=1.0cm]
    % Nodes
    \node[obs] (x) {$x$} ; %
    \node[latent, above right=of x] (z) {$z$} ; %
    \node[obs, below right=of z] (y) {$y$} ; %
    \node[latent, above=of y] (eps) {$\varepsilon$} ; %

    % Edges
    \edge {x} {y}  ; %
    \edge {eps} {y} ; %
    \edge {z} {y} ; %
    \edge {z} {x} ; %
    
    % Label edges by redrawing invisible lines with regular tikz commands
    \path (x) -- (y) node [midway,above](TextNode){$\beta$};
    \path (z) -- (x) node [midway,above](TextNode){$M$};
    \path (z) -- (y) node [midway,above](TextNode){$\alpha$};
\end{tikzpicture}%
\hspace{50pt}
\begin{tikzpicture}[x=1.5cm,y=1.0cm]
    % Nodes
    \node[obs] (x) {$x$} ; %
    \node[obs, right=of x] (y) {$y$} ; %
    \node[latent, above=of y] (eps) {$\tilde{\varepsilon}$} ; %

    % Edges
    \edge {x} {y} ; %
    \edge {eps} {y} ; %
    
    % Label edges by redrawing invisible lines with regular tikz commands
    \path (x) -- (y) node [midway,above](TextNode){$\tilde{\beta}$};
\end{tikzpicture}
\caption{\textbf{Left:} DAG corresponding to the causal model \eqref{eq:causal_model}. \textbf{Right:} corresponding observational model as in Proposition~\ref{lem:observational_dist} with $\tilde{\varepsilon}\sim\mathcal{N}(0,\sigmastat)$. Unobserved variables are dashed.}
\label{fig:causal_DAG}
\end{figure*}
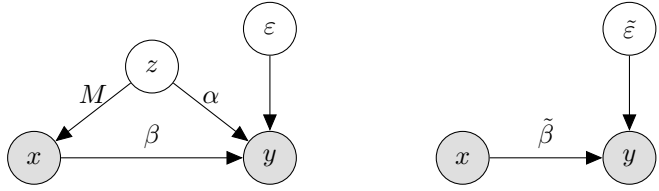

We first describe the problem setup and introduce basic quantities.
% Causal model: definition
We consider a confounded causal model with linear conditionals and Gaussian distributions. Specifically, we define the causal model in terms of its structural equations
\begin{equation}\label{eq:causal_model}
\begin{aligned}
    z&\sim\Gauss{0}{I_l}\,,\\
    \varepsilon &\sim \Gauss{0}{\sigmacaus}\,,\\
    x&= Mz\,,\\
    y&= x^T\beta + z^T\alpha + \varepsilon\,.
\end{aligned}
\end{equation}
Figure~\ref{fig:causal_DAG} shows the corresponding directed acyclic graph (DAG).
The model depends on a set of hyperparameters $\alpha\in\R^l, \beta\in\R^d, M\in\R^{d\times l}$ with $l\geq d$ and the noise $\sigmacaus\geq 0$. 
All variables $x,y,z$ have mean 0 and the covariance of the features is given by $\Sigma\coloneqq\Cov(x)=MM^T\in\R^{d\times d}$. We additionally assume that $M$ has full rank $d$ such that $\Sigma$ is invertible.
We use the notation $\norm{x}^2_\Sigma\coloneqq x^T\Sigma x$ for the generalized norm, $M^+$ for the pseudo-inverse of $M$, and $M^{+T}\coloneqq (M^+)^T$ as shorthand.

% Statistical model
By construction, $\beta$ describes the causal influence of $x$ on $y$. This is formally captured by the interventional distribution of the \textit{do}-calculus \citep{Pea:2009} under which $y=x_0^T\beta + z^T\alpha + \varepsilon$ is only a random variable in $z,\varepsilon$ and therefore $\E_{y|do(x=x_0)}y=x_0^T\beta$.
However, we do not assume access to interventional data; instead, we only observe values values $(x,y)$. The corresponding statistical dependencies between $x$ and $y$ are captured by the usual conditional distribution:

\begin{lemma}[Observational distribution]\label{lem:observational_dist}
For the causal model~\eqref{eq:causal_model}, the observational distribution of $y$ given $x$ is $y|x\sim\mathcal{N}(x^T\betaStat, \sigmastat)$, where $\betaStat=\beta+M^{+T}\alpha$ and $\sigmastat=\sigmacaus+\norm{\alpha}^2_{I_l-M^+M}$.
\end{lemma}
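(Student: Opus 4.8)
The plan is to use that $(x,y)$ is jointly Gaussian, being a linear function of the independent Gaussian variables $z$ and $\varepsilon$ in \eqref{eq:causal_model}. Hence $y\mid x$ is automatically Gaussian with mean linear in $x$ and variance not depending on $x$, so it only remains to identify these two quantities. The cleanest route I would take is to condition on $z$ first: since $z\sim\Gauss{0}{I_l}$ and $x=Mz$, standard Gaussian conditioning gives $\E[z\mid x]=M^T(MM^T)^{-1}x=M^+x$ and $\Cov(z\mid x)=I_l-M^T(MM^T)^{-1}M=I_l-M^+M$, where I use that $M$ has full row rank so that $\Sigma=MM^T$ is invertible and $M^+=M^T\Sigma^{-1}$. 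Substituting into $y=x^T\beta+z^T\alpha+\varepsilon$ and using that $\varepsilon\sim\Gauss{0}{\sigmacaus}$ is independent of $z$ yields
\[
y\mid x\ \sim\ \Gauss{\,x^T\beta+\alpha^TM^+x\,}{\,\sigmacaus+\alpha^T(I_l-M^+M)\alpha\,}.
\]
Rewriting $\alpha^TM^+x=x^TM^{+T}\alpha$, this is exactly $y\mid x\sim\Gauss{x^T\betaStat}{\sigmastat}$ with $\betaStat=\beta+M^{+T}\alpha$ and $\sigmastat=\sigmacaus+\norm{\alpha}^2_{I_l-M^+M}$.

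An equivalent, moment-based route is to compute $\Cov(x)=\Sigma$, $\Cov(x,y)=\Sigma\beta+M\alpha$ (using $\E[xz^T]=M$ and $\E[x\varepsilon]=0$), and $\E[y^2]=\beta^T\Sigma\beta+2\beta^TM\alpha+\norm{\alpha}^2+\sigmacaus$, and then to insert them into the conditional-Gaussian formulas: the regression vector is $\Sigma^{-1}(\Sigma\beta+M\alpha)=\beta+\Sigma^{-1}M\alpha=\beta+M^{+T}\alpha$, and in the residual variance $\E[y^2]-\Cov(y,x)\Sigma^{-1}\Cov(x,y)$ the contributions $\beta^T\Sigma\beta$ and $2\beta^TM\alpha$ cancel, leaving $\sigmacaus+\alpha^T(I_l-M^T\Sigma^{-1}M)\alpha=\sigmacaus+\norm{\alpha}^2_{I_l-M^+M}$.

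Neither computation is deep; the only part that needs genuine care is the pseudo-inverse bookkeeping under the full-rank assumption on $M$ — in particular the identities $M^{+T}=\Sigma^{-1}M$, $MM^+=I_d$, and $M^+M=M^T\Sigma^{-1}M$, with the last one being the orthogonal projector onto $\ker M$. That projector property is also what makes $I_l-M^+M$ positive semidefinite, so that the generalized norm $\norm{\alpha}^2_{I_l-M^+M}$ in $\sigmastat$ is well-defined.
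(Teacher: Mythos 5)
Your first route is exactly the paper's proof: condition $z$ on $x$ via standard Gaussian conditioning ($z\mid x\sim\Gauss{M^+x}{I_l-M^+M}$), substitute into $y=x^T\beta+z^T\alpha+\varepsilon$, and add the independent noise $\varepsilon$; the moment-based second route is a correct but unneeded alternative. One small slip in your closing remark: $M^+M=M^T\Sigma^{-1}M$ is the orthogonal projector onto the row space $(\ker M)^{\perp}$, so it is $I_l-M^+M$ that projects onto $\ker M$ --- this does not affect your argument, since $I_l-M^+M$ is still an orthogonal projector and hence positive semidefinite.
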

\begin{proof}
% The variables $(x,y,z)$ are jointly Gaussian under model~\eqref{eq:causal_model}. Therefore, the conditional distribution $y|x$ is also Gaussian. For details regarding the corresponding mean and variance, see \citet{Van:2022}.
% \begin{itemize}
%     \item $z|x\sim\Gauss{M^T(MM^T)^+x}{I-M^T(MM^T)^+M}$
%     \item $z^T\alpha|x\sim\Gauss{\alpha^TM^T(MM^T)^+x}{\norm{\alpha}^2-\alpha^TM^T(MM^T)^+M\alpha}$
%     \item With $y=x^T\beta + z^T\alpha + \varepsilon$: $y|x\sim\Gauss{(\beta +\alpha^TM^T(MM^T)^+)x}{\norm{\alpha}^2-\alpha^TM^T(MM^T)^+M\alpha + \sigmacaus}$
% \end{itemize}
Since $z\sim\Gauss{0}{I_l}$ is Gaussian and $x=Mz$ is a linear map, it is a standard result that $z^T|x$ is Gaussian again with parameters $z^T|x\sim\Gauss{x^TM^{+T}}{I-M^+M}$. Subsequently, we have $z^T\alpha|x\sim\Gauss{x^TM^{+T}\alpha}{\norm{\alpha}^2_{I-M^+M}}$. With $y=x^T\beta + z^T\alpha+\varepsilon$, we arrive at
\begin{align*}
    y|x\sim\Gauss{x^T(\beta+M^{+T}\alpha)}{\sigmacaus+\norm{\alpha}^2_{I-M^+M}}=\Gauss{x^T\betaStat}{\sigmastat}\,.
\end{align*}
\end{proof}

The statistical parameter $\betaStat$ can also be viewed as the result of regressing $y$ on $x$ on the population level, that is, $\betaStat=\Cov(x)^+\Cov(x,y)$.
Notice that $\betaStat$ is equal to the causal parameter $\beta$ up to an error term $M^{+T}\alpha$, which results from the influence of the confounder $z$ on $y$.
This error term cannot be identified even if we have access to the full joint distribution $\proba_{(x,y)}$, which demonstrates the fundamental non-identifiability issue of causal learning.
To quantify the error of incorrectly treating $\betaStat$ as the causal parameter, \citet{Jan:2017} propose the following measure of confounding strength:

\begin{definition}[Measure of confounding strength, \citep{Jan:2017}]
The \emph{confounding strength} $\conf$ for the causal model~\eqref{eq:causal_model} is defined as the relative error between statistical parameter $\betaStat$ and causal parameter $\beta$ via
\begin{align}\label{eq:def_conf}
    \conf \coloneqq \frac{\norm{\tilde{\beta}-\beta}^2}{\norm{\beta}^2 + \norm{\tilde{\beta}-\beta}^2}\,.
\end{align}
\end{definition}
The confounding strength $\conf$ takes values in $[0,1]$, where $\conf=0$ describes the unconfounded case $\alpha=0$ for which $\betaStat=\beta$ and $\conf=1$ describes the purely confounded case $\beta=0$. A larger confounding strength implies that the statistical parameter is further away from the causal parameter.
% \citet{Van:2022} generalize the notion of confounding strength in the sense that \eqref{eq:def_conf} is a special case under the assumption of independent causal mechanisms, which we will make in the following.

% Estimating confounding strength
The goal of this paper is to estimate the confounding strength based on finite samples $\{(x_i,y_i)\}_{i=1}^n\subset\R^d\times\R$ from the observational distribution $\proba_{(x,y)}$, which we compactly write as $X\in\R^{d\times n}$ and $Y\in\R^n$.
We define two quantities which are central to the following estimators, namely the sample covariance matrix $\empcov\coloneqq \frac{1}{n}XX^T$ and the result of regressing $Y$ on $X$, $\betaMinNorm\coloneqq (\frac{1}{n}XX^T)^+\frac{1}{n}XY$.

\subsection{Basic tools from random matrix theory}\label{sec:rmt_basics}
We briefly recap some standard tools and results from random matrix theory to analyze the following estimators for confounding strength in the high-dimensional regime. The analysis is based on the following two objects, which capture the spectrum of a matrix:
\begin{definition}[Empirical spectral distribution and Stieltjes transform]\label{def:esd}
Let $\Sigma\in\R^{d\times d}$ be a symmetric matrix with eigenvalues $\lambda_1,\dotsc,\lambda_d$. The  \emph{empirical spectral distribution} of $\Sigma$ is defined as the normalized counting measure of its eigenvalues $\mu_\Sigma\coloneqq \frac{1}{d}\sum_{i=1}^d\delta_{\lambda_i}$. The corresponding \emph{Stieltjes transform} of this measure is defined as the function $m_{\Sigma}(z)\coloneqq\sum_{i=1}^d\frac{1}{\lambda_i-z}$ for $z\in\C\setminus\{\lambda_1,\dotsc,\lambda_d\}$.
\end{definition}

We need to characterize the spectral behavior of the empirical covariance matrix $\empcov=\frac{1}{n}XX^T\in\R^{d\times d}$ and the closely related empirical kernel matrix $\empker=\frac{1}{n}X^TX\in\R^{n\times n}$. The following standard result relates their limiting spectra to the spectrum of the population covariance in terms of Stieltjes transforms:
\begin{theorem}[Asymptotics of the sample covariance matrix, \citep{Sil:1995}]\label{thm:sample_covariance_matrix}
% Let $X=\Sigma^{1/2}Z\in\R^{d\times n}$ with symmetric nonnegative definite $\Sigma\in\R^{d\times d}$ of bounded operator norm\luca{Do we need this condition if we assume convergence of $\esdCov$?}, $Z\in\R^{d\times n}$ having independent zero mean and unit variance entries \TODO{satisfying some light tail conditions}. 
% Denote the empirical covariance matrix as $\empcov=XX^T/n$ and the empirical kernel matrix as $\empker=X^TX/n$.
% If the empirical spectral distribution of $\Sigma$ converges, that is, $\esdCov\to\esdCovLim$ with Stieltjes transform $\stieltjesCovLim$ as $d\to\infty$, then $\esdEmpcov\aslim\esdEmpcovLim$, $\esdEmpker\aslim\esdEmpkerLim$ as $d,n\to\infty$ with $d/n\to\gamma\in(0,\infty)$, where $\esdEmpcovLim,\esdEmpkerLim$ are the unique measures having Stieltjes transforms $\stieltjesEmpcovLim(z)$ and $\stieltjesEmpkerLim(z)$, respectively, with \TODO{Restriction to $\C\setminus\R_+$}

Let $n,d\to\infty$ such that $d/n\to\gamma\in(0,\infty)$ and assume that the empirical spectral distribution of the covariance $\Sigma$ converges, that is, $\esdCov\aslim\esdCovLim$ with bounded support and corresponding Stieltjes transform $\stieltjesCovLim$. Then it holds that $\esdEmpcov\aslim\esdEmpcovLim$ and $\esdEmpker\aslim\esdEmpkerLim$ as $d\to\infty$, where $\esdEmpcovLim,\esdEmpkerLim$ are the unique measures having Stieltjes transforms $\stieltjesEmpcovLim(z)$ and $\stieltjesEmpkerLim(z)$, respectively. For $z\in\C\setminus\R_+$, they satisfy
\begin{align}
    \stieltjesEmpcovLim(z)&=\frac{1}{\gamma}\stieltjesEmpkerLim(z)+\frac{1-\gamma}{\gamma z}\,,\label{eq:stieltjes_cov_ker}\\
    \stieltjesCovLim\left(-\frac{1}{\stieltjesEmpkerLim(z)}\right)&=-z\stieltjesEmpcovLim(z)\stieltjesEmpkerLim(z)\,.\label{eq:limiting_stieltjes}
\end{align}

% Under Assumption~\ref{assumptions} where $\esdCov\aslim\esdCovLim$, it holds that $\esdEmpcov\aslim\esdEmpcovLim$ and $\esdEmpker\aslim\esdEmpkerLim$ as $d\to\infty$, where $\esdEmpcovLim,\esdEmpkerLim$ are the unique measures having Stieltjes transforms $\stieltjesEmpcovLim(z)$ and $\stieltjesEmpkerLim(z)$, respectively. For $z\in\C\setminus\R_+$, they satisfy
% \begin{align}
%     \stieltjesEmpcovLim(z)&=\frac{1}{\gamma}\stieltjesEmpkerLim(z)+\frac{1-\gamma}{\gamma z}\,,\label{eq:stieltjes_cov_ker}\\
%     \stieltjesCovLim\left(-\frac{1}{\stieltjesEmpkerLim(z)}\right)&=-z\stieltjesEmpcovLim(z)\stieltjesEmpkerLim(z)\,.\label{eq:limiting_stieltjes}
% \end{align}
\end{theorem}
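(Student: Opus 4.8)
The plan is to pass everything to Stieltjes transforms, relying on the standard equivalence between almost-sure weak convergence of probability measures on $\R$ and almost-sure pointwise convergence of their Stieltjes transforms on $\C\setminus\R$, together with the Stieltjes inversion formula (which also makes $\esdEmpcovLim,\esdEmpkerLim$ the \emph{unique} measures with the stated transforms). Because the columns of $X$ are i.i.d.\ $\Gauss{0}{\Sigma}$ by construction of~\eqref{eq:causal_model}, we may take $X=\Sigma^{1/2}G$ with $G\in\R^{d\times n}$ having i.i.d.\ standard Gaussian entries. Equation~\eqref{eq:stieltjes_cov_ker} is then purely algebraic: $\empcov=\tfrac1n XX^T$ and $\empker=\tfrac1n X^TX$ have the same nonzero eigenvalues with the same multiplicities, so $\mathrm{tr}(\empcov-zI_d)^{-1}=\mathrm{tr}(\empker-zI_n)^{-1}+\tfrac{n-d}{z}$ for every $n,d$, i.e.\ $\tfrac1d\mathrm{tr}(\empcov-zI_d)^{-1}=\tfrac nd\cdot\tfrac1n\mathrm{tr}(\empker-zI_n)^{-1}+\tfrac{n-d}{dz}$, and sending $d/n\to\gamma$ yields~\eqref{eq:stieltjes_cov_ker}. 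It therefore suffices to establish almost-sure convergence of $\tfrac1n\mathrm{tr}(\empker-zI_n)^{-1}$ to a deterministic $\stieltjesEmpkerLim(z)$ and to identify the equation it solves; convergence for $\empcov$ and~\eqref{eq:limiting_stieltjes} then follow.

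For the equation satisfied by $\stieltjesEmpkerLim$ I would use the leave-one-out / Sherman--Morrison technique. Diagonalizing $\Sigma=\sum_{k=1}^d\sigma_k u_ku_k^T$, write $\empker=\tfrac1n G^T\Sigma G=\tfrac1n\sum_{k=1}^d\sigma_k v_kv_k^T$ with $v_k=G^Tu_k$, so the $v_k$ are i.i.d.\ $\Gauss{0}{I_n}$. Let $Q(z)=(\empker-zI_n)^{-1}$ and let $Q_{(k)}(z)$ be the resolvent of $\empker$ with the $k$-th rank-one term removed. Sherman--Morrison gives $v_k^TQ(z)=b_k(z)\,v_k^TQ_{(k)}(z)$ with $b_k(z)=\bigl(1+\tfrac1n\sigma_k v_k^TQ_{(k)}(z)v_k\bigr)^{-1}$, and inserting this into $zQ=\empker Q-I_n$ and taking traces produces the exact identity
\[
  z\,\mathrm{tr}\,Q(z)=d-n-\sum_{k=1}^d b_k(z)\,.
\]
The crux is to show $b_k(z)=\bigl(1+\sigma_k\,\stieltjesEmpkerLim(z)\bigr)^{-1}+o(1)$ uniformly in $k$: the Hanson--Wright inequality gives $\tfrac1n v_k^TQ_{(k)}(z)v_k=\tfrac1n\mathrm{tr}\,Q_{(k)}(z)+o(1)$ almost surely, and the rank-one perturbation bound gives $|\mathrm{tr}\,Q(z)-\mathrm{tr}\,Q_{(k)}(z)|=O(1)$ deterministically, so all these quantities concentrate at $\stieltjesEmpkerLim(z)$. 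Dividing the identity by $n$ and using $\tfrac1d\sum_k\delta_{\sigma_k}\to\esdCovLim$ weakly (with bounded support) gives, in the limit,
\[
  z\,\stieltjesEmpkerLim(z)=\gamma-1-\gamma\int\frac{d\esdCovLim(t)}{1+t\,\stieltjesEmpkerLim(z)}\,,
\]
which is equivalent to~\eqref{eq:limiting_stieltjes} via the identity $\int(1+t\,\stieltjesEmpkerLim(z))^{-1}\,d\esdCovLim(t)=\stieltjesEmpkerLim(z)^{-1}\stieltjesCovLim\bigl(-1/\stieltjesEmpkerLim(z)\bigr)$ and~\eqref{eq:stieltjes_cov_ker}.

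Two loose ends remain. First, existence and uniqueness of the limit: the deterministic equation above has a unique solution that maps the upper half-plane to itself, hence is the Stieltjes transform of a probability measure; I would establish this in the standard way, showing the associated fixed-point map is a strict contraction for $z$ of large imaginary part and extending by analytic continuation, which also closes the mild circularity in the previous paragraph. Second, upgrading from convergence in expectation (or probability) to almost-sure convergence: decompose $\tfrac1n\mathrm{tr}\,Q(z)-\E\,\tfrac1n\mathrm{tr}\,Q(z)$ into martingale differences for the filtration generated by $v_1,\dots,v_d$, note each increment is $O(1/n)$ by the rank-one bound, and apply Azuma--Hoeffding with Borel--Cantelli. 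I expect the concentration step to be the main obstacle: one needs the quadratic-form fluctuations at a rate summable in $n$ and, crucially, a deterministic lower bound on $|1+\tfrac1n\sigma_k v_k^TQ_{(k)}(z)v_k|$ for $z\in\C\setminus\R_+$ uniform in $k$, so that inverting and summing $d$ of these terms does not amplify the errors; proving uniqueness of the self-consistent equation's solution is the other delicate (but by now routine) ingredient.
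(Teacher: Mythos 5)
This theorem is not proved in the paper at all: it is quoted as a known result from Silverstein (1995), so there is no in-paper argument to compare against. Your outline is a faithful reconstruction of the standard proof of that cited result (the algebraic spectrum-matching identity for Eq.~\eqref{eq:stieltjes_cov_ker}, and for Eq.~\eqref{eq:limiting_stieltjes} the leave-one-out/Sherman--Morrison resolvent identity $z\,\mathrm{tr}\,Q(z)=d-n-\sum_k b_k(z)$, quadratic-form concentration, a martingale argument for almost-sure convergence, and uniqueness of the self-consistent equation), and the limiting equation you derive is indeed algebraically equivalent to Eq.~\eqref{eq:limiting_stieltjes} via Eq.~\eqref{eq:stieltjes_cov_ker}; the loose ends you flag (uniform control of $b_k$, uniqueness/analytic continuation) are exactly the technical steps carried out in the original reference, so the proposal is correct in approach.
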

% Few comments
A corresponding version of Eq.~\eqref{eq:stieltjes_cov_ker} holds in finite dimensions and simply reflects the fact that $\empcov$ and $\empker$ share the same eigenvalues up to the eigenvalue 0 with multiplicity $\abs{n-d}$.
% to $(n-d)$ many 0s.
Eq.~\eqref{eq:limiting_stieltjes} is the main result that connects the limiting Stieltjes transforms of the empirical matrices $\empcov$ and $\empker$ to the limiting Stieltjes transform of the population covariance $\Sigma$.
The solution $\stieltjesEmpcovLim$ to this equation remains implicitly defined in all but the simplest case $\Sigma=I_d$, where $\stieltjesEmpcovLim$ is the Stieltjes transform of a Mar\u{c}enko-Pastur distribution.

\section{Asymptotic behavior of the population and plug-in estimators for confounding strength}\label{sec:population_and_plugin}
In this section, we describe the general approach for estimating confounding strength based on the assumption of independent causal mechanisms \citep{Jan:2018}.
We show that the estimator is consistent based on population quantities in Section~\ref{sec:population}, but is generally biased for $n\not\gg d$ based on sample (plug-in) quantities in Section~\ref{sec:plugin}.

%%%% ICM
% Main ingredient: ICM
The main ingredient to tackle the non-identifiability of the causal model is the assumption of independent causal mechanisms, a common assumption in causal learning \citep{Jan:2010AlgMarkov}. This abstract principle states that the physical mechanisms of a causal model that transfers causes to effect share no information. A possible translation for the causal model~\eqref{eq:causal_model} is the assumption that the mechanisms $\alpha$ and $\beta$ are drawn from independent rotationally invariant distributions. Specifically, we assume that $\alpha$ and $\beta$ are independent with  $\alpha\sim\mathcal{N}(0,\sigalpha I_l)$ and $\beta\sim\mathcal{N}(0,\sigbeta I_d)$ for unknown hyperparameters $\sigalpha,\sigbeta \geq 0$.
% Intuition
%% 
Intuitively, this assumption facilitates estimation because it implies a certain alignment between the covariance matrix $\Sigma=MM^T$ and the regression vector $\betaStat=\beta + M^{+T}\alpha$: for large confounding $\alpha$, the error term $M^{+T}\alpha$ is aligned with small singular value directions of $M$. Correspondingly, $\betaStat$ is aligned with small eigendirections of $\Sigma$.

%%% Assumptions
% Throughout this paper, we make the following list of assumptions:
\begin{assumption}\label{assumptions}~\vspace{-10pt}
We make the following assumptions about the (sequence) of causal models:
\begin{enumerate}[label=(A\arabic*),leftmargin=\widthof{[(A4)]}+\labelsep]
    \item\label{ass:ICM} The parameters $\alpha,\beta$ of model~\eqref{eq:causal_model} are independently sampled with $\alpha\sim\mathcal{N}(0,\sigalpha I_l)$ and $\beta\sim\mathcal{N}(0,\sigbeta I_d)$ for hyperparameters $\sigalpha,\sigbeta \geq 0$.
    % \item\label{ass:model} The data $X,Y\in\R^{d\times n}$ consist of $n$ i.i.d. samples from model~\eqref{eq:causal_model} with distributions $x\sim\mathcal{N}(0,\Sigma)$ and $y|x\sim\mathcal{N}(x^T\betaStat, \sigmastat)$.
    \item\label{ass:dimensions} The number of samples $n$, data dimension $d$, and latent confounder dimension $l$ are in the proportional asymptotic regime, that is, $n,d,l\to\infty$ such that $d/n\to \gamma\in (0,1)$ and $l/d\to\tilde{\gamma}\geq 1$.
    \item\label{ass:popcov} The empirical spectral distribution $\esdCov$ of the population covariance $\Sigma$ converges almost surely as $d\to\infty$ to a distribution $\nu$ with bounded support, that is, $\supp(\esdCovLim)\subseteq[h_1,h_2]$ with $0<h_1\leq h_2<\infty$.
\end{enumerate}
\end{assumption}
Assumption~\ref{ass:ICM} is the assumption of independent causal mechanisms.
Assumption~\ref{ass:dimensions} captures that this approach to confounding strength estimation requires high dimensions so that concentration effects can mitigate the non-identifiability issue. We exclude the case $\gamma\geq 1$, because there estimation of the term $\frac{1}{d}\Tr(\Sigma^{-1})$ (which later turns out to be relevant) is hard, see \citet[Remark 2.11]{Cou:2022} for a discussion.
The restriction on the latent dimensions $\tilde{\gamma}\geq 1$ ensures that $l\geq d$ so that the population covariance $\Sigma=MM^T$ with $M\in\R^{d\times l}$ can be full rank, which is necessary for Assumption~\ref{ass:popcov}. 
%We require the limiting support to be bounded away from zero because \TODO{XX} and bounded away from infinity for Theorem~\ref{thm:sample_covariance_matrix}.

\begin{remark}
The assumption of independent causal mechanisms alone does not resolve the non-identifiability issue and it also does not enable estimation of the multivariate vectors $\alpha$ or $\beta$. However, \emph{scalar} functions of these parameters can concentrate in high dimensions. In particular, this happens for confounding strength.
\end{remark}
The following key lemma states that random quadratic forms can concentrate around their trace.
\begin{lemma}[Quadratic-form-close-to-the-trace, {\citep[Lemma B.26]{Bai:2010}}]\label{lem:QCT}
Let $x=(x^1,\dotsc,x^d)\in\R^d$ have independent entries $x^i$ of zero mean, unit variance and $\E[\abs{x^i}^K]\leq\nu_K$ for some $K\geq 1$. Then for $A\in\R^{d\times d}$ and $k\geq 1$,
\begin{align*}
    \E\left[\abs{x^TAx-\Tr A}^k\right]\leq C_k\left[\left(\nu_4\Tr\left(AA^T\right)\right)^{k/2}+\nu_{2k}\Tr\left(AA^T\right)^{k/2}\right]\,,
\end{align*}
for some constant $C_k>0$ independent of $d$. In particular, if the operator norm of $A$ satisfies $\norm{A}\leq 1$ and the entries of $x$ have bounded eigth-order moment,
\begin{align*}
    \E\left[\left(x^TAx-\Tr A\right)^4\right]\leq Cd^2\,,
\end{align*}
for some $C>0$ independent of $d$, and consequently
\begin{align*}
    \frac{1}{d}x^TAx-\frac{1}{d}\Tr A \xrightarrow[d\to\infty]{a.s.} 0\,.
\end{align*}
\end{lemma}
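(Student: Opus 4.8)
The first displayed inequality is exactly \citep[Lemma~B.26]{Bai:2010}, so the plan is to invoke it verbatim; a self-contained argument would write $x^TAx-\Tr A=\sum_{i\neq j}A_{ij}x^ix^j+\sum_i A_{ii}((x^i)^2-1)$ and bound the $k$-th moment of this sum by a combinatorial pairing (Rosenthal/Burkholder-type) estimate, but since it is quoted from a textbook I would take it as given and concentrate on the two ``in particular'' claims.

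For the fourth-moment bound I would specialize to $k=4$, so that $\nu_{2k}=\nu_8$. The key observation is that $\norm{A}\le 1$ forces every singular value $\sigma_i(A)\le 1$, hence $\Tr(AA^T)=\sum_{i=1}^d\sigma_i(A)^2\le d$. Since the entries of $x$ have a uniformly bounded eighth moment, $\nu_4$ and $\nu_8$ are bounded independently of $d$, and substituting into the first inequality gives $\E[(x^TAx-\Tr A)^4]\le C_4(\nu_4^2+\nu_8)\,\Tr(AA^T)^2\le Cd^2$ with $C$ independent of $d$. (If the last trace term is instead read as $\Tr((AA^T)^2)=\sum_i\sigma_i(A)^4\le\sum_i\sigma_i(A)^2\le d$, the bound only improves, so the conclusion is unchanged.)

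For the almost-sure convergence I would apply Markov's inequality to the fourth power: for fixed $\epsilon>0$,
\[
\proba\!\left(\Big|\frac1d x^TAx-\frac1d\Tr A\Big|>\epsilon\right)\le\frac{\E[(x^TAx-\Tr A)^4]}{\epsilon^4 d^4}\le\frac{C}{\epsilon^4 d^2}\,,
\]
which is summable in $d$. The first Borel--Cantelli lemma then shows that almost surely the event $\{|\tfrac1d x^TAx-\tfrac1d\Tr A|>\epsilon\}$ occurs only finitely often; intersecting over a countable sequence $\epsilon\downarrow 0$ yields $\frac1d x^TAx-\frac1d\Tr A\to 0$ almost surely.

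I do not expect a genuine obstacle once Lemma~B.26 is granted. The one point that needs care is that in the intended applications $A=A_d$ and $x=x_d$ both change with $d$, so the constant $C$ in the fourth-moment bound must be uniform along the sequence --- which is precisely what the hypotheses $\norm{A}\le 1$ and uniformly bounded eighth moments provide. If one instead wanted the first inequality from scratch, that combinatorial moment estimate would be the hard part.
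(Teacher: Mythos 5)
Your proposal is correct and matches the paper's treatment: the paper simply quotes the moment inequality from Bai and Silverstein (Lemma B.26) and leaves the two ``in particular'' claims as immediate consequences, exactly as you do. Your filled-in details --- $\Tr(AA^T)\le d$ from $\norm{A}\le 1$ (under either reading of the trace term), uniform bounds on $\nu_4,\nu_8$ from the bounded eighth moment, and Markov plus Borel--Cantelli with a countable $\epsilon\downarrow 0$ to get the almost sure convergence along the sequence $A_d,x_d$ --- are precisely the routine steps the paper omits.
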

% This lemma states that random quadratic forms can concentrate around the deterministic trace.
% Confounding strength concentrates
% By rewriting the confounding strength from Eq.~\eqref{eq:def_conf} in terms of the hyperparameters $\alpha,\beta, M$, we see that it consists only of quadratic forms. These can be controlled by Lemma~\ref{lem:QCT}, which yield the following concentration of confounding strength.
Using this lemma, we directly obtain concentration of the confounding strength.
\begin{corollary}[Confounding strength concentrates]\label{cor:conf_concentration}
Under Assumption~\ref{assumptions},
\begin{align}\label{eq:conf_concentration}
    \conf - \frac{\tauPop\cdot\thetaTrue}{1+\tauPop\cdot\thetaTrue} \aslim 0\,,
\end{align}
where $\tauPop\coloneqq \frac{1}{d}\Tr(\Sigma^{-1})$ and $\thetaTrue\coloneqq\sigalpha/\sigbeta$. 
\end{corollary}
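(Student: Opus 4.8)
The plan is to write the confounding strength $\conf$ as a ratio of two random quadratic forms in independent standard Gaussian vectors and then apply the quadratic-form concentration of Lemma~\ref{lem:QCT} to the numerator and the denominator separately. First I would use $\betaStat-\beta=M^{+T}\alpha$ from Lemma~\ref{lem:observational_dist} together with $M^{+}=M^T\Sigma^{-1}$ to rewrite $\betaStat-\beta=\Sigma^{-1}M\alpha$. The key observation is that pushing the Gaussian $\alpha\sim\Gauss{0}{\sigalpha I_l}$ through $M$ gives $M\alpha\sim\Gauss{0}{\sigalpha\,MM^T}=\Gauss{0}{\sigalpha\Sigma}$, a $d$-dimensional Gaussian; writing $M\alpha=\sqrt{\sigalpha}\,\Sigma^{1/2}g_\alpha$ and $\beta=\sqrt{\sigbeta}\,g_\beta$ with $g_\alpha,g_\beta\sim\Gauss{0}{I_d}$ independent reduces everything to dimension $d$ and yields $\norm{\betaStat-\beta}^2=\sigalpha\,g_\alpha^T\Sigma^{-1}g_\alpha$ and $\norm{\beta}^2=\sigbeta\,g_\beta^Tg_\beta$. (Working directly with $\alpha$ in $\R^l$ is also possible, but then one has to carry the extra factor $l/d\to\tilde{\gamma}$ through the computation.)

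Next I would apply Lemma~\ref{lem:QCT} conditionally on $\Sigma$. Assumption~\ref{ass:popcov} gives that the eigenvalues of $\Sigma$ lie in $[h_1,h_2]$, so $h_1\Sigma^{-1}$ has operator norm at most $1$ and the lemma applies with $A=h_1\Sigma^{-1}$, giving $\frac{1}{d}g_\alpha^T\Sigma^{-1}g_\alpha-\frac{1}{d}\Tr(\Sigma^{-1})\aslim 0$; the choice $A=I_d$ gives $\frac{1}{d}g_\beta^Tg_\beta-1\aslim 0$. Since these hold for almost every realization of $\Sigma$, they hold unconditionally, and with $\tauPop=\frac{1}{d}\Tr(\Sigma^{-1})$ I obtain $\frac{1}{d}\norm{\betaStat-\beta}^2-\sigalpha\tauPop\aslim 0$ and $\frac{1}{d}\norm{\beta}^2-\sigbeta\aslim 0$.

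Finally I would assemble the pieces via $\conf=(\frac{1}{d}\norm{\betaStat-\beta}^2)/(\frac{1}{d}\norm{\beta}^2+\frac{1}{d}\norm{\betaStat-\beta}^2)$. Because $\sigbeta>0$ (implicit in $\thetaTrue=\sigalpha/\sigbeta$ being defined) and $\tauPop\in[1/h_2,1/h_1]$ is bounded, the denominator is eventually bounded away from $0$, so the map $(a,b)\mapsto a/(a+b)$ is Lipschitz on the relevant region; subtracting $\sigalpha\tauPop/(\sigbeta+\sigalpha\tauPop)$ and estimating the numerator and denominator of the difference then gives $\conf-\frac{\sigalpha\tauPop}{\sigbeta+\sigalpha\tauPop}\aslim 0$, which is exactly the claim after dividing through by $\sigbeta$ and setting $\thetaTrue=\sigalpha/\sigbeta$. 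I expect the only genuine subtleties to be (i) ensuring a \emph{uniform} bound on $\norm{\Sigma^{-1}}$ so that Lemma~\ref{lem:QCT} can be invoked, which is what the bounded-support part of Assumption~\ref{ass:popcov} provides, and (ii) the non-degeneracy of the denominator; everything else is routine algebra.
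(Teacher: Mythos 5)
Your proof is correct and takes essentially the same route as the paper: both write $\norm{\betaStat-\beta}^2$ and $\norm{\beta}^2$ as Gaussian quadratic forms and invoke Lemma~\ref{lem:QCT}, the paper applying it directly to $\frac{1}{d}\alpha^T M^+M^{+T}\alpha$ (whose trace equals $\Tr(\Sigma^{-1})$) while you reparametrize via $M\alpha\sim\Gauss{0}{\sigalpha\Sigma}$ to obtain $\sigalpha\, g_\alpha^T\Sigma^{-1}g_\alpha$. Your added care about the operator-norm condition and the continuity of the map $(a,b)\mapsto a/(a+b)$ merely makes explicit what the paper's one-line argument leaves implicit.
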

\begin{proof}
By rewriting the confounding strength from Eq.~\eqref{eq:def_conf} in terms of the hyperparameters $\alpha,\beta, M$, we see that it consists only of quadratic forms. These can be controlled by Lemma~\ref{lem:QCT}, which yields
\begin{align*}
    \conf 
    =\frac{\frac{1}{d}\alpha^TM^+M^{+T}\alpha}{\frac{1}{d}\beta^TI_d\beta + \frac{1}{d}\alpha^TM^+M^{+T}\alpha}
    \asapprox
    \frac{\frac{1}{d}\Tr(M^+M^{+T})\sigalpha}{\frac{1}{d}\Tr(I_d)\sigbeta+\frac{1}{d}\Tr(M^+M^{+T})\sigalpha}
    \,=\frac{\tauPop\cdot\thetaTrue}{1+\tauPop\cdot\thetaTrue}\,.
\end{align*}
\end{proof}

% \begin{equation}\label{eq:conf_concentration}
% \begin{aligned}
%     \conf 
%     =\frac{\frac{1}{d}\alpha^TM^+M^{+T}\alpha}{\frac{1}{d}\beta^TI_d\beta + \frac{1}{d}\alpha^TM^+M^{+T}\alpha}
%     \asapprox
%     \frac{\frac{1}{d}\Tr(M^+M^{+T})\sigalpha}{\frac{1}{d}\Tr(I_d)\sigbeta+\frac{1}{d}\Tr(M^+M^{+T})\sigalpha}
%     % =&\frac{\frac{1}{d}\Tr(\Sigma^{-1})\sigalpha}{\sigbeta+\frac{1}{d}\Tr(\Sigma^{-1})\sigalpha}\\
%     \,=\frac{\tauPop\cdot\thetaTrue}{1+\tauPop\cdot\thetaTrue}\,,
%     % &\,=1-\frac{1}{1+\frac{1}{d}\Tr(\Sigma^{-1})\thetaTrue}\,,
% \end{aligned}
% \end{equation}
% \begin{align}\label{eq:conf_concentration}
%     \conf 
%     =\frac{\frac{1}{d}\alpha^TM^+M^{+T}\alpha}{\frac{1}{d}\beta^TI_d\beta + \frac{1}{d}\alpha^TM^+M^{+T}\alpha}
%     \overset{a.s.}{\approx}
%     \frac{\frac{1}{d}\Tr(M^+M^{+T})\sigalpha}{\frac{1}{d}\Tr(I_d)\sigbeta+\frac{1}{d}\Tr(M^+M^{+T})\sigalpha}
%     =\frac{\frac{1}{d}\Tr(\Sigma^{-1})\thetaTrue}{1+\frac{1}{d}\Tr(\Sigma^{-1})\thetaTrue}\,,
% \end{align}
% What's left to do
It only remains to estimate the trace term $\tauPop$ and the ratio $\thetaTrue$.
% Notation: plugin, pop, rmt
In the following, we distinguish between three different kinds of estimators for various quantities: estimators based on the population quantities $\Sigma,\betaStat$, based on the plug-in quantities $\empcov, \betaMinNorm$, and consistent estimators derived by random matrix theory. For example, we write $\tauPop$, $\tauPlugin$, or $\tauRMT$. 

%%%%%%%%%%
\subsection{The population estimator for confounding strength is consistent}\label{sec:population}
First, we consider estimation based on the population quantities $\Sigma$ and $\betaStat$, which basically assumes that there are no finite-sample issues. In this case, $\tauPop=\frac{1}{d}\Tr(\Sigma^{-1})$ is known and does not need to be estimated.
To estimate $\thetaTrue=\sigalpha/\sigbeta$ observe that Assumption~\ref{assumptions}\ref{ass:ICM} on $\alpha$ and $\beta$ implies $\betaStat=\beta+M^{+T}\alpha\sim\mathcal{N}(0,\sigbeta + \sigalpha\Sigma^{-1})$. 
% Normalized distribution
With respect to the uniform distribution on the sphere $S^{d-1}$, the distribution of the normalized vector $\betaStat/\norm{\betaStat}$ has the log density $\log p_{\thetaTrue}(v)= -.5(\log\det (\Sigma + \thetaTrue)+d\log\scalprod{v}{\Sigma (\Sigma +\thetaTrue)^{-1}v} - \log \det \Sigma)$, where $v\in S^{d-1}$. Correspondingly, $\thetaTrue$ can then be estimated via maximum likelihood estimation as\footnote{Maximum likelihood estimation on the density of $\betaStat$ directly leads to the same optimality condition for $\thetaPop$.}
\begin{align}\label{eq:logprob_pop}
    \thetaPop= \argmin_{\theta\geq 0} \logprobPop(\theta),\quad \text{where}\quad
    \logprobPop(\theta)=\frac{1}{d}\log\det (\Sigma + \theta)+\log\scalprod{\frac{\betaStat}{\norm{\betaStat}}}{\Sigma (\Sigma + \theta)^{-1}\frac{\betaStat}{\norm{\betaStat}}}\,.
\end{align}

% Concentration in the limit
In summary, we consider the following population estimator for confounding strength.
\begin{definition}[Population estimator for confounding strength]\label{def:population_estimator}
Given $\Sigma$ and $\betaStat$, the \emph{population estimator} for confounding strength $\confPop$ is defined as
\begin{align}
    \confPop=\frac{\tauPop\cdot\thetaPop}{1+\tauPop\cdot\thetaPop}\,,\label{eq:def_population_estimator}
\end{align}
where $\tauPop= \frac{1}{d}\Tr(\Sigma^{-1})$ and $\thetaPop$ is given by Eq.~\eqref{eq:logprob_pop}.
\end{definition}

We now analyze this estimator by analyzing the asymptotic behavior of $\thetaPop$ from Eq.~\eqref{eq:logprob_pop}. Since $\thetaPop$ is implicitly defined as the minimizer of the function $\logprobPop$, we first derive its asymptotic behavior as an intermediate step. Specifically, we consider its derivative, which is given by
\begin{align}\label{eq:logprob_pop_derivative}
    \partial_\theta\logprobPop(\theta)=\stieltjesCov(-\theta) - \frac{\scalprod{\frac{\betaStat}{\norm{\betaStat}}}{\Sigma(\Sigma+\theta)^{-2}\frac{\betaStat}{\norm{\betaStat}}}}{\scalprod{\frac{\betaStat}{\norm{\betaStat}}}{\Sigma(\Sigma+\theta)^{-1}\frac{\betaStat}{\norm{\betaStat}}}}\,.
\end{align}
% After characterizing the limit of $\partial_\theta\logprobPop$, we can then argue about the asymptotic behavior of $\thetaPop$. 
This idea is realized in the next theorem, which shows that the confounding strength estimator based on population quantities is consistent as $n,d\to\infty, d/n\to\gamma\in(0,1)$.

\begin{restatable}[Population estimator is consistent]{theorem}{consistencyPop}\label{thm:consistency_pop}
Under Assumption~\ref{assumptions} with $\thetaTrue > 0$,

\begin{enumerate}
 \item For every $\theta \geq 0$, the derivative of the function from Eqs.~\eqref{eq:logprob_pop} satisfies
    \begin{align}
    \partial_\theta\logprobPop_d(\theta)&\aslim (\theta-\thetaTrue)\Var_{\lambda\sim\esdCovLim}\left[\frac{1}{\lambda+\theta}\right]\E_{\lambda\sim\esdCovLim}\left[\frac{\lambda+\thetaTrue}{\lambda+\theta}\right]^{-1}\,,\label{eq:lim_derivative_pop}
\end{align}
\item For some $C>\thetaTrue$ and every $d\in\N$, let $\thetaPop_d$ be a root of $\partial_\theta\logprobPop_d$ in $[0, C]$ if it exists or $0$ otherwise. Additionally, assume that $\esdCovLim$ is not degenerate. Then the sequence $\{\thetaPop_d\}$ converges to $\thetaTrue$ almost surely.
\end{enumerate}
\end{restatable}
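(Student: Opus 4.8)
The plan is to prove the two parts in sequence, using Part~1 as an ingredient for Part~2.

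\textbf{Part~1.} First I would rewrite $\partial_\theta\logprobPop_d$ in terms of quadratic forms in an isotropic Gaussian vector. By Lemma~\ref{lem:observational_dist} and Assumption~\ref{ass:ICM}, $\betaStat=\beta+M^{+T}\alpha$ is centered Gaussian with covariance $\sigbeta I_d+\sigalpha\Sigma^{-1}$ and independent of $\Sigma=MM^T$, so $\betaStat=(\sigbeta I_d+\sigalpha\Sigma^{-1})^{1/2}g$ with $g\sim\Gauss{0}{I_d}$. The factor $1/\norm{\betaStat}$ cancels in the ratio of Eq.~\eqref{eq:logprob_pop_derivative}, and since all matrices involved are functions of $\Sigma$ and hence commute, $\Sigma(\sigbeta I_d+\sigalpha\Sigma^{-1})=\sigbeta(\Sigma+\thetaTrue I_d)$, so that ratio equals $g^T(\Sigma+\thetaTrue)(\Sigma+\theta)^{-2}g\,/\,g^T(\Sigma+\thetaTrue)(\Sigma+\theta)^{-1}g$. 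Conditioning on $M$ and applying Lemma~\ref{lem:QCT} to $g$ (rescaling the matrices to unit operator norm, which is legitimate since by Assumption~\ref{ass:popcov} the eigenvalues of $\Sigma$ lie in $[h_1,h_2]\subset(0,\infty)$ for large $d$), each $\frac1d g^T(\Sigma+\thetaTrue)(\Sigma+\theta)^{-k}g\asapprox\frac1d\Tr((\Sigma+\thetaTrue)(\Sigma+\theta)^{-k})$, which by Assumption~\ref{ass:popcov} converges to $\E_{\lambda\sim\esdCovLim}[(\lambda+\thetaTrue)/(\lambda+\theta)^k]$ for $k=1,2$ (the integrands being bounded and continuous on the support); likewise $\stieltjesCov(-\theta)=\frac1d\Tr((\Sigma+\theta)^{-1})\aslim\E_{\lambda\sim\esdCovLim}[1/(\lambda+\theta)]$. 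It then remains to verify an algebraic identity: abbreviating $a=1/(\lambda+\theta)$ and $t=\theta-\thetaTrue$, one has $(\lambda+\thetaTrue)/(\lambda+\theta)=1-ta$ and $(\lambda+\thetaTrue)/(\lambda+\theta)^2=a-ta^2$, so the limit collapses to $\E[a]-(\E[a]-t\E[a^2])/(1-t\E[a])=t\,(\E[a^2]-\E[a]^2)/(1-t\E[a])$, which is exactly the right-hand side of Eq.~\eqref{eq:lim_derivative_pop}.

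\textbf{Part~2.} Denote the Part~1 limit by $g(\theta)=(\theta-\thetaTrue)\,h(\theta)$, where $h(\theta)=\Var_{\lambda\sim\esdCovLim}[1/(\lambda+\theta)]\,\E_{\lambda\sim\esdCovLim}[(\lambda+\thetaTrue)/(\lambda+\theta)]^{-1}$. Since $\lambda\mapsto1/(\lambda+\theta)$ is injective and $\esdCovLim$ is non-degenerate, the variance is strictly positive for every $\theta\geq0$; the denominator is bounded away from $0$ on $[0,C]$; hence $h$ is continuous and strictly positive on $[0,C]$, so $g$ has the sign of $\theta-\thetaTrue$ there, with $g\le-\delta c_0<0$ on $[0,\thetaTrue-\delta]$ and $g\ge\delta c_0>0$ on $[\thetaTrue+\delta,C]$ for every small $\delta>0$ and $c_0=\min_{[0,C]}h>0$, and $\thetaTrue$ is the unique zero of $g$ in $[0,C]$. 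To transfer this to the random functions $\partial_\theta\logprobPop_d$ I would upgrade the pointwise almost-sure convergence of Part~1 to uniform convergence on $[0,C]$: differentiating Eq.~\eqref{eq:logprob_pop_derivative} once more and using that $\lambda_i(\Sigma)\in[h_1,h_2]$, that $\norm{\betaStat/\norm{\betaStat}}=1$, and that $\scalprod{\betaStat/\norm{\betaStat}}{\Sigma(\Sigma+\theta)^{-1}\betaStat/\norm{\betaStat}}\ge h_1/(h_2+C)>0$ for $\theta\in[0,C]$ (a Cauchy--Schwarz estimate shows the derivative of the ratio term is non-positive and bounded in absolute value), one obtains a deterministic constant $K$ with $\sup_{\theta\in[0,C]}|\partial_\theta^2\logprobPop_d(\theta)|\le K$ for all large $d$. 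Thus $\{\partial_\theta\logprobPop_d\}$ is uniformly Lipschitz on $[0,C]$, and combined with almost-sure convergence at every rational point of $[0,C]$ (a countable intersection of probability-one events), a standard equicontinuity argument yields $\sup_{\theta\in[0,C]}|\partial_\theta\logprobPop_d(\theta)-g(\theta)|\aslim0$.

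\textbf{Conclusion and main obstacle.} Fixing $\delta\in(0,\min(\thetaTrue,C-\thetaTrue))$, uniform convergence and the sign structure of $g$ give, almost surely for all large $d$, $\partial_\theta\logprobPop_d<0$ on $[0,\thetaTrue-\delta]$ and $\partial_\theta\logprobPop_d>0$ on $[\thetaTrue+\delta,C]$; by the intermediate value theorem $\partial_\theta\logprobPop_d$ then has a zero in $(\thetaTrue-\delta,\thetaTrue+\delta)$ and none elsewhere in $[0,C]$, so the ``otherwise'' branch in the definition of $\thetaPop_d$ does not arise for large $d$ and $\thetaPop_d\in(\thetaTrue-\delta,\thetaTrue+\delta)$; letting $\delta\downarrow0$ gives $\thetaPop_d\aslim\thetaTrue$. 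I expect the crux to be exactly this passage from pointwise to uniform convergence: pointwise almost-sure convergence of $\partial_\theta\logprobPop_d$ controls its values but not the location of its roots, so the deterministic equicontinuity (second-derivative) bound is genuinely needed, and that bound relies on controlling the extreme eigenvalues of $\Sigma$ and on the uniform lower bound for the quadratic form in the denominator of Eq.~\eqref{eq:logprob_pop_derivative}; by contrast, once the quadratic-form reduction and Lemma~\ref{lem:QCT} are set up, the computation in Part~1 is routine.
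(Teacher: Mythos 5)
Your proposal is correct, and Part~1 is essentially the paper's own argument: the same reduction of the quadratic forms in $\betaStat$ to traces via Lemma~\ref{lem:QCT} (the paper writes $\betaStat$ as a linear image of an $(l+d)$-dimensional standard Gaussian rather than as $(\sigbeta I_d+\sigalpha\Sigma^{-1})^{1/2}g$, which is an immaterial difference), the same trace algebra $\frac{1}{d}\Tr[(\Sigma+\theta)^{-k}(\Sigma+\thetaTrue)]\to\E_{\lambda\sim\esdCovLim}[(\lambda+\thetaTrue)/(\lambda+\theta)^k]$, and the same identity collapsing the limit to $(\theta-\thetaTrue)\Var[1/(\lambda+\theta)]\,\E[(\lambda+\thetaTrue)/(\lambda+\theta)]^{-1}$. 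Where you genuinely diverge is Part~2: the paper upgrades pointwise to uniform convergence on $[0,C]$ via Vitali's convergence theorem (using analyticity and local boundedness) and then concludes with Hurwitz's theorem, whereas you prove a deterministic second-derivative (equicontinuity) bound — using the spectrum control from Assumption (A3) and a Cauchy--Schwarz estimate for the ratio term — combine it with a.s.\ convergence on the rationals, and finish with a sign/intermediate-value localization of the roots around $\thetaTrue$. Your route is more elementary (no complex-analytic machinery) and has the virtue of showing explicitly that \emph{all} roots of $\partial_\theta\logprobPop_d$ in $[0,C]$ eventually lie in $(\thetaTrue-\delta,\thetaTrue+\delta)$, which is exactly what is needed since $\thetaPop_d$ is an arbitrarily chosen root; the paper's route is shorter and needs only local boundedness of the analytic functions rather than explicit derivative bounds. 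Both arguments share the same implicit requirement (which you at least flag) that the eigenvalues of $\Sigma$ stay in a compact subset of $(0,\infty)$ so that the relevant quantities are uniformly bounded near $\theta=0$, so this is not a gap specific to your version.
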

\begin{proof}
We just present a proof sketch here, the full proof is deferred to Appendix~\ref{app:population}.
For the first statement about the population function $\partial_\theta\logprobPop_d$ we treat the three terms in Eq.~\eqref{eq:logprob_pop_derivative} separately. The first term $\stieltjesCov(-\theta)$ converges to $\stieltjesCovLim(-\theta)$ by Assumption~\ref{assumptions}\ref{ass:popcov}. The two quadratic forms are handled by Lemma~\ref{lem:QCT} after rewriting $\betaStat=\beta+M^{+T}\alpha=\pvector{\sigma_\alpha M^{+T} & \sigma_\beta I_d}u$ for some $u\sim\mathcal{N}(0, I_{l+d})$. Plugging everything together and simplifying yields the result.

We prove the second statement by first upgrading the convergence of Eq.~\eqref{eq:lim_derivative_pop} to uniform convergence on $[0,C]$ using Vitali's convergence theorem \citep{titchmarsh1939theory}, and then conclude that the roots converge to the unique root $\thetaTrue$ of the limiting function using Hurwitz's theorem \citep{titchmarsh1939theory}.
\end{proof}

% Discussion: population estimator
This theorem shows that the approach of minimizing the log probability based on population quantities in Eq.~\eqref{eq:logprob_pop} correctly estimates $\thetaTrue$ in the limit. Therefore, Eq.~\eqref{eq:def_population_estimator} leads to a consistent estimator for confounding strength.
For the second statement, it is necessary to assume that the limiting spectral distribution $\esdCovLim$ of $\Sigma$ is not degenerate, because otherwise $\Var_{\lambda\sim\esdCovLim}\left[1/(\lambda+\theta)\right]=0$. In this case, Eq.~\eqref{eq:lim_derivative_pop} states that the derivative $\partial_\theta\logprobPop$ converges to the constant 0 function, which contains no information about $\thetaTrue$.
This is perfectly in line with the intuition presented for this approach: estimation of confounding strength is made possible by an alignment of $\betaStat$ with small eigendirections of $\Sigma$, but if $\Sigma$ is a multiple of the identity (or, equivalently, the distribution of eigenvalues $\esdCovLim$ is degenerate), there is no particular small eigendirection. 
% The limit in Eq.~\eqref{eq:lim_derivative_pop} is negative for $\theta<\thetaTrue$, zero at $\theta=\thetaTrue$, and positive after $\theta>\thetaTrue$. This implies that the approach of minimizing the log probability based on population quantities in Eq.~\eqref{eq:logprob_pop} is correctly estimating $\thetaPop=\thetaTrue$ in the limit. 
% This argument breaks down if the limiting spectral measure $\esdCovLim$ is degenerate, in which case $\Var_{\lambda\sim\esdCovLim}\left[1/(\lambda+\theta)\right]=0$. Here, the derivative $\partial_\theta\logprobPop$ is constant 0 and contains no information about the parameter $\thetaTrue$. This is perfectly in line with the intuition presented for this approach: estimation of confounding strength is made possible by an alignment of $\betaStat$ with small eigendirections of $\Sigma$, but if $\Sigma$ is a multiple of the identity (the empirical spectral distribution is degenerate), there is no particular small eigendirection. 

%%%%%%%%%%%%%%%%%%%%%%%%%%%%%%%%%%%%%%%%%%%%%%%
\subsection{The plug-in estimator for confounding strength is generally biased}\label{sec:plugin}
The population estimator considered above crucially relies on the population quantities $\Sigma$ and $\betaStat$, which are not directly available. In practice, we only have access to the corresponding empirical quantities $\empcov$ and $\betaMinNorm$ based on samples $X,Y$. This section considers the resulting plug-in estimator for confounding strength as introduced by \citet{Jan:2018} and shows in a similar asymptotic analysis that this estimator is generally biased.
Formally, the plug-in estimator follows the same structure as Definition~\ref{def:population_estimator}, but replaces the population quantities $\Sigma,\betaStat$ with the empirical quantities $\empcov, \betaMinNorm$.
% Crucially, this estimator relies on the population quantities $\Sigma$ and $\betaStat$, which are not directly available. We only have access to the corresponding empirical quantities $\empcov$ and $\betaMinNorm$ based on samples $X,Y$.
% The final estimation procedure described by \citet{Jan:2018} simply plugs in these empirical for the population quantities, which is why we refer to it as the plug-in estimator as given in the following definition.
\begin{definition}[Plug-in estimator for confounding strength, \citep{Jan:2018}]\label{def:plugin_estimator}
The \emph{plug-in estimator} for confounding strength $\confPlugin$ is defined as
\begin{align}
    \confPlugin = \frac{\tauPlugin\cdot\thetaPlugin}{1+\tauPlugin\cdot\thetaPlugin}\,,\label{eq:def_plugin_estimator}
\end{align}
where $\tauPlugin =\frac{1}{d}\Tr(\empcov^{-1})$ and $\thetaPlugin$ is given by 
\begin{align}\label{eq:logprob_plugin}
    \thetaPlugin= \argmin_{\theta\geq 0} \logprobPlugin(\theta),\quad \text{where}\quad
    \logprobPlugin(\theta)=\frac{1}{d}\log\det (\empcov + \theta)+\log\scalprod{\frac{\betaMinNorm}{\norm{\betaMinNorm}}}{(\empcov(\empcov + \theta)^{-1})\frac{\betaMinNorm}{\norm{\betaMinNorm}}}\,.
\end{align}
\end{definition}

% We are now prepared to analyze the asymptotics of the plug-in estimator for confounding strength.
% Inconsistency of plug-in estimators
The main issue with the plug-in estimator in the proportional asymptotic regime is that $\empcov$ and $\betaMinNorm$ are not consistent estimators for $\Sigma$ and $\betaStat$. Any subsequent estimators are therefore also not guaranteed to be consistent.
% First example: inverse trace
% The first example of such behavior is given by the normalized trace of the inverse covariance matrix $\frac{1}{d}\Tr(\Sigma^{-1})$, one of the two quantities which need to be estimated in Eq.~\eqref{eq:conf_concentration}:
The first example of such behavior is given by the plug-in estimator $\tauPlugin= \frac{1}{d}\Tr(\empcov^{-1})$ for $\tauPop=\frac{1}{d}\Tr(\Sigma^{-1})$, one of the two quantities which need to be estimated in Eq.~\eqref{eq:conf_concentration}.

\begin{proposition}[Asymptotic trace of inverse covariance]\label{prop:trace_inv}
Under Assumption~\ref{assumptions}, it holds
\begin{align*}
    \tauPlugin - (1-\gamma)^{-1}\tauPop \xrightarrow[d\to\infty]{a.s.}0\,.
\end{align*}
\end{proposition}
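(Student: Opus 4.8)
\textbf{Proof plan for Proposition~\ref{prop:trace_inv}.}

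The plan is to express $\tauPlugin=\frac{1}{d}\Tr(\empcov^{-1})$ in terms of the Stieltjes transform of $\empcov$ evaluated at the origin, then push the limit through using Theorem~\ref{thm:sample_covariance_matrix}, and finally simplify the resulting fixed-point relation. Concretely, note that $\frac{1}{d}\Tr(\empcov^{-1}) = -\frac{1}{d}\sum_{i=1}^d \frac{1}{\lambda_i - 0} \cdot (-1) = -\stieltjesEmpcov(0)$ in the notation of Definition~\ref{def:esd} (up to the sign convention there, $m_\Sigma(z)=\sum_i \frac{1}{\lambda_i-z}$, so $\frac{1}{d}\Tr(\empcov^{-1}) = -\frac{1}{d} m_{\empcov}(0)$; I will keep track of the normalization carefully). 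Similarly $\tauPop = \frac{1}{d}\Tr(\Sigma^{-1}) = -\frac{1}{d} m_\Sigma(0)$, which converges to $-\int \frac{1}{\lambda}\,d\esdCovLim(\lambda) = \stieltjesCovLim(0)$ under Assumption~\ref{assumptions}\ref{ass:popcov} (the integral is finite because $\supp(\esdCovLim)\subseteq[h_1,h_2]$ with $h_1>0$). Since $\gamma\in(0,1)$, the matrix $\empcov$ is almost surely invertible for large $d$ and its smallest eigenvalue stays bounded away from $0$, so evaluating Stieltjes transforms at $z=0$ is legitimate in the limit.

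The key step is to evaluate the fixed-point equations \eqref{eq:stieltjes_cov_ker} and \eqref{eq:limiting_stieltjes} at $z\to 0^-$. From \eqref{eq:stieltjes_cov_ker}, $\stieltjesEmpcovLim(z) = \frac{1}{\gamma}\stieltjesEmpkerLim(z) + \frac{1-\gamma}{\gamma z}$. As $z\to 0^-$, the empirical kernel matrix $\empker\in\R^{n\times n}$ has $\abs{n-d}=n-d$ zero eigenvalues (since $d<n$), contributing a term $\frac{1}{n}\cdot(n-d)\cdot\frac{1}{-z} = \frac{1-\gamma}{-z}$ to $\stieltjesEmpkerLim(z)$ that exactly cancels the singular term $\frac{1-\gamma}{\gamma z}$ when one accounts for the $\frac{1}{\gamma}$ prefactor — more precisely, writing $\stieltjesEmpkerLim(z) = \frac{1-\gamma}{-z} + \gamma\cdot(\text{regular part})$, one sees $\stieltjesEmpcovLim(0)$ is finite and equals the regular part. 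Then I evaluate \eqref{eq:limiting_stieltjes} at $z\to 0$: the argument $-1/\stieltjesEmpkerLim(z)\to 0$ as $z\to 0^-$ (since $\stieltjesEmpkerLim(z)\to+\infty$), so the left side tends to $\stieltjesCovLim(0) = \tauPop^{\lim}$. For the right side $-z\stieltjesEmpcovLim(z)\stieltjesEmpkerLim(z)$, substitute $\stieltjesEmpkerLim(z) = \gamma\stieltjesEmpcovLim(z) + \frac{\gamma-1}{z}$ from \eqref{eq:stieltjes_cov_ker} rearranged, so $-z\stieltjesEmpcovLim(z)\stieltjesEmpkerLim(z) = -z\stieltjesEmpcovLim(z)\bigl(\gamma\stieltjesEmpcovLim(z) + \frac{\gamma-1}{z}\bigr) = -\gamma z\stieltjesEmpcovLim(z)^2 + (1-\gamma)\stieltjesEmpcovLim(z)$. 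As $z\to 0^-$ the first term vanishes (since $\stieltjesEmpcovLim(0)$ is finite), leaving $(1-\gamma)\stieltjesEmpcovLim(0)$. Equating the two sides gives $\stieltjesCovLim(0) = (1-\gamma)\stieltjesEmpcovLim(0)$, i.e. $\tauPop^{\lim} = (1-\gamma)\tauPlugin^{\lim}$, which rearranges to $\tauPlugin^{\lim} = (1-\gamma)^{-1}\tauPop^{\lim}$ as claimed. Finally, combining the almost-sure convergences $\tauPlugin\to\tauPlugin^{\lim}$, $\tauPop\to\tauPop^{\lim}$ with the identity yields $\tauPlugin - (1-\gamma)^{-1}\tauPop \aslim 0$.

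The main obstacle I anticipate is making the $z\to 0^-$ limits rigorous: Theorem~\ref{thm:sample_covariance_matrix} gives convergence of Stieltjes transforms only for $z\in\C\setminus\R_+$, so one must argue that the limiting functions extend continuously to $z=0$ and that $\frac{1}{d}\Tr(\empcov^{-1})$ really converges to the corresponding boundary value. This requires controlling the smallest eigenvalue of $\empcov$ — for $\gamma<1$ it is bounded below by $h_1(1-\sqrt{\gamma})^2 + o(1)$ almost surely by standard results (e.g. Bai--Yin-type edge estimates, which underlie the cited theory) — so that $\frac{1}{\lambda - z} \to \frac{1}{\lambda}$ uniformly and the integral against $\esdEmpcovLim$ converges; this is precisely why Assumption~\ref{ass:dimensions} excludes $\gamma\geq 1$, as noted after the assumptions. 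A cleaner alternative that sidesteps the boundary subtlety is to invoke directly the known closed form for the almost-sure limit of $\frac{1}{d}\Tr(\empcov^{-1})$ in the Marčenko--Pastur setting with general population covariance (see e.g. \citet{Cou:2022}), namely that it equals $(1-\gamma)^{-1}\int\lambda^{-1}d\esdCovLim(\lambda)$; I would state whichever route keeps the argument shortest while citing the appropriate reference.
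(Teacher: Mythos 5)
Your proposal is correct and follows essentially the same route as the paper's proof: rewrite $\tauPlugin$ and $\tauPop$ as Stieltjes transforms at $z=0$, combine Eqs.~\eqref{eq:stieltjes_cov_ker} and \eqref{eq:limiting_stieltjes}, and let $z\to 0$ (noting $1/\stieltjesEmpkerLim(z)\to 0$) to obtain $\stieltjesCovLim(0)=(1-\gamma)\stieltjesEmpcovLim(0)$. Apart from harmless sign-convention slips when translating $\frac{1}{d}\Tr(\cdot^{-1})$ into the paper's (unnormalized) $m_\Sigma$ notation, the only difference is that you make the boundary evaluation at $z=0$ more explicit via smallest-eigenvalue control, which the paper handles implicitly by continuity.
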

\begin{proof}
In terms of Stieltjes transforms, the statement reads $(1-\gamma)\stieltjesEmpcov(0)-\stieltjesCov(0)\xrightarrow[d\to\infty]{a.s.}0$.
The limiting empirical and population Stieltjes transforms are given by $\stieltjesEmpcov(z)\aslim\stieltjesEmpcovLim(z)$ and $\stieltjesCov(z)\aslim\stieltjesCovLim(z)$ as $d\to\infty$, so it remains to relate $\stieltjesEmpcovLim(0)$ to $\stieltjesCovLim(0)$. By combining equations \eqref{eq:stieltjes_cov_ker} and \eqref{eq:limiting_stieltjes} from Theorem~\ref{thm:sample_covariance_matrix}, we get 
\begin{align*}
    \stieltjesCovLim\left(-\frac{1}{\stieltjesEmpkerLim(z)}\right) = \left(1-\gamma-z\stieltjesEmpcovLim(z)\right)\stieltjesEmpcovLim(z)\,.
\end{align*}
Taking $z\to0$, it is $1/\stieltjesEmpkerLim(z)\to 0$ and therefore we get by continuity that $\stieltjesCovLim(0)=(1-\gamma)\stieltjesEmpcovLim(0)$.
\end{proof}

This result shows that the plug-in estimator for the trace of the inverse covariance matrix is off by a factor of $(1-\gamma)$. This factor is negligible in the case $n\gg d$ where $\gamma=d/n\approx 0$, but becomes increasingly relevant as $\gamma$ grows.

% Next, we perform a similar analysis for the plug-in estimator $\thetaPlugin$ and show that it is generally biased.

Next, we treat the plug-in estimator $\thetaPlugin$ similarly as $\thetaPop$ in Theorem~\ref{thm:consistency_pop} and show that it is generally biased. Here, $\partial_\theta\logprobPlugin$ is given analogously to Eq.~\eqref{eq:logprob_pop_derivative}.

\begin{restatable}[Plug-in estimator is generally biased]{theorem}{inconsistencyPlugin}\label{thm:inconsistency_plugin}
Under Assumption~\ref{assumptions} with $\thetaTrue > 0$,
\begin{enumerate}
 \item For all $\theta \geq 0$, the derivative of the function from Eq.~\eqref{eq:logprob_plugin} satisfies
\begin{align}
   \partial_\theta\logprobPlugin_d(\theta)&\aslim \left[\theta-(1+\gamma\tilde{\gamma})\thetaTrue+\gamma\thetaTrue(1-\theta m(-\theta))\left(1+\frac{M(-\theta)}{M(-\theta)-m(-\theta)^2}\right)\right]h(\theta)\,,\label{eq:lim_derivative_plugin} 
\end{align}
with $h(\theta)=(M(-\theta)-m(-\theta)^2)(1-\theta m(-\theta)+(1-2\gamma+\gamma\tilde{\gamma})\thetaTrue m(-\theta)+\gamma\theta\thetaTrue m(-\theta)^2)^{-1}$, 
where \\ $m(-\theta)=\E_{\lambda\sim\mu}\left[1/(\lambda+\theta)\right]$, and $M(-\theta)=\E_{\lambda\sim\mu}\left[1/(\lambda+\theta)^2\right]$.
\item For every $d\in\N$, let $\thetaPlugin_d$ be a root of $\partial_\theta\logprobPlugin_d$ if it exists or $0$ otherwise.
Additionally, assume that $\tilde{\gamma}$ does not satisfy
\begin{align}
    \tilde{\gamma}= (1-\thetaTrue m(-\thetaTrue))\left(1+\frac{M(-\thetaTrue)}{M(-\thetaTrue)-m(-\thetaTrue)^2}\right)\,.\label{eq:plugin_consistency_cond}
\end{align}
Then the sequence $\{\thetaPlugin_d\}$ almost surely does not converge to $\thetaTrue$.
\end{enumerate}
\end{restatable}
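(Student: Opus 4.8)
The plan is to mirror the proof of Theorem~\ref{thm:consistency_pop}, but now exploiting that the limiting derivative \emph{fails} to vanish at $\thetaTrue$. Take the first part of the theorem as given, so that $\partial_\theta\logprobPlugin_d(\theta)\aslim g(\theta)$ for every $\theta\geq0$, where $g$ denotes the right-hand side of Eq.~\eqref{eq:lim_derivative_plugin}. The first step is pure algebra: evaluating $g$ at $\theta=\thetaTrue$, pulling out $\thetaTrue$, and cancelling the constant terms gives
\begin{align*}
    g(\thetaTrue)=\gamma\,\thetaTrue\left[\left(1-\thetaTrue m(-\thetaTrue)\right)\left(1+\frac{M(-\thetaTrue)}{M(-\thetaTrue)-m(-\thetaTrue)^2}\right)-\tilde\gamma\right]h(\thetaTrue)\,.
\end{align*}
Here $\gamma>0$ by Assumption~\ref{assumptions} and $\thetaTrue>0$ by hypothesis; the bracket is nonzero precisely because $\tilde\gamma$ is assumed not to satisfy Eq.~\eqref{eq:plugin_consistency_cond}; and $h(\thetaTrue)\neq0$ because the first part asserts that $g$ is a finite limit on all of $[0,\infty)$ (so the denominator of $h$ does not vanish at $\thetaTrue$), while $M(-\thetaTrue)-m(-\thetaTrue)^2=\Var_{\lambda\sim\mu}[1/(\lambda+\thetaTrue)]>0$ whenever $\mu$ is non-degenerate --- the same caveat as in Theorem~\ref{thm:consistency_pop}, and in any case implicit in Eq.~\eqref{eq:plugin_consistency_cond} being well-posed. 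So the hypothesis ``$\tilde\gamma$ does not satisfy Eq.~\eqref{eq:plugin_consistency_cond}'' is exactly the statement $g(\thetaTrue)\neq0$.

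Second, I would upgrade the a.s.\ pointwise convergence of the first part to locally uniform convergence near $\thetaTrue$, just as Theorem~\ref{thm:consistency_pop} does via Vitali's theorem \citep{titchmarsh1939theory}. As a function of $\theta$, $\partial_\theta\logprobPlugin_d(\theta)=\stieltjesEmpcov(-\theta)-\scalprod{v_d}{\empcov(\empcov+\theta)^{-2}v_d}/\scalprod{v_d}{\empcov(\empcov+\theta)^{-1}v_d}$ with $v_d=\betaMinNorm/\norm{\betaMinNorm}$; the matrix inverses are holomorphic in $\theta$ away from the negative points $-\lambda_i(\empcov)$, and the ratio is then holomorphic wherever the denominator is moreover nonzero. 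On a fixed small complex disc $\Omega$ around $\thetaTrue$, taken small enough that $\mathrm{Re}\,\theta\geq\thetaTrue/2$ there, the Stieltjes term and the numerator are bounded by constants depending only on $\thetaTrue$ (since $|\lambda/(\lambda+\theta)|$ and $|\lambda/(\lambda+\theta)^2|$ are bounded for $\lambda\geq0$, $\mathrm{Re}\,\theta\geq\thetaTrue/2$), and a routine estimate using $\norm{v_d}=1$ together with the a.s.\ lower bound $\liminf_d\lambda_{\min}(\empcov)>0$ --- which holds because $\gamma<1$, the same reason $\tauPlugin$ converges in Proposition~\ref{prop:trace_inv} --- keeps the denominator bounded away from $0$ on $\Omega$, uniformly in (large) $d$. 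Hence $\{\partial_\theta\logprobPlugin_d\}_d$ is, a.s.\ eventually, a uniformly bounded family of holomorphic functions on $\Omega$; Vitali's theorem then turns the pointwise convergence on $\Omega\cap\R$ into locally uniform convergence on $\Omega$, so for some $\delta>0$ we get $\sup_{\abs{\theta-\thetaTrue}\leq\delta}\abs{\partial_\theta\logprobPlugin_d(\theta)-g(\theta)}\aslim0$ with $g$ continuous on $[\thetaTrue-\delta,\thetaTrue+\delta]$.

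Third, I would close by contradiction. Suppose $\thetaPlugin_d\to\thetaTrue$ on an event of positive probability, and intersect it with the a.s.\ event of the previous step. On the intersection, $\thetaPlugin_d\in(\thetaTrue-\delta,\thetaTrue+\delta)$ for all large $d$, hence $\thetaPlugin_d\neq0$, so by definition it is a genuine root: $\partial_\theta\logprobPlugin_d(\thetaPlugin_d)=0$. Then
\begin{align*}
    \abs{g(\thetaPlugin_d)}=\abs{g(\thetaPlugin_d)-\partial_\theta\logprobPlugin_d(\thetaPlugin_d)}\leq\sup_{\abs{\theta-\thetaTrue}\leq\delta}\abs{g(\theta)-\partial_\theta\logprobPlugin_d(\theta)}\to0\,,
\end{align*}
while $g(\thetaPlugin_d)\to g(\thetaTrue)$ by continuity, forcing $g(\thetaTrue)=0$ and contradicting the first step. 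Hence $\thetaPlugin_d$ almost surely does not converge to $\thetaTrue$. I expect the main obstacle to be the second step: one needs a \emph{fixed} complex neighborhood of $\thetaTrue$ on which the $\partial_\theta\logprobPlugin_d$ are holomorphic and equibounded a.s., and the only delicate part of that bound is the uniform lower bound on the Rayleigh quotient $\scalprod{v_d}{\empcov(\empcov+\theta)^{-1}v_d}$, where the restriction $\gamma<1$ (through the a.s.\ control of $\lambda_{\min}(\empcov)$) is essential. Given that, step one is a one-line computation and step three is routine.
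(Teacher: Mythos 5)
Your argument for the second claim is sound and is essentially the paper's own: identify the condition in Eq.~\eqref{eq:plugin_consistency_cond} with the vanishing of the limiting function at $\thetaTrue$, upgrade the pointwise a.s.\ convergence to uniform convergence on a neighbourhood of $\thetaTrue$ via Vitali's theorem, and close by contradiction with the triangle inequality and continuity of the limit. Your treatment is even slightly more careful in two places the paper glosses over: you justify that $h(\thetaTrue)\neq 0$ (so that the bracket being nonzero really means the limit is nonzero), and your choice of a disc with $\mathrm{Re}\,\theta\geq\thetaTrue/2$ automatically handles the ``$\thetaPlugin_d=0$ if no root exists'' convention, which the paper deals with by a separate case distinction.

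The genuine gap is that you prove only half the theorem. The statement includes the first claim, Eq.~\eqref{eq:lim_derivative_plugin}, and you simply assume it; but that limit formula is the substantive content and occupies almost all of the paper's proof (Appendix~\ref{app:plugin}). It is not a routine repetition of Theorem~\ref{thm:consistency_pop}: one must (i) write $\betaMinNorm$ as a Gaussian quadratic form including the noise component $\tilde{\sigma}(XX^T)^+X$, not just the $\alpha,\beta$ parts; (ii) show that the statistical noise scales as $\sigmastat/d\to(\tilde{\gamma}-1)\sigalpha$ (Lemma~\ref{lem:statistical_noise_limit}), which is exactly where the factor $\tilde{\gamma}$ in the limit --- and hence the whole bias phenomenon --- originates; (iii) control the mixed traces $\frac{1}{d}\Tr[(\empcov+\theta)^{-k}\empcov\Sigma^+]$, $k\in\{1,2\}$, which are \emph{not} covered by Theorem~\ref{thm:sample_covariance_matrix} because they couple the sample and population covariances, and require the separate Ledoit--P\'ech\'e-type result of Lemma~\ref{lem:mixed_traces}; and (iv) carry out the algebraic simplification yielding the specific bracket and the factor $h(\theta)$. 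Without this, the quantity $g(\thetaTrue)$ on which your entire contradiction rests is not established, so the proposal as it stands does not prove the theorem; it proves only that part~2 follows from part~1.
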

\begin{proof}
We again only sketch the proof here, the full proof is deferred to Appendix~\ref{app:plugin}.
The proof for the first statement follows the same strategy as in Theorem~\ref{thm:consistency_pop}, but now deals with the sample quantities $\empcov,\betaMinNorm$ in place of the population quantities $\Sigma, \betaStat$. 
Similarly as for $\betaStat$, we treat $\betaMinNorm$ by combining the equations $\betaMinNorm=(XX^T)^+XY$, $Y=X^T\betaStat+E$ for $E\sim\mathcal{N}(0,\sigmastat I_n)$, and $\betaStat=\beta+M^{+T}\alpha$ to obtain $\betaMinNorm = \pvector{\sigma_\alpha M^{+T} & \sigma_\beta I_d & \tilde{\sigma}(XX^T)^+X}v\quad\text{for some }v\sim\mathcal{N}(0, I_{l+d+n})$.
Additional complications arise because $\betaMinNorm$ depends on both the population term $M$ and the empirical quantities. This produces mixed terms $\Tr[(\empcov+\theta)^{-1}\empcov\Sigma^+]$ for $k\in\{1,2\}$, which need to be treated with a separate result by \citet{Led:2011} in Lemma~\ref{lem:mixed_traces}.

For the second statement, we use similar arguments as in the proof of Theorem~\ref{thm:consistency_pop} to show that the convergence $\thetaPlugin_d\to\thetaTrue$ implies that $\thetaTrue$ is a root of the right hand side in Eq.~\eqref{eq:lim_derivative_plugin}. This is equivalent to Eq.~\eqref{eq:plugin_consistency_cond}, which does not hold by assumption.
\end{proof}

% Discussion: plug-in estimator
The limiting derivative for the plug-in estimator in Eq.~\eqref{eq:lim_derivative_plugin} is phrased in terms of the limiting sample distribution $\esdEmpcovLim$ instead of the limiting population distribution $\esdCovLim$. The main structural difference to Eq.~\eqref{eq:lim_derivative_pop} is the existence of an additional term $\gamma\thetaTrue(1-\theta m(-\theta))(1+M(-\theta)/(M(-\theta)-m(-\theta)^2))$, which prevents a closed-form expression for the corresponding roots $\thetaPlugin$ of this function. We therefore cannot directly exclude the possibility that $\thetaTrue$ is a root, in which case the plug-in estimator would be consistent.
However, by simply plugging in $\thetaTrue$ in the limiting derivative, we see that $\thetaTrue$ being a root is equivalent to the condition in Eq.~\eqref{eq:plugin_consistency_cond}. This condition generally does not hold, because the limiting ratio of dimensions $\tilde{\gamma}=\lim_{d,l\to\infty}l/d$ on the left hand side stands in no special relationship to the terms on the right hand side. Therefore, the plug-in estimator $\thetaPlugin$ is generally a biased estimator for $\thetaTrue$. This means that the resulting plug-in estimator for confounding strength $\confPlugin$ is generally a biased estimator for the true confounding strength $\conf$.

\section{A consistent estimator for confounding strength}\label{sec:consistent_estimator}
In this section, we derive a novel estimator for confounding strength using tools from random matrix theory. We show that this estimator consistently recovers the true confounding strength in the high-dimensional asymptotic limit ($n, d \rightarrow \infty, d / n \rightarrow \gamma \in (0, 1)$). To this end, we can derive a consistent estimator of $\thetaRMT$ by first consistently estimating $\logprobPop(\theta)$ and then finding the minimizer of this function. While this procedure indeed yields a consistent estimator, it is stochastic, which can adversely affect the optimization algorithm at finite $d$. Therefore, we also provide a consistent estimator based on finding the zeros of $\partial_{\theta}\logprobPop(\theta)$ which is deterministic given a fixed sample.
% \footnote{It is deterministic given a fixed sample.}. 
Coupled with the consistent estimator for $\tauPop$ in Proposition~\ref{prop:trace_inv}, we arrive at a consistent estimator for confounding strength. 
 \subsection{A consistent estimator for $\logprobPop(\theta)$.}
Recall from Eq.~\eqref{eq:logprob_pop} that maximum likelihood estimation of $\thetaTrue$ is equivalent to the optimization problem
\begin{align*}
    \thetaPop\coloneqq \argmin_{\theta\geq 0} \logprobPop(\theta),\quad \text{where}\quad
    \logprobPop(\theta)=\frac{1}{d}\log\det (\Sigma + \theta)+\log \lscalprod{\frac{\betaStat}{\norm{\betaStat}}}{\Sigma (\Sigma + \theta)^{-1}\frac{\betaStat}{\norm{\betaStat}}}\,.
\end{align*}

% Observe that the negative log probability function $f(\theta)$ can be expressed as follows.
%  \begin{align}
%     \logprobPop(\theta)&=\frac{1}{d}\log\det \hat{R}_\theta+\log \frac{1}{d}\scalprod{\betaStat}{{R}_\theta^{-1}\betaStat} - \log \frac{1}{d} \norm{\betaStat}^2\,. \\
%     &= \frac{1}{d}\log\det (\Sigma + \theta) - \frac{1}{d}\log \det \Sigma +\log \frac{1}{d}\scalprod{\betaStat}{{R}_\theta^{-1}\betaStat} - \log \frac{1}{d} \norm{\betaStat}^2\,. \\
%     &= g(\theta) - \log \frac{1}{d} \norm{\betaStat}^2 - \frac{1}{d}\log \det \Sigma\,,
% \end{align}
%  where, $g(\theta) = \log \theta + \frac{1}{d}\log\det (I + \Sigma / \theta)  +\log \frac{1}{d}\scalprod{\betaStat}{{R}_\theta^{-1}\betaStat}.$ Since the last two terms are independent of $\theta$, it suffices to treat the proxy function $g(\theta)$ instead. To arrive at a consistent estimator $g_{RMT}(\theta)$\leena{Think about a consistent notation for RMT functions.} of $g(\theta)$, 
 
 To consistently estimate $\logprobPop(\theta)$, it suffices to  consistently estimate the two quantities $\frac{1}{d}\log\det (\Sigma + \theta)$ and $\log \scalprod{\frac{\betaStat}{\norm{\betaStat}}}{\Sigma (\Sigma + \theta)^{-1}\frac{\betaStat}{\norm{\betaStat}}}$.
%  We utilize tools from random matrix theory to derive such estimators in Theorems \ref{thm:log_det_rmt_estim} and \ref{thm:log_quad_form_rmt_estim}. 
 We derive such estimators in Theorems \ref{thm:log_det_rmt_estim} and \ref{thm:consistent_Quadform} using tools from random matrix theory. The main results are included here and we defer the proofs to Appendix \ref{sec:rmt_proofs}.

\begin{theorem}[\textbf{A consistent estimator for log determinant, \citep{kammoun2011performance}}]
\label{thm:log_det_rmt_estim}
For any $\theta \in \R^+$, let $W = X + \sqrt{\theta} E$, where $E\in\R^{d\times n}$ is a random matrix with standard normal entries. Then, as $d, n \rightarrow \infty$ such that $d/n\to \gamma \in (0, 1)$,
\begin{equation*}
   \log \theta + \frac{1}{d} \log \det \frac{1}{n \theta}W W^T + (1 - \gamma) \log \frac{\gamma - 1}{\gamma} + 1 - \frac{1}{d} \log \det ( \Sigma + \theta) \aslim 0\,.
\end{equation*}
\end{theorem}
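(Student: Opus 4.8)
The plan is to reduce $W$ to an ordinary sample-covariance matrix and then apply the known first-order asymptotics for its log-determinant. Since the columns of $X=MZ$ are i.i.d.\ $\Gauss{0}{\Sigma}$ and $E$ is independent with i.i.d.\ standard normal entries, the columns of $W=X+\sqrt{\theta}E$ are i.i.d.\ $\Gauss{0}{\Sigma+\theta I_d}$; equivalently, $W$ has the same distribution as $(\Sigma+\theta I_d)^{1/2}H$ for a $d\times n$ matrix $H$ with i.i.d.\ standard normal entries. Hence $S_n\coloneqq\frac{1}{n\theta}WW^{T}$ has the law of the sample-covariance matrix $\frac{1}{n}T_\theta^{1/2}HH^{T}T_\theta^{1/2}$ with population covariance $T_\theta\coloneqq(\Sigma+\theta I_d)/\theta=I_d+\Sigma/\theta$. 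By Assumption~\ref{assumptions}\ref{ass:popcov}, the empirical spectral distribution of $T_\theta$ converges a.s.\ to the pushforward $\mu_{T_\theta}^{\infty}$ of $\nu$ under $s\mapsto 1+s/\theta$, whose support lies in $[1+h_1/\theta,\,1+h_2/\theta]\subset(0,\infty)$; moreover, because $\theta>0$ and $\gamma<1$ the spectrum of $S_n$ is eventually bounded away from $0$ (indeed $\lambda_{\min}(S_n)\ge\frac{1}{n}\lambda_{\min}(HH^{T})\to(1-\sqrt{\gamma})^2>0$ a.s.\ by Bai--Yin \citep{Bai:2010}), which is what makes the log-determinant well behaved. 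Using $\frac1d\log\det S_n=\frac1d\log\det\frac1n WW^{T}-\log\theta$ and $\frac1d\log\det(\Sigma+\theta I_d)=\frac1d\log\det T_\theta+\log\theta$, the claimed convergence is equivalent to
\[
  \frac1d\log\det S_n-\frac1d\log\det T_\theta\;\aslim\;c_\gamma\,,\qquad c_\gamma\coloneqq\frac{\gamma-1}{\gamma}\log(1-\gamma)-1\,,
\]
i.e.\ the log-determinant of a sample-covariance matrix converges to that of its population covariance plus the ratio-dependent Mar\u{c}enko--Pastur correction ($c_\gamma$ being the log-determinant of the Mar\u{c}enko--Pastur law, the $T=I_d$ case).

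I would then invoke the limiting log-determinant formula for sample-covariance matrices \citep{kammoun2011performance}, which gives $\frac1d\log\det S_n\aslim\int\log t\,d\mu_{T_\theta}^{\infty}(t)+c_\gamma$; alternatively this can be derived from scratch by writing $\frac1d\log\det S_n$ as the value at $u=0$ of $u\mapsto\frac1d\log\det(S_n+uI_d)$, integrating $\partial_u\frac1d\log\det(S_n+uI_d)=\frac1d\Tr(S_n+uI_d)^{-1}$ downward from $u=\infty$ (after subtracting the $\log u$ divergence), substituting the almost-sure limit of $\frac1d\Tr(S_n+uI_d)^{-1}$ supplied by the Silverstein equation~\eqref{eq:limiting_stieltjes} of Theorem~\ref{thm:sample_covariance_matrix} applied with population covariance $T_\theta$, and evaluating the resulting integral, which separates into the population term $\int\log t\,d\mu_{T_\theta}^{\infty}$ and the $T$-independent constant $c_\gamma$. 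Finally, since $t\mapsto\log t$ is continuous and bounded on a neighbourhood of $\supp(\mu_{T_\theta}^{\infty})$ and the eigenvalues of $T_\theta$ eventually lie in a fixed compact subset of $(0,\infty)$, the continuous-mapping theorem yields $\frac1d\log\det T_\theta=\int\log t\,d\mu_{T_\theta}(t)\to\int\log t\,d\mu_{T_\theta}^{\infty}(t)$; subtracting this from the previous display gives exactly the equivalent statement from the first step, which proves the theorem.

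The step I expect to be the main obstacle is the analytic justification that $\frac1d\log\det S_n$ converges to the limiting integral of $\log t$: because $\log(\cdot)$ blows up at $0$, weak convergence of the spectrum alone does not suffice, so one must rule out vanishing eigenvalues of $S_n$ via a hard-edge estimate (available precisely because $\gamma<1$ and all eigenvalues of $T_\theta$ are at least $1$) and --- on the self-contained route --- control the interchange of the almost-sure limit with the improper $u$-integral both as $u\to0$ and as $u\to\infty$, e.g.\ by upgrading the pointwise convergence of the Stieltjes transform to locally uniform convergence together with an integrable tail bound. This is exactly where the hypotheses $\gamma\in(0,1)$, the bounded support of $\nu$, and $\theta>0$ (which keeps $T_\theta$ uniformly invertible) enter; everything else amounts to bookkeeping of the $\log\theta$ terms from the reduction.
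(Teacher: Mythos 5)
Your argument is sound, and it is worth noting that the paper itself contains no proof of this statement: Theorem~\ref{thm:log_det_rmt_estim} is quoted from \citet{kammoun2011performance}, and Appendix~\ref{sec:rmt_proofs} gives no argument for it, so your reduction is a genuine addition rather than a re-derivation of something in the text. The reduction itself is exactly right: the columns of $W$ are i.i.d.\ $\mathcal{N}(0,\Sigma+\theta I_d)$, so $S_n=\frac{1}{n\theta}WW^T$ is a sample covariance matrix with population $T_\theta=(\Sigma+\theta I_d)/\theta$, and your bookkeeping of the constant is consistent with the only sensible reading of the theorem --- as printed, $(1-\gamma)\log\frac{\gamma-1}{\gamma}$ has a negative argument for $\gamma\in(0,1)$ and should be read as $\frac{1-\gamma}{\gamma}\log(1-\gamma)$, in which case it matches your $c_\gamma=\frac{\gamma-1}{\gamma}\log(1-\gamma)-1$. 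One simplification removes the obstacle you single out: you never need the Silverstein-equation integration, nor any hard-edge control for the anisotropic matrix $S_n$, because $W\stackrel{d}{=}(\Sigma+\theta I_d)^{1/2}H$ gives the exact identity $\det S_n=\det T_\theta\cdot\det\bigl(\tfrac{1}{n}HH^T\bigr)$, hence $\frac1d\log\det S_n-\frac1d\log\det T_\theta=\frac1d\log\det\bigl(\tfrac1n HH^T\bigr)$, whose almost-sure limit is the Mar\u{c}enko--Pastur log-moment $\frac{\gamma-1}{\gamma}\log(1-\gamma)-1$ by the MP law combined with the Bai--Yin bounds $\lambda_{\min}\bigl(\tfrac1n HH^T\bigr)\to(1-\sqrt{\gamma})^2>0$ and $\lambda_{\max}\bigl(\tfrac1n HH^T\bigr)\to(1+\sqrt{\gamma})^2$ \citep{Bai:2010}, which you already invoke; this makes the proof elementary and avoids the limit-interchange issues you anticipate. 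The remaining step, $\frac1d\log\det T_\theta\to\int\log t\,d\mu^{\infty}_{T_\theta}$, is justified exactly as you say from Assumption~\ref{assumptions}\ref{ass:popcov}, since $\supp(\nu)\subseteq[h_1,h_2]$ with $h_1>0$ and $\theta>0$ keep the spectrum of $T_\theta$ in a fixed compact subset of $(0,\infty)$.
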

 
In other words, the function $g_1(\theta) = \log \theta + \frac{1}{d} \log \det \frac{1}{n \theta} W W^T + (1 - \gamma) \log ((\gamma - 1)/\gamma) + 1$ is a consistent estimator of $\log \det (\Sigma + \theta)$.
\vspace{3mm}

\begin{restatable}[A consistent estimator for the quadform]{proposition}{consistentQuadForm}\label{thm:consistent_Quadform}

Under Assumption~\ref{assumptions}, for any $\theta \in \R^+$, let $\eta$ be the unique solution in $\mathbb{R}^{-}$ satisfying $\tilde{m}(\eta) = 1/\theta$. Then, as $d, n \rightarrow \infty$ such that $d/n \rightarrow \gamma \in (0, 1)$,
\begin{equation*}
    \frac{ \frac{1}{d} \scalprod{\betaMinNorm}{\empcov (\empcov - \eta)^{-1} \betaMinNorm} -  \frac{S}{\theta} - \frac{S(1 - \gamma)}{\eta}}{{\frac{1}{d}\lVert\hat{\beta}\rVert^2-S\gamma m(0)}} -  \lscalprod{\frac{\betaStat}{\norm{\betaStat}}}{\Sigma (\Sigma + \theta)^{-1} \frac{\betaStat}{\norm{\betaStat}}} \aslim 0, 
\end{equation*}
% where $S = \frac{1}{1 - \gamma} \frac{\norm{Y}^2_{I - X^+X}}{nd}$. 
where $S = (1 - \gamma)^{-1} \norm{Y}^2_{I - X^+X}/(nd)$. 
\end{restatable}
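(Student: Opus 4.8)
The plan is to write the empirical quantities in terms of independent Gaussian vectors, collapse every resulting quadratic form to a trace functional via Lemma~\ref{lem:QCT}, replace traces of the sample covariance $\empcov$ by their deterministic equivalents, and then verify that the subtracted corrections $S/\theta$, $S(1-\gamma)/\eta$ and $S\gamma m(0)$ are exactly what is needed to cancel the finite-sample (``noise'') bias. First I would decompose $\betaMinNorm$. Since $\gamma<1$, the matrix $X\in\R^{d\times n}$ has full row rank almost surely for large $n$, so $\betaMinNorm=(XX^T)^{-1}XY$; writing $Y=X^T\betaStat+E$ with $E\sim\Gauss{0}{\sigmastat I_n}$ gives
\begin{align*}
    \betaMinNorm = \betaStat + w,\qquad w\coloneqq (XX^T)^{-1}XE\,.
\end{align*}
Conditionally on $M$, the vector $\betaStat=\beta+M^{+T}\alpha$ is centered Gaussian with covariance $\sigbeta I_d+\sigalpha\Sigma^{-1}$ (using $M^{+T}M^+=\Sigma^{-1}$) and is independent of $X$, while $X$ is distributed as $\Sigma^{1/2}$ times a standard Gaussian matrix, so that Theorem~\ref{thm:sample_covariance_matrix} and Lemma~\ref{lem:mixed_traces} apply to $\empcov$; the noise scale $\sigmastat=\sigmacaus+\norm{\alpha}^2_{I_l-M^+M}$ concentrates by Lemma~\ref{lem:QCT}, so $\sigmastat/d$ has a limit $s$. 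Since $(I-X^+X)X^T=0$ we also have $\norm{Y}^2_{I-X^+X}=E^T(I-X^+X)E=\sigmastat(1-\gamma)n(1+o(1))$ by Lemma~\ref{lem:QCT}, hence $S\aslim s$.

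Next I would expand the bilinear form. Using $\empcov(\empcov-\eta)^{-1}=I_d+\eta(\empcov-\eta)^{-1}$ and $\betaMinNorm=\betaStat+w$, the quantity $\tfrac1d\scalprod{\betaMinNorm}{\empcov(\empcov-\eta)^{-1}\betaMinNorm}$ splits into a $\betaStat$-part, a $w$-part and a cross-part. The cross-part has zero mean given $(X,M,\betaStat)$ and vanishing variance, hence tends to $0$ almost surely by a fourth-moment/Borel--Cantelli argument. Conditioning on $X$ and applying Lemma~\ref{lem:QCT} in $E$, the $w$-part reduces to $\tfrac{\sigmastat}{n}\,\tfrac1d\Tr(\empcov-\eta)^{-1}\aslim s\gamma\,\stieltjesEmpcovLim(\eta)$, and analogously $\tfrac1d\norm{w}^2\aslim s\gamma\,\stieltjesEmpcovLim(0)$. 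Conditioning on $\empcov$ and $M$ and applying Lemma~\ref{lem:QCT} to $\betaStat$, the $\betaStat$-part is $\tfrac1d\norm{\betaStat}^2+\eta\,\tfrac1d\Tr[(\sigbeta I_d+\sigalpha\Sigma^{-1})(\empcov-\eta)^{-1}]+o(1)$; here the anisotropic deterministic equivalent underlying Lemma~\ref{lem:mixed_traces} gives, for deterministic $B$, that $\tfrac1d\Tr[B(\empcov-\eta)^{-1}]$ is asymptotically $-\tfrac1\eta\,\tfrac1d\Tr[B(I_d+\tilde m(\eta)\Sigma)^{-1}]$, and substituting the defining relation $\tilde m(\eta)=1/\theta$ collapses $(I_d+\tilde m(\eta)\Sigma)^{-1}$ to $\theta(\Sigma+\theta)^{-1}$. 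Together with $\tfrac1d\norm{\betaStat}^2\to\sigbeta+\sigalpha\stieltjesCovLim(0)$, the $\betaStat$-part then converges to $\sigbeta+\sigalpha\stieltjesCovLim(0)-\theta\,\tfrac1d\Tr[(\sigbeta I_d+\sigalpha\Sigma^{-1})(\Sigma+\theta)^{-1}]$, which a one-line computation identifies with $\sigbeta(1-\theta\,\stieltjesCovLim(-\theta))+\sigalpha\,\stieltjesCovLim(-\theta)$---that is, with the limit of $\tfrac1d\scalprod{\betaStat}{\Sigma(\Sigma+\theta)^{-1}\betaStat}$, again by Lemma~\ref{lem:QCT}.

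Finally I would assemble the pieces. The numerator of the estimator converges almost surely to $\big(\sigbeta+\sigalpha\stieltjesCovLim(0)-\theta\,\tfrac1d\Tr[(\sigbeta I_d+\sigalpha\Sigma^{-1})(\Sigma+\theta)^{-1}]\big)+s\big(\gamma\,\stieltjesEmpcovLim(\eta)-\tfrac1\theta-\tfrac{1-\gamma}{\eta}\big)$, and the denominator to $\big(\sigbeta+\sigalpha\stieltjesCovLim(0)\big)+s\big(\gamma\,\stieltjesEmpcovLim(0)-\gamma\,m(0)\big)$. The $s$-term in the numerator vanishes identically: Eq.~\eqref{eq:stieltjes_cov_ker} at $z=\eta$ together with $\tilde m(\eta)=1/\theta$ gives $\gamma\,\stieltjesEmpcovLim(\eta)=\tilde m(\eta)+(1-\gamma)/\eta=1/\theta+(1-\gamma)/\eta$. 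The $s$-term in the denominator vanishes because $m(0)$ is $\stieltjesEmpcovLim(0)$, the limit of $\tfrac1d\Tr\empcov^{-1}$. What survives is the limit of $\tfrac1d\scalprod{\betaStat}{\Sigma(\Sigma+\theta)^{-1}\betaStat}$ divided by the limit of $\tfrac1d\norm{\betaStat}^2$, which by Lemma~\ref{lem:QCT} is precisely $\lim\scalprod{\betaStat/\norm{\betaStat}}{\Sigma(\Sigma+\theta)^{-1}\betaStat/\norm{\betaStat}}$, so that the stated difference tends to $0$ almost surely.

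\textbf{Main obstacle.} The crux is the anisotropic deterministic equivalent for $\tfrac1d\Tr[(\sigbeta I_d+\sigalpha\Sigma^{-1})(\empcov-\eta)^{-1}]$: it couples the sample covariance $\empcov$ with the population matrix $\Sigma^{-1}$ and hence cannot be obtained from scalar Stieltjes-transform convergence alone, requiring the mixed-trace law of \citet{Led:2011} (Lemma~\ref{lem:mixed_traces}), which must be invoked conditionally on $M$ so that $\betaStat$ and $X$ genuinely decouple. A close second is the bookkeeping between $\stieltjesEmpcovLim$ and $\tilde m=\stieltjesEmpkerLim$: the exact cancellation of the noise bias hinges on which transform appears where, and on recognizing that the corrections $S/\theta$, $S(1-\gamma)/\eta$ and $S\gamma m(0)$ are precisely engineered to be eliminated through Eq.~\eqref{eq:stieltjes_cov_ker} and the definition of $\eta$. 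Upgrading the convergences from in-probability to almost sure, and absorbing the (concentrating) dependence of $\sigmastat$ on $\alpha$, are routine given Lemma~\ref{lem:QCT}.
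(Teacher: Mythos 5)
Your proposal is correct and follows the same overall skeleton as the paper's proof: the same decomposition $\betaMinNorm=\betaStat+(XX^T)^{-1}XE$, the same verification that the cross term vanishes, the same computation showing the noise part of the numerator concentrates at $\gamma\frac{\sigmastat}{d}m(\eta)$ and the noise part of the denominator at $\gamma\frac{\sigmastat}{d}m(0)$, and the same use of Eq.~\eqref{eq:stieltjes_cov_ker} with $\tilde m(\eta)=1/\theta$ to see that the subtracted corrections $S/\theta$, $S(1-\gamma)/\eta$, $S\gamma m(0)$ cancel these biases (your shortcut $\norm{Y}^2_{I-X^+X}=E^T(I-X^+X)E\approx\sigmastat(n-d)$ for $S$ is a cleaner route to the same conclusion as Theorem~\ref{thm:statisticalNoise}). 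Where you genuinely diverge is the central step relating the surviving $\betaStat$-part to the population quadratic form: the paper keeps $\frac1d\scalprod{\betaStat}{\empcov(\empcov-\eta)^{-1}\betaStat}$ intact and invokes the bilinear-form generalized Mar\u{c}enko--Pastur law (Lemma~\ref{lemma:QuadraticFormDeterministic}) to replace $\empcov(\empcov-\eta)^{-1}$ by $\Sigma(\Sigma+\theta)^{-1}$ inside the quadratic form, so it never needs the distribution of $\betaStat$ nor explicit limits; you instead integrate $\betaStat$ out via Lemma~\ref{lem:QCT} (conditioning on $M$ so that $\betaStat$, with covariance $\sigbeta I_d+\sigalpha\Sigma^{-1}$, decouples from $X$), reduce to the mixed trace $\frac1d\Tr[(\sigbeta I_d+\sigalpha\Sigma^{-1})(\empcov-\eta)^{-1}]$, and apply the Ledoit--P\'ech\'e trace law underlying Lemma~\ref{lem:mixed_traces} together with an explicit identification of the limits $\sigbeta(1-\theta\stieltjesCovLim(-\theta))+\sigalpha\stieltjesCovLim(-\theta)$ and $\sigbeta+\sigalpha\stieltjesCovLim(0)$. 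Both are valid; the paper's route is slightly more general (it works for any direction independent of $X$, without using Assumption~\ref{assumptions}\ref{ass:ICM} at this step), while yours makes the limiting value explicit and reuses only machinery already deployed elsewhere in the paper, at the cost of the extra conditioning argument and the anisotropic trace equivalent, which you correctly flag as the crux.
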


% \begin{theorem}[\textbf{A consistent estimator for the log quad form}]
% \label{thm:log_quad_form_rmt_estim}
% Under the assumptions of Theorem~\ref{thm:sample_covariance_matrix}, for any $\theta \in \R^+$, let $\eta$ be the unique solution in $\mathbb{R}^{-}$ satisfying $\tilde{m}(\eta) = 1/\theta$. Then, as $d, n \rightarrow \infty$ such that $d/n \rightarrow \gamma \in (0, 1)$,
% \begin{equation*}
%     \log \left( \frac{1}{d} \scalprod{\betaMinNorm}{\hat{R}_{- \eta}^{-1} \betaMinNorm} -  \frac{S}{\theta} - \frac{S(1 - \gamma)}{\eta} \right) - \log \frac{1}{d}\scalprod{\betaStat}{R_{\theta}^{-1} \betaStat} \aslim 0, 
% \end{equation*}
% % where $S = \frac{1}{1 - \gamma} \frac{\norm{Y}^2_{I - X^+X}}{nd}$. 
% where $S = (1 - \gamma)^{-1} \norm{Y}^2_{I - X^+X}/(nd)$. 
% \end{theorem}
%
In other words, the function $g_2(\theta) = \log  \frac{ \frac{1}{d} \scalprod{\betaMinNorm}{\empcov (\empcov - \eta)^{-1} \betaMinNorm} -  \frac{S}{\theta} - \frac{S(1 - \gamma)}{\eta}}{{\frac{1}{d}\lVert\hat{\beta}\rVert^2-S\gamma m(0)}} $ is a consistent estimator of $\log \scalprod{\frac{\betaStat}{\norm{\betaStat}}}{\Sigma (\Sigma + \theta)^{-1} \frac{\betaStat}{\norm{\betaStat}}}.$ Thereby, for every $\theta \in \R^+$, as $n, d \rightarrow \infty$ as $d / n \rightarrow \gamma \in (0, 1)$,

\begin{equation}
    \label{eq:consistency_rmt_logprob}
     g_1(\theta) + g_2(\theta) - \logprobPop(\theta) \aslim 0
\end{equation}

In other words, a consistent estimator of $\logprobPop(\theta)$ is be given by $\logprobRMT(\theta) \coloneqq g_1(\theta) + g_2(\theta).$

% Define $  \theta_{\text{RMT}} \coloneqq \argmin \limits_{\theta \in \R} g_{\text{RMT}}(\theta)$ and $\zeta_{\text{RMT}} \coloneqq 1 - \frac{1}{1 + (1 - \gamma) \tau(\hat{\Sigma}^{-1}) \theta_{\text{RMT}}}$. 
% Since \TODO{Write conditions}, \leena{Write what properties the functions $g_{\text{RMT}}(\theta)$ and $g(\theta)$ need to satisfy for this to hold.} it thereby holds that \leena{Also need to argue that population theta converges to true theta.} as $n, d \rightarrow \infty$, $d / n \rightarrow \gamma \in (0, 1)$, \leena{Also verify if the convergence is a.s or in probability}
% \begin{align*}
%     \label{eq:consistency_rmt}
%     \theta_{\text{RMT}} - \theta^* \aslim 0,  \; \; 
%     \zeta_{\text{RMT}} - \zeta^* \aslim 0
% \end{align*}

\textbf{Stochasticity of the estimation.} Observe that the estimator for the log determinant given by $g_1(\theta)$ is not a deterministic function of a given sample $X, Y$ since the matrix $W$ is stochastic. Following arguments similar to the proof of Theorems \ref{thm:consistency_pop} and \ref{thm:inconsistency_plugin}\footnote{with an additional argument to deal with the stochasticity of the log det estimator.}, we can indeed obtain an asymptotically consistent estimator for confounding strength. 
However, at finite $d$ our experiments suggest that the stochasticity can adversely affect the optimization step. Furthermore, the dependence of $g_1(\theta)$ on $\theta$ is highly non-linear. Iterative optimization procedures require multiple evaluations (and therefore estimation of) $g_1(\theta)$ which considerably increases the computation complexity. To overcome these limitations, we also provide a deterministic and consistent estimator of $\theta$ by first consistently estimating the function $\partial_{\theta} \logprobPop(\theta)$ for any $\theta \in \R^+$ and showing that the roots of the estimating function asymptotically converges to $\theta^*$. 

\subsection{A consistent estiator for $\partial_{\theta} \logprobPop (\theta).$}

As derived in Eq.~\eqref{eq:logprob_pop_derivative}, the derivative of the lop probability function $\logprobPop (\theta)$ is given by

\begin{equation*}
    \partial_{\theta}\logprobPop(\theta)=  \frac{ \scalprod{\frac{\betaStat}{\norm{\betaStat}}}{\Sigma(\Sigma+\theta)^{-1}\frac{\betaStat}{\norm{\betaStat}}} \cdot \stieltjesCov(-\theta) - \scalprod{\frac{\betaStat}{\norm{\betaStat}}}{\Sigma(\Sigma+\theta)^{-2}\frac{\betaStat}{\norm{\betaStat}}}}{\scalprod{\frac{\betaStat}{\norm{\betaStat}}}{\Sigma(\Sigma+\theta)^{-1}\frac{\betaStat}{\norm{\betaStat}}}}\,.
\end{equation*}

In order to consistently estimate $\partial_{\theta}\logprobPop(\theta)$, it suffices to consistently estimate the three quantities $\stieltjesCov(-\theta)$, $\scalprod{\frac{\betaStat}{\norm{\betaStat}}}{\Sigma(\Sigma+\theta)^{-1}\frac{\betaStat}{\norm{\betaStat}}}$, and $\scalprod{\frac{\betaStat}{\norm{\betaStat}}}{\Sigma(\Sigma+\theta)^{-2}\frac{\betaStat}{\norm{\betaStat}}}$. Proposition \ref{thm:consistent_Quadform} provides us with a consistent estimator for the quantity $\scalprod{\frac{\betaStat}{\norm{\betaStat}}}{\Sigma(\Sigma+\theta)^{-1}\frac{\betaStat}{\norm{\betaStat}}}$. In Propositions \ref{thm:ConsistentStieltjes} and \ref{thm:consistent_DerivativeQuadform}, we derive estimators for the remaining quantities. 
% Since we are only interested in the zeros of $\partial_{\theta}\logprobPop(\theta)$ and $\Sigma (\Sigma + \theta)^{-1} \succeq 0$, it suffices to estimate the auxiliary function $$h^{\text{pop}}(\theta) \coloneqq \frac{1}{d}\betaStat^T\Sigma(\Sigma+\theta)^{-1}\betaStat \cdot \stieltjesCov(-\theta) - \frac{1}{d}\betaStat^T\Sigma(\Sigma+\theta)^{-2}\betaStat.$$ First, we derive a consistent estimator of $\stieltjesCov(-\theta)$.

\begin{proposition}[\textbf{Estimation of Stieltjes transform}]
\label{thm:ConsistentStieltjes}
Under the assumptions of Theorem~\ref{thm:sample_covariance_matrix}, for any $\theta \in \R^+$, let $\eta$ be the unique solution in $\R^-$ satisfying $\tilde{m}(\eta) = 1/\theta$. Then, as $d, n \rightarrow \infty$ such that $d / n \rightarrow \gamma \in (0, 1)$, 

\begin{equation*}
    - \frac{1}{\gamma \theta} \left(\frac{\eta}{\theta} - \gamma + 1\right) - \stieltjesCov(-\theta) \aslim 0\,.
\end{equation*}

\end{proposition}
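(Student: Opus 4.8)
The plan is to read the claimed expression off the two fixed-point identities of Theorem~\ref{thm:sample_covariance_matrix}, by interpreting the defining relation $\tilde m(\eta)=1/\theta$ as the inversion of the reparametrization $z\mapsto-1/\stieltjesEmpkerLim(z)$ that appears on the left-hand side of Eq.~\eqref{eq:limiting_stieltjes}. Here $\tilde m$ is the Stieltjes transform of the empirical kernel matrix $\empker=\frac1nX^TX$. Since $d<n$ for $\gamma\in(0,1)$, $\empker$ is singular, so $\tilde m$ is continuous and strictly increasing on $(-\infty,0)$ with $\tilde m(z)\to0$ as $z\to-\infty$ and $\tilde m(z)\to+\infty$ as $z\to0^-$; hence $\tilde m(\eta)=1/\theta$ has a unique root $\eta\in\R^-$ for every $\theta\in\R^+$ (this is also why the statement may allow arbitrary $\theta>0$, unlike what would happen with $\empcov^{-1}$, whose trace is finite). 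By Theorem~\ref{thm:sample_covariance_matrix}, $\tilde m(z)\aslim\stieltjesEmpkerLim(z)$ for $z\in\R^-$, and $\stieltjesEmpkerLim$ is likewise strictly increasing there with range $(0,\infty)$, since $\esdEmpkerLim$ carries an atom of mass $1-\gamma$ at $0$; write $\eta_\infty\in\R^-$ for the root of $\stieltjesEmpkerLim(\cdot)=1/\theta$.

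The deterministic algebra comes first. From $\stieltjesEmpkerLim(\eta_\infty)=1/\theta$ we have $-1/\stieltjesEmpkerLim(\eta_\infty)=-\theta$, so Eq.~\eqref{eq:limiting_stieltjes} gives $\stieltjesCovLim(-\theta)=-\eta_\infty\,\stieltjesEmpcovLim(\eta_\infty)\,\stieltjesEmpkerLim(\eta_\infty)=-\eta_\infty\,\stieltjesEmpcovLim(\eta_\infty)/\theta$. Substituting $\stieltjesEmpcovLim(\eta_\infty)=\frac1\gamma\stieltjesEmpkerLim(\eta_\infty)+\frac{1-\gamma}{\gamma\eta_\infty}=\frac1{\gamma\theta}+\frac{1-\gamma}{\gamma\eta_\infty}$ from Eq.~\eqref{eq:stieltjes_cov_ker} and simplifying yields $\stieltjesCovLim(-\theta)=-\frac1{\gamma\theta}\bigl(\frac{\eta_\infty}{\theta}-\gamma+1\bigr)$, i.e.\ exactly the target expression with the random root $\eta$ replaced by its limit $\eta_\infty$.

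It then remains to collect the almost-sure convergences. On the population side, $\theta>0$ makes $\lambda\mapsto1/(\lambda+\theta)$ bounded and continuous on $[0,\infty)$, so Assumption~\ref{assumptions}\ref{ass:popcov} gives $\stieltjesCov(-\theta)\aslim\stieltjesCovLim(-\theta)$, exactly as already used in Proposition~\ref{prop:trace_inv}. On the kernel side, the a.s.\ pointwise convergence $\tilde m\aslim\stieltjesEmpkerLim$ on $\R^-$, together with strict monotonicity and continuity of the limit, upgrades to uniform convergence on compact subsets of $\R^-$ and hence, by a standard inverse-function argument, to $\eta\aslim\eta_\infty$, with $\eta$ eventually confined to a fixed compact subinterval of $\R^-$. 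Feeding $\eta\aslim\eta_\infty$ into the explicit estimator $-\frac1{\gamma\theta}\bigl(\frac\eta\theta-\gamma+1\bigr)$, continuity and the identity above give $-\frac1{\gamma\theta}\bigl(\frac\eta\theta-\gamma+1\bigr)\aslim\stieltjesCovLim(-\theta)$; subtracting $\stieltjesCov(-\theta)\aslim\stieltjesCovLim(-\theta)$ closes the argument.

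The main obstacle is this last transfer of almost-sure convergence through the implicitly and randomly defined quantity $\eta$: one must verify that $\tilde m$ is strictly monotone on $\R^-$ (so that the root is well defined and its dependence on the right-hand value is continuous), that $1/\theta$ is an interior point of the range of the limiting transform $\stieltjesEmpkerLim$ (so $\eta_\infty$ is a non-degenerate root and $\eta$ cannot drift to $0^-$ or $-\infty$), and that $\tilde m$ converges a.s.\ pointwise on $\R^-$ via Theorem~\ref{thm:sample_covariance_matrix}. Given these, the inverse-function continuity is routine, and everything else is bookkeeping on Eqs.~\eqref{eq:stieltjes_cov_ker}--\eqref{eq:limiting_stieltjes}.
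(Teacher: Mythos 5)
Your proposal is correct and follows essentially the same route as the paper: combine Eqs.~\eqref{eq:stieltjes_cov_ker} and \eqref{eq:limiting_stieltjes} into $\stieltjesCovLim(-1/\stieltjesEmpkerLim(z))=(1-\gamma-z\stieltjesEmpcovLim(z))\stieltjesEmpcovLim(z)$, substitute the root $\eta$ of $\tilde m(\eta)=1/\theta$, and use $\stieltjesCov(-\theta)\aslim\stieltjesCovLim(-\theta)$ from Assumption~\ref{assumptions}\ref{ass:popcov}. The extra work you do (monotonicity of $\tilde m$ on $\R^-$, existence and uniqueness of the root, and the inverse-function argument transferring almost-sure convergence through a possibly random $\eta$) only fills in details the paper's two-line proof leaves implicit, so it is a welcome sharpening rather than a different method.
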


\begin{proof}
From Theorem \ref{thm:sample_covariance_matrix}, we have that for any $z \in \mathbb{C}/\R^+$, $\stieltjesCovLim(-\frac{1}{\stieltjesEmpkerLim(z)}) = \left(1-\gamma-z\stieltjesEmpcovLim(z)\right)\stieltjesEmpcovLim(z).$ Letting $\eta \in \R^-$ such that $\tilde{m}(\eta) = 1/\theta,$ we arrive at the estimator.
\end{proof}

Now, we present a consistent estimator of the quadratic form $\scalprod{\frac{\betaStat}{\norm{\betaStat}}}{ \Sigma(\Sigma+\theta)^{-2} \frac{\betaStat}{\norm{\betaStat}}}.$ From Proposition \ref{thm:consistent_Quadform}, we know that for any $\theta \in \mathbb{R}^+$, $g_2(\theta)$ is a consistent estimator of $\scalprod{\frac{\betaStat}{\norm{\betaStat}}}{\Sigma(\Sigma+\theta)^{-1} \frac{\betaStat}{\norm{\betaStat}}}$. To derive an estimator of the quadratic form $\scalprod{\frac{\betaStat}{\norm{\betaStat}}}{\Sigma(\Sigma+\theta)^{-2} \frac{\betaStat}{\norm{\betaStat}}}$, we utilize the so-called derivative trick \citep{Dob:2018, Has:2019}. First observe that 

\begin{equation*}
\scalprod{\betaStat}{\Sigma(\Sigma+\theta)^{-2} \betaStat} = - \partial_{\theta} \left(\scalprod{\betaStat}{ \Sigma(\Sigma+\theta)^{-1} \betaStat}\right).
\end{equation*}
Furthermore, for every fixed $\theta \in \R^+$, we know that as $n, d \rightarrow \infty$ and $d / n \rightarrow \gamma \in (0, 1),$ 
$$g_2(\theta) - \lscalprod{\frac{\betaStat}{\norm{\betaStat}}}{ \Sigma(\Sigma+\theta)^{-1} \frac{\betaStat}{\norm{\betaStat}}} \aslim 0$$

It is also easy to verify that $g_2(\theta) - \scalprod{\frac{\betaStat}{\norm{\betaStat}}}{ \Sigma(\Sigma+\theta)^{-1} \frac{\betaStat}{\norm{\betaStat}}}$ is analytic and uniformly bounded in $\theta$ in the domain $\mathbb{R}^+$. Therefore, we can apply Vitali's convergence theorem to show that the limit of the derivatives converges to the derivative of the limit. Therefore a consistent estimator for the quadratic form $\scalprod{\frac{\betaStat}{\norm{\betaStat}}}{\Sigma(\Sigma+\theta)^{-2} \frac{\betaStat}{\norm{\betaStat}}}$ is given by $- \partial_{\theta}g_2(\theta)$ and is formally presented in Theorem \ref{thm:consistent_DerivativeQuadform}.

\begin{restatable}[Consistent estimator for quadratic form]{proposition}{consistentDerivativeQuadForm}\label{thm:consistent_DerivativeQuadform}

For any $\theta \in \R^+$, let $\eta$ be the unique solution in $\mathbb{R}^{-}$ satisfying $\tilde{m}(\eta) = 1/\theta$ and let $\eta' = 1/(\theta^2 \tilde{m}'(\eta)).$ As $d, n \rightarrow \infty$ such that $d/n \rightarrow \gamma \in (0, 1),$
\begin{equation*}
    \frac{ \frac{\eta'}{d} \scalprod{\betaMinNorm}{\empcov(\empcov+\theta)^{-2} \betaMinNorm} - \frac{S}{\theta^2} + \frac{S\eta'(1 - \gamma)}{\eta^2}}{{\frac{1}{d}\lVert\hat{\beta}\rVert^2-S\gamma m(0)}} -  \lscalprod{\frac{\betaStat}{\norm{\betaStat}}}{ \Sigma(\Sigma+\theta)^{-2} \frac{\betaStat}{\norm{\betaStat}}} \aslim 0, 
\end{equation*}
where $S = \frac{1}{(1 - \gamma) n d} \norm{Y}^2_{I - X^+X}/(nd)$. 
\end{restatable}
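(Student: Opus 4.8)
The plan is to use the derivative trick sketched in the text above. Write $Q(\theta) \coloneqq \scalprod{\frac{\betaStat}{\norm{\betaStat}}}{\Sigma(\Sigma+\theta)^{-1}\frac{\betaStat}{\norm{\betaStat}}}$ for the quantity estimated in Proposition~\ref{thm:consistent_Quadform}, so that the target of the present proposition is $\scalprod{\frac{\betaStat}{\norm{\betaStat}}}{\Sigma(\Sigma+\theta)^{-2}\frac{\betaStat}{\norm{\betaStat}}} = -\partial_\theta Q(\theta)$, and let
\[
    \widehat{Q}(\theta) \coloneqq \frac{\frac{1}{d}\scalprod{\betaMinNorm}{\empcov(\empcov-\eta)^{-1}\betaMinNorm} - \frac{S}{\theta} - \frac{S(1-\gamma)}{\eta}}{\frac{1}{d}\norm{\betaMinNorm}^2 - S\gamma m(0)} \;=\; \exp(g_2(\theta))
\]
be the consistent estimator of $Q(\theta)$ from Proposition~\ref{thm:consistent_Quadform}, where $\eta = \eta(\theta)\in\R^-$ is the solution of $\tilde{m}(\eta) = 1/\theta$ (well defined for every $\theta\in\R^+$ since $\tilde{m}$ maps $\R^-$ bijectively and increasingly onto $(0,\infty)$). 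The two steps are: (i) promote the pointwise convergence $\widehat{Q}(\theta) - Q(\theta)\aslim 0$ to convergence of the $\theta$-derivatives, and (ii) compute $-\partial_\theta\widehat{Q}(\theta)$ in closed form and identify it with the estimator displayed in the statement.

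For step (i) I would invoke Vitali's convergence theorem \citep{titchmarsh1939theory}, exactly as in the derivation preceding the statement. Fix a compact interval $K\subset\R^+$ and a bounded complex neighbourhood $\Omega$ of $K$ whose closure avoids $(-\infty,0]$. The regularity facts to verify are: that $\theta\mapsto\eta(\theta)$ extends analytically to $\Omega$ and maps it into a compact set bounded away from $[0,\infty)$ (by the inverse function theorem for $\tilde{m}$ on $\R^-$ together with the monotonicity of $\tilde{m}$ and its asymptotics $\eta(\theta)\to-\infty$ as $\theta\to\infty$, $\eta(\theta)\to 0^-$ as $\theta\to 0^+$); that the eigenvalues of $\empcov$ eventually satisfy $0 < c \le \lambda_{\min}(\empcov) \le \lambda_{\max}(\empcov) \le C < \infty$ almost surely (the lower bound uses $\gamma < 1$, as already noted for $\frac1d\Tr(\Sigma^{-1})$), so that the resolvents $(\empcov-\eta(\theta))^{-1}$ are analytic on $\Omega$ with operator norm bounded uniformly in $\theta\in\Omega$ and in $d$; and that the denominator $\frac1d\norm{\betaMinNorm}^2 - S\gamma m(0)$ is independent of $\theta$ and, as established in the proof of Proposition~\ref{thm:consistent_Quadform}, converges almost surely to the strictly positive limit of $\frac1d\norm{\betaStat}^2$, hence is eventually bounded away from $0$. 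Combined with the analogous, easier bounds on $Q$ itself — which extends analytically to $\Omega$ because the eigenvalues of $\Sigma$ eventually lie in a fixed compact subset of $(0,\infty)$ — these facts show that $\{\widehat{Q}_d - Q_d\}_d$ is an almost surely locally uniformly bounded family of analytic functions on $\Omega$. Vitali's theorem then upgrades the pointwise convergence on $K$ to locally uniform convergence on $\Omega$, so that $\partial_\theta\widehat{Q}(\theta)\aslim\partial_\theta Q(\theta)$ for every $\theta\in K$, and, $K$ being arbitrary, for every $\theta\in\R^+$; hence $-\partial_\theta\widehat{Q}(\theta)\aslim\scalprod{\frac{\betaStat}{\norm{\betaStat}}}{\Sigma(\Sigma+\theta)^{-2}\frac{\betaStat}{\norm{\betaStat}}}$.

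Step (ii) is then a routine computation. Since the denominator of $\widehat{Q}$ does not depend on $\theta$, I only differentiate its numerator. Implicit differentiation of $\tilde{m}(\eta(\theta)) = 1/\theta$ gives $\frac{d\eta}{d\theta} = -1/(\theta^2\tilde{m}'(\eta)) = -\eta'$ with $\eta'$ as defined in the statement, and resolvent calculus gives $\partial_\theta(\empcov - \eta(\theta))^{-1} = \frac{d\eta}{d\theta}(\empcov - \eta(\theta))^{-2}$. Applying these to the three $\theta$-dependent terms $\frac1d\scalprod{\betaMinNorm}{\empcov(\empcov-\eta)^{-1}\betaMinNorm}$, $-S/\theta$ and $-S(1-\gamma)/\eta$, and multiplying by $-1$, yields
\[
    -\partial_\theta\widehat{Q}(\theta) \;=\; \frac{\frac{\eta'}{d}\scalprod{\betaMinNorm}{\empcov(\empcov-\eta)^{-2}\betaMinNorm} - \frac{S}{\theta^2} + \frac{S\eta'(1-\gamma)}{\eta^2}}{\frac{1}{d}\norm{\betaMinNorm}^2 - S\gamma m(0)}\,,
\]
which is the estimator in the statement (with the resolvent $(\empcov-\eta)^{-2}$, matching the form of Proposition~\ref{thm:consistent_Quadform}); together with step (i) this proves the claim.

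The hard part is step (i): establishing the analytic continuation of $\theta\mapsto\eta(\theta)$ to a complex neighbourhood of $\R^+$ on which it stays uniformly separated from the eigenvalues of $\empcov$, together with the uniform-in-$d$ almost sure spectral edge bounds $\lambda_{\min}(\empcov)\ge c > 0$ (which needs $\gamma < 1$) and $\lambda_{\max}(\empcov)\le C < \infty$, and hence a uniform bound on $\norm{(\empcov-\eta(\theta))^{-1}}$ over $\Omega$. Everything downstream — positivity of the $\theta$-free denominator, analyticity of $Q$, and the final derivative identity — is either supplied by Proposition~\ref{thm:consistent_Quadform} or is standard resolvent calculus.
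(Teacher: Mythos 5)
Your proposal is correct and takes essentially the same route as the paper: the derivative trick applied to the estimator of Proposition~\ref{thm:consistent_Quadform}, with Vitali's convergence theorem used to exchange the almost-sure limit with $\partial_\theta$, followed by implicit differentiation of $\tilde{m}(\eta)=1/\theta$ and resolvent calculus to obtain the displayed estimator. Your computation moreover correctly produces the resolvent $(\empcov-\eta)^{-2}$ (the $(\empcov+\theta)^{-2}$ in the stated display is a typo), and your verification of the regularity conditions needed for Vitali is more explicit than the paper's brief ``easy to verify'' remark.
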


% \begin{theorem}[\textbf{Consistent estimator for quadratic form}]
% \label{thm:consistent_estimation_derivative_quad_form}
% For any $\theta \in \R^+$, let $\eta$ be the unique solution in $\mathbb{R}^{-}$ satisfying $\tilde{m}(\eta) = 1/\theta$ and let $\eta' = 1/(\theta^2 \tilde{m}'(\eta)).$ As $d, n \rightarrow \infty$ such that $d/n \rightarrow \gamma \in (0, 1),$
% \begin{equation*}
%       \left( \frac{\eta'}{d} \scalprod{\betaMinNorm}{\empcov(\empcov+\theta)^{-2} \betaMinNorm} - \frac{S}{\theta^2} + \frac{S\eta'(1 - \gamma)}{\eta^2}\right) -  \frac{1}{d}\scalprod{\betaStat}{ \Sigma(\Sigma+\theta)^{-2} \betaStat} \aslim 0, 
% \end{equation*}
% where $S = (1 - \gamma)^{-1} \norm{Y}^2_{I - X^+X}/(nd)$. 
% \end{theorem}

From Propositions \ref{thm:consistent_Quadform}, \ref{thm:ConsistentStieltjes}, and \ref{thm:consistent_DerivativeQuadform}, for any $\theta \in \R^+$, a consistent estimator of $\partial_{\theta} \logprobPop (\theta)$ is given by $$h_{\text{RMT}}(\theta) \coloneqq \frac{ \frac{ g_2(\theta)}{\gamma \theta} ( \gamma - 1 - \frac{\eta}{\theta})  - \partial_{\theta} g_2(\theta)}{g_2(\theta)}.$$

The RMT estimator for confounding strength is then naturally defined via the roots of $h_{\text{RMT}}(\theta)$ and RMT-corrected estimate of $\tauPop$ as is formally presented in Definition \ref{def:rmt_estimator} which consistently estimates the the true confounding strength $\zeta$.

\begin{definition}[RMT estimator for confounding strength]\label{def:rmt_estimator}
The \emph{RMT estimator} for confounding strength $\confRMT$ can then be defined as
\begin{align}
    \confRMT=\frac{\tauRMT\cdot\thetaRMT}{1+\tauRMT\cdot\thetaRMT}\,,\label{eq:def_rmt_estimator}
\end{align}
where $\tauRMT= (1 - \gamma) \tauPlugin$ and $\thetaRMT$ is a root of $h_{\text{RMT}}(\theta)$ if it exists and $0$ otherwise.
\end{definition}

\begin{restatable}[RMT estimator is consistent]{theorem}{consistencyRMT}\label{thm:consistency_RMT}
Let $\thetaRMT_d$ be defined as a root of $h_{\text{RMT}}(\theta)$ in some $[0, C]$ for some $C < \infty$ if it exists or $0$ otherwise. Additionally, assume that $\esdCovLim$ is not degenerate. Then, under Assumption~\ref{assumptions} with $\thetaTrue > 0$, the sequence $\{\thetaRMT_d\}$ converges a.s to $\thetaTrue.$

\end{restatable}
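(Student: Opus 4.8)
The plan is to mirror the two-part structure of the proof of Theorem~\ref{thm:consistency_pop}, but now using the RMT-corrected estimating function $h_{\text{RMT}}$ in place of the population derivative $\partial_\theta\logprobPop_d$. First I would establish the pointwise almost-sure convergence of $h_{\text{RMT}}(\theta)$, for each fixed $\theta\in[0,C]$, to the same limiting function that appears on the right-hand side of Eq.~\eqref{eq:lim_derivative_pop}, namely $(\theta-\thetaTrue)\Var_{\lambda\sim\esdCovLim}[1/(\lambda+\theta)]\,\E_{\lambda\sim\esdCovLim}[(\lambda+\thetaTrue)/(\lambda+\theta)]^{-1}$. This step is essentially a bookkeeping assembly: by construction, $g_2(\theta)$ consistently estimates $\scalprod{\betaStat/\norm{\betaStat}}{\Sigma(\Sigma+\theta)^{-1}\betaStat/\norm{\betaStat}}$ (Proposition~\ref{thm:consistent_Quadform}), $-\partial_\theta g_2(\theta)$ consistently estimates $\scalprod{\betaStat/\norm{\betaStat}}{\Sigma(\Sigma+\theta)^{-2}\betaStat/\norm{\betaStat}}$ (Proposition~\ref{thm:consistent_DerivativeQuadform}), and the term $\frac{1}{\gamma\theta}(\gamma-1-\eta/\theta)$ consistently estimates $\stieltjesCov(-\theta)$ (Proposition~\ref{thm:ConsistentStieltjes}); plugging these into the definition of $h_{\text{RMT}}$ and using that almost-sure convergence is preserved under continuous operations (division is safe because the denominator $g_2(\theta)$ converges to $\E_{\lambda\sim\esdCovLim}[\lambda/(\lambda+\theta)]/\E_{\lambda\sim\esdCovLim}[\lambda+\thetaTrue]\cdot(\cdot)$, which is bounded away from $0$ on $[0,C]$ since $\supp(\esdCovLim)\subseteq[h_1,h_2]$ with $h_1>0$) yields the claimed limit after the same algebraic simplification used in Theorem~\ref{thm:consistency_pop}.

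Second, I would upgrade this pointwise convergence to uniform convergence on $[0,C]$ and then transfer convergence of the roots. Following the proof sketch of Theorem~\ref{thm:consistency_pop}, I would verify that $h_{\text{RMT}}(\theta)$, viewed as a function of $\theta$ on a complex neighborhood of $[0,C]$, is analytic and uniformly bounded in $d$ (here one uses $h_1>0$ to keep all resolvents and the auxiliary quantities $\eta,\eta'$ well-behaved, and uses that $\tilde m$ and $\tilde m'$ are analytic with non-vanishing derivative on the relevant region so that $\eta=\eta(\theta)$ and $\eta'=\eta'(\theta)$ are analytic functions of $\theta$), so that Vitali's convergence theorem \citep{titchmarsh1939theory} promotes the pointwise a.s.\ limit to a locally uniform a.s.\ limit. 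Since $\esdCovLim$ is assumed non-degenerate, the variance factor $\Var_{\lambda\sim\esdCovLim}[1/(\lambda+\theta)]$ is strictly positive for every $\theta\in[0,C]$, so the limiting function vanishes on $[0,C]$ only at $\theta=\thetaTrue$ and changes sign there. Hurwitz's theorem \citep{titchmarsh1939theory} (or a direct sign-change argument) then forces any sequence of roots $\thetaRMT_d\in[0,C]$ to converge to $\thetaTrue$ almost surely; the fallback value $0$ is only taken when no root exists in $[0,C]$, an event which by the sign-change of the limiting function occurs only for finitely many $d$ almost surely (assuming $\thetaTrue<C$, which is guaranteed by choosing $C$ large enough, analogously to the statement of Theorem~\ref{thm:consistency_pop}).

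I expect the main obstacle to be the first step: carefully justifying that the ratio $h_{\text{RMT}}$ of three separately-consistent estimators is itself consistent, uniformly over $\theta\in[0,C]$, and that its limit simplifies to exactly the expression in Eq.~\eqref{eq:lim_derivative_pop}. The subtlety is that Propositions~\ref{thm:consistent_Quadform}, \ref{thm:ConsistentStieltjes}, and \ref{thm:consistent_DerivativeQuadform} are each stated as \emph{pointwise} (fixed-$\theta$) almost-sure limits, whereas combining them through division and then invoking Hurwitz requires control that is uniform in $\theta$ and that excludes a vanishing denominator; the $h_1>0$ lower bound on the support, together with the analyticity/boundedness hypotheses needed for Vitali, is what makes this go through, but it must be spelled out. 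The final splicing with $\tauRMT=(1-\gamma)\tauPlugin$ is then immediate: by Proposition~\ref{prop:trace_inv}, $\tauRMT-\tauPop\aslim0$, and since $\thetaRMT_d\aslim\thetaTrue$, the continuous map $(\tau,\theta)\mapsto \tau\theta/(1+\tau\theta)$ gives $\confRMT-\zeta\aslim0$ by Corollary~\ref{cor:conf_concentration}.
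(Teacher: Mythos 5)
Your proposal is correct and follows essentially the same route as the paper, whose proof of Theorem~\ref{thm:consistency_RMT} simply invokes the same arguments as Theorem~\ref{thm:consistency_pop}: pointwise almost-sure convergence of $h_{\text{RMT}}$ to the limit in Eq.~\eqref{eq:lim_derivative_pop}, upgraded to uniform convergence via Vitali, followed by a sign-change/Hurwitz argument for the roots. In fact you spell out more detail (non-vanishing denominator via $h_1>0$, analyticity of $\eta(\theta)$, and the final splicing with $\tauRMT$) than the paper itself does.
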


\section{Discussion}\label{sec:discussion}
We analyze the asymptotic behavior of the confounding strength estimator by \citet{Jan:2018} in the high-dimensional proportional regime. While the approach is consistent under population quantities, the corresponding plug-in estimator is generally biased. We correct for this bias and present a consistent estimator using tools from random matrix theory. More generally, high dimensions can help to identify the causal model, but they also warrant adapted estimators if the number of samples does not grow even faster than the dimensions.

In this work, we focus on obtaining estimators that consistently estimate the true confounding strength in the proportional asymptotic regime. An important direction for future work is to obtain non-asymptotic guarantees of convergence of the RMT estimator $\confRMT$. Obtaining convergence rates would further enhance the applicability of the RMT estimator. We leave this for future work.

Faithful estimation of confounding strength can indeed facilitate causal learning from observational data, for instance, via regularization. This has been empirically demonstrated in \citet{Jan:2019} and under the same model setting as ours, precisely characterized in \citet{Van:2022}. However, it is important to practice caution in applying such techniques more generally since causal learning or even estimation of confounding strength is a very hard problem and does require strong assumptions.

% More generally, high dimensions can help to identify the causal model, but they also warrant adapted estimators for the statistical population quantities. \leena{only if number of samples are not sufficiently large.}%, because plug-in sample estimators are not guaranteed to be consistent anymore.
% We advocate for using tools 

% \TODO{Future work: (1) other works that suffer from the same plug-in problem \citet{Jan:2017, Liu:2018} etc? (2) Experiments on real-world datasets to analyze the significance of this issue in practice}

%%%%%%%%%%%%%%%%%%%%%%%%%%%%%%%%%%%%%%%%%%%%%%%%%%%%%%%%%%%%
\printbibliography

%%%%%%%%%%%%%%%%%%%%%%%%%%%%%%%%%%%%%%%%%%%%%%%%%%%%%%%%%%%%
\newpage
\appendix

\section{Proof of Theorem~\ref{thm:consistency_pop}}\label{app:population}
This section gives the full proof of Theorem~\ref{thm:consistency_pop} for the asymptotic behavior of the population estimator for confounding strength. We state the theorem here again for reference.
\consistencyPop*

\begin{proof}
We first show Eq.~\eqref{eq:lim_derivative_pop}. According to Eq.~\eqref{eq:logprob_pop_derivative}, the function is given by $\partial_\theta\logprobPop(\theta)=\stieltjesCov(-\theta) - \frac{1}{d}\betaStat^T\Sigma(\Sigma+\theta)^{-2}\betaStat/\frac{1}{d}\betaStat^T\Sigma(\Sigma+\theta)^{-1}\betaStat$. The first term $\stieltjesCov(-\theta)$ converges to $\stieltjesCovLim(-\theta)$ by assumption. The two quadratic forms are handled by Lemma~\ref{lem:QCT} after rewriting $\betaStat=\beta+M^{+T}\alpha=\pvector{\sigma_\alpha M^{+T} & \sigma_\beta I_d}u$ for some $u\sim\mathcal{N}(0, I_{l+d})$, which is possible because by assumption $\alpha\sim\mathcal{N}(0,\sigalpha I_l)$ and $\beta\sim\mathcal{N}(0,\sigbeta I_d)$ are independent.

\begin{align*}
    \frac{1}{d}\betaStat^T\Sigma(\Sigma+\theta)^{-1}\betaStat
    &\,=\frac{1}{d}u^T\pvector{\sigma_\alpha M^{+} \\ \sigma_\beta I_d}\Sigma(\Sigma+\theta)^{-1}\pvector{\sigma_\alpha M^{+T} & \sigma_\beta I_d}u\\
    &\asapprox\frac{1}{d}\Tr\left[\pvector{\sigma_\alpha M^{+} \\ \sigma_\beta I_d}\Sigma(\Sigma+\theta)^{-1}\pvector{\sigma_\alpha M^{+T} & \sigma_\beta I_d}\right] \tag{Lemma~\ref{lem:QCT}}\\
    &\,=\frac{1}{d}\Tr\left[\Sigma(\Sigma+\theta)^{-1}\pvector{\sigma_\alpha M^{+T} & \sigma_\beta I_d}\pvector{\sigma_\alpha M^{+} \\ \sigma_\beta I_d}\right] \tag{Trace cyclic}\\
    &\,=\frac{1}{d}\Tr\left[\Sigma(\Sigma+\theta)^{-1}(\sigalpha \Sigma^{-1}+\sigbeta I_d)\right]\tag{$\Sigma=MM^T$}\\
    &\,=\frac{\sigbeta}{d}\Tr\left[(\Sigma+\theta)^{-1}(\Sigma+\thetaTrue)\right]\tag{$\thetaTrue=\sigalpha/\sigbeta$}\\
    &\xrightarrow[d\to\infty]{a.s.} \sigbeta\E_{\lambda\sim\esdCovLim}\left[\frac{\lambda+\thetaTrue}{\lambda+\theta}\right]\,. \tag{$\esdCov\to\esdCovLim$}
\end{align*}
Similarly, we get $\frac{1}{d}\betaStat^T\Sigma(\Sigma+\theta)^{-2}\betaStat\xrightarrow[d\to\infty]{a.s.} \sigbeta\E_{\lambda\sim\esdCovLim}\left[\frac{\lambda+\thetaTrue}{(\lambda+\theta)^2}\right]$.
Plugging everything together yields
\begin{align*}
    \partial_\theta\logprobPop(\theta)
    &\xrightarrow[d\to\infty]{a.s.} \stieltjesCovLim(-\theta)-\frac{\E_{\lambda\sim\esdCovLim}\left[\frac{\lambda+\thetaTrue}{(\lambda+\theta)^2}\right]}{\E_{\lambda\sim\esdCovLim}\left[\frac{\lambda+\thetaTrue}{\lambda+\theta}\right]}\\
    &\,=\left(\stieltjesCovLim(-\theta)\cdot\E_{\lambda\sim\esdCovLim}\left[\frac{\lambda+\thetaTrue}{\lambda+\theta}\right]-\E_{\lambda\sim\esdCovLim}\left[\frac{\lambda+\thetaTrue}{(\lambda+\theta)^2}\right]\right)\E_{\lambda\sim\esdCovLim}\left[\frac{\lambda+\thetaTrue}{\lambda+\theta}\right]^{-1}\,.
\end{align*}
Using $\stieltjesCovLim(-\theta)=\E_{\lambda\sim\esdCovLim}\left[\frac{1}{\lambda+\theta}\right]$ and the identity $\frac{\lambda +\thetaTrue}{\lambda+\theta}=1-(\theta-\thetaTrue)\frac{1}{1+\lambda}$, we can simplify the first factor
\begin{align*}
    &\stieltjesCovLim(-\theta)\cdot\E_{\lambda\sim\esdCovLim}\left[\frac{\lambda+\thetaTrue}{\lambda+\theta}\right]-\E_{\lambda\sim\esdCovLim}\left[\frac{\lambda+\thetaTrue}{(\lambda+\theta)^2}\right]\\
    =&\E_{\lambda\sim\esdCovLim}\left[\frac{1}{\lambda+\theta}\right]\left(1-(\theta-\thetaTrue)\E_{\lambda\sim\esdCovLim}\left[\frac{1}{\lambda+\theta}\right]\right) - \E_{\lambda\sim\esdCovLim}\left[\frac{1}{\lambda+\theta}\right] + (\theta-\thetaTrue)\E_{\lambda\sim\esdCovLim}\left[\frac{1}{(\lambda+\theta)^2}\right]\\
    =&(\theta-\thetaTrue)\left(\E_{\lambda\sim\esdCovLim}\left[\frac{1}{(\lambda+\theta)^2}\right]-\E_{\lambda\sim\esdCovLim}\left[\frac{1}{\lambda+\theta}\right]^2\right)\\
    =&(\theta-\thetaTrue)\Var_{\lambda\sim\esdCovLim}\left[\frac{1}{\lambda+\theta}\right]\,.
\end{align*}
This concludes the first part of the proof.\\

For the second statement, first observe that the almost sure convergence in Eq.~\eqref{eq:lim_derivative_pop} for each $\theta\geq 0$ implies that this convergence also holds almost surely on a countable set such as $[0,C]\cap\Q$. Since each function $\partial_\theta\logprobPop_d$ is analytic and bounded on $[0,C]$, we can further upgrade Eq.~\eqref{eq:lim_derivative_pop} to almost surely uniform convergence on $[0,C]$ by Vitali's convergence theorem.
Now let $(\thetaPop_d)_{d\in\N}$ be a sequence of roots as described in the theorem and let $F^{pop}(\theta)$ denote the function on the right hand side of Eq.~\eqref{eq:lim_derivative_pop}. First note that the functions $\partial_\theta\logprobPop_d$ eventually have a root $\thetaPop_d$ in $[0,C]$ with probability 1: since $\thetaTrue<C$, there exist $\theta_-,\theta_+$ with $0<\theta_-<\thetaTrue<\theta_+<C$ with $F^{pop}(\theta_-)<0$ and $F^{pop}(\theta_+)>0$. The convergence of the functions $\partial_\theta\logprobPop_d$ then implies that $\partial_\theta\logprobPop_d(\theta_-)<0$ and $\partial_\theta\logprobPop_d(\theta_+)>0$ eventually. Since $\partial_\theta\logprobPop_d$ is continuous, the intermediate value theorem then implies the existence of a root in $(\theta_-,\theta_+)\subset[0,C]$. The proof is concluded with Hurwitz's theorem, which states that the sequence of roots $(\thetaPop_d)_{d\in\N}$ of analytic functions converges to the unique root $\thetaTrue$ of the limiting function.
%

% \TODO{Revise} The second part for the plug-in quantities follows the same strategy, but has two differences: first, we now have the empirical quantity $\empcov$ instead of the population quantity $\Sigma$, which replaces the limiting empirical spectral distribution $\esdCovLim$ with $\esdEmpcovLim$. Second, the quadratic forms are with respect to $\betaMinNorm$ instead of $\betaStat$. To handle this term, we combine the equations $\betaMinNorm=(XX^T)^+XY$, $Y=X^T\betaStat+E$ for $E\sim\mathcal{N}(0,\sigmastat I_n)$, and $\betaStat=\beta+M^{+T}\alpha$ to obtain
\end{proof}

\section{Proof of Theorem~\ref{thm:inconsistency_plugin}}\label{app:plugin}
For the proof of Theorem~\ref{thm:inconsistency_plugin} about the asymptotic behavior of the plug-in estimator, we require additional technical statements. The first characterizes the asymptotic behavior of the statistical noise for our causal model.
\begin{lemma}[Asymptotics of the statistical noise]\label{lem:statistical_noise_limit}
Under Assumption~\ref{assumptions}, the statistical noise $\sigmastat$ concentrates as
\begin{align*}
    \frac{\sigmastat}{d}- (\tilde{\gamma}-1)\sigalpha \xrightarrow[d\to\infty]{a.s.} 0\,.
\end{align*}
\end{lemma}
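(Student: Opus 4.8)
The plan is to compute the statistical noise $\sigmastat = \sigmacaus + \norm{\alpha}^2_{I_l - M^+M}$ asymptotically by controlling the quadratic form in $\alpha$. First I would write $\norm{\alpha}^2_{I_l - M^+M} = \alpha^T(I_l - M^+M)\alpha$ and note that $I_l - M^+M$ is the orthogonal projection onto the kernel of $M$ (since $M$ has full row rank $d$, this projection has rank $l - d$). Because $\alpha \sim \Gauss{0}{\sigalpha I_l}$ by Assumption~\ref{ass:ICM}, I can substitute $\alpha = \sqrt{\sigalpha}\, u$ with $u \sim \Gauss{0}{I_l}$, so that $\alpha^T(I_l - M^+M)\alpha = \sigalpha \, u^T(I_l - M^+M)u$, which is a random quadratic form of the type handled by Lemma~\ref{lem:QCT}.

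The main step is then to apply Lemma~\ref{lem:QCT} to conclude that $\frac{1}{l}u^T(I_l - M^+M)u - \frac{1}{l}\Tr(I_l - M^+M) \aslim 0$, which requires the operator norm bound $\norm{I_l - M^+M} \leq 1$ (true, as it is a projection) and the eighth-moment condition on the entries of $u$ (true for Gaussians). Since $\Tr(I_l - M^+M) = l - d$, this gives $\frac{1}{l}\alpha^T(I_l - M^+M)\alpha - \sigalpha\frac{l-d}{l} \aslim 0$, and hence $\frac{1}{d}\norm{\alpha}^2_{I_l - M^+M} - \sigalpha\frac{l-d}{d} \aslim 0$. Using Assumption~\ref{ass:dimensions}, $\frac{l-d}{d} = \frac{l}{d} - 1 \to \tilde{\gamma} - 1$, so $\frac{1}{d}\norm{\alpha}^2_{I_l - M^+M} - (\tilde{\gamma}-1)\sigalpha \aslim 0$. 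Finally, since $\sigmacaus$ is a fixed nonnegative constant, $\frac{\sigmacaus}{d} \to 0$, and adding this to the previous display yields $\frac{\sigmastat}{d} - (\tilde{\gamma}-1)\sigalpha \aslim 0$.

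I do not expect a serious obstacle here; the only mild subtlety is bookkeeping the normalization, since Lemma~\ref{lem:QCT} concentrates $\frac{1}{l}u^TAu$ around $\frac{1}{l}\Tr A$ with error vanishing in $l$ (equivalently in $d$, since $l/d \to \tilde\gamma \in [1,\infty)$), whereas the target normalizes by $d$ rather than $l$; multiplying through by $l/d$, which converges to the finite constant $\tilde\gamma$, preserves the almost sure convergence. One should also double-check that $\esdCov$ converging (Assumption~\ref{ass:popcov}) is not actually needed for this particular lemma — indeed it is not, as the argument only uses the projection structure of $I_l - M^+M$ and the Gaussianity of $\alpha$, not the spectrum of $\Sigma = MM^T$.
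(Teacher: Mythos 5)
Your proposal is correct and follows essentially the same route as the paper's proof: decompose $\sigmastat=\sigmacaus+\alpha^T(I_l-M^+M)\alpha$, apply Lemma~\ref{lem:QCT} to concentrate the quadratic form around $\sigalpha\Tr(I_l-M^+M)=\sigalpha(l-d)$, and use $l/d\to\tilde\gamma$ together with $\sigmacaus/d\to 0$. Your extra bookkeeping of the $l$-versus-$d$ normalization and the observation that only Assumptions~\ref{ass:ICM} and~\ref{ass:dimensions} are actually needed are fine refinements of the same argument.
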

\begin{proof}
According to Proposition~\ref{lem:observational_dist}, the statistical noise is given by $\sigmastat=\sigmacaus+\norm{\alpha}_{I_l-M^+M}^2$. The term $\sigmacaus$ is assumed to be constant, but the quadratic form $\norm{\alpha}_{I_l-M^+M}^2$ grows with $d$ and is controlled by Lemma~\ref{lem:QCT} as
\begin{align*}
    \frac{\sigmastat}{d}
    =\frac{\sigmacaus}{d} + \frac{1}{d}\alpha^T(I_l-M^+M)\alpha 
    \asapprox \frac{\Tr(I_l-M^+M)}{d} \sigalpha
    &= \frac{(l-\Tr(MM^+))}{d} \sigalpha\\
    &=\frac{l-d}{d}\sigalpha\\
    &=(\tilde{\gamma}-1)\sigalpha\,.
\end{align*}
\end{proof}

The second technical lemma covers the asymptotic behavior of traces that involve both the sample covariance matrix $\empcov$ and the population covariance matrix $\Sigma$:

\begin{lemma}[Asymptotics of mixed terms]\label{lem:mixed_traces}
Under Assumption~\ref{assumptions}, it holds for any $\theta\geq 0$ that
\begin{align*}
    \frac{1}{d}\Tr\left[\left(\empcov+\theta\right)^{-1}\empcov\Sigma^+\right] &\xrightarrow[d\to\infty]{a.s.} \gamma\theta m(-\theta)^2 + (1-\gamma)m(-\theta)
    \intertext{and}
    \frac{1}{d}\Tr\left[\left(\empcov+\theta\right)^{-2}\empcov\Sigma^+\right] &\xrightarrow[d\to\infty]{a.s.}-\gamma m(-\theta)^2 +2\gamma\theta m(-\theta)M(-\theta) + (1-\gamma)M(-\theta)\,,
\end{align*}
where $m(-\theta)=\E_{\lambda\sim\mu}\left[\frac{1}{\lambda+\theta}\right]$ and $M(-\theta)=\E_{\lambda\sim\mu}\left[\frac{1}{(\lambda+\theta)^2}\right]$.
\end{lemma}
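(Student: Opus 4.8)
\textbf{Proof plan for Lemma~\ref{lem:mixed_traces}.}

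The plan is to reduce both limits to known spectral identities for the sample covariance matrix by decomposing the mixed traces into pieces that are either standard Stieltjes-transform quantities or vanishing. First I would observe that $\empcov = \frac{1}{n}XX^T$ with $X = MZ$ for $Z \in \R^{l\times n}$ having i.i.d.\ standard normal entries, so that $\empcov$ and $\Sigma = MM^T$ are linked through the Mar\u{c}enko--Pastur type structure of Theorem~\ref{thm:sample_covariance_matrix}. The key algebraic step is to write $\empcov \Sigma^+ = I_d - \theta(\empcov+\theta)^{-1} \cdot (\text{correction})$ is not quite available, so instead I would use the resolvent identity in a form that isolates $\Sigma^+$: namely $\empcov \Sigma^+ = (\empcov - \Sigma)\Sigma^+ + I_d$, turning
\begin{align*}
\frac{1}{d}\Tr\left[(\empcov+\theta)^{-1}\empcov\Sigma^+\right] = \frac{1}{d}\Tr\left[(\empcov+\theta)^{-1}\right] + \frac{1}{d}\Tr\left[(\empcov+\theta)^{-1}(\empcov-\Sigma)\Sigma^+\right]\,.
\end{align*}
The first term converges to $m(-\theta)$ by Theorem~\ref{thm:sample_covariance_matrix}. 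For the second term, the natural tool is a deterministic-equivalent result for quantities of the form $\frac{1}{d}\Tr[(\empcov+\theta)^{-1} A \empcov B]$ with deterministic $A,B$; this is exactly the type of statement attributed to \citet{Led:2011} and invoked in the proof sketch of Theorem~\ref{thm:inconsistency_plugin}. Using that result with $A = \Sigma^+$ (or $B = \Sigma^+$) and tracking the scalar coefficients, one gets a closed form in $m(-\theta)$ and $\gamma$; matching it against the claimed answer $\gamma\theta m(-\theta)^2 + (1-\gamma)m(-\theta)$ is then a bookkeeping computation.

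An alternative, and probably cleaner, route I would pursue in parallel is to work through the companion kernel matrix. Since $X = MZ$, one has $X^+X = Z^+(M^+M)Z$ up to rank corrections, and $\empcov\Sigma^+ = \frac{1}{n}MZZ^TM^T (MM^T)^+ = \frac{1}{n}M(ZZ^T)M^+$. Hence $\Tr[(\empcov+\theta)^{-1}\empcov\Sigma^+] = \frac{1}{n}\Tr[(\empcov+\theta)^{-1}M(ZZ^T)M^+]$, and after a cyclic rearrangement this becomes a trace against the resolvent of $\empker = \frac{1}{n}Z^TM^TMZ$ of a matrix built from $Z$ and $M^+M$. Conditioning on $M$ and using Lemma~\ref{lem:QCT} column-by-column on $Z$, together with the rank-one perturbation lemma for resolvents, this should again produce the two scalar contributions: a $(1-\gamma)m(-\theta)$ piece from the bulk and a $\gamma\theta m(-\theta)^2$ piece from the interaction of the resolvent with $\empcov$ inside the trace. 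The constants $\gamma$ and $1-\gamma$ enter precisely through the $|n-d|$ zero eigenvalues relating $\empcov$ and $\empker$, mirroring Eq.~\eqref{eq:stieltjes_cov_ker}.

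For the second identity, the plan is simply the derivative trick: $\frac{1}{d}\Tr[(\empcov+\theta)^{-2}\empcov\Sigma^+] = -\partial_\theta \left(\frac{1}{d}\Tr[(\empcov+\theta)^{-1}\empcov\Sigma^+]\right)$, so differentiating the already-established limit $\gamma\theta m(-\theta)^2 + (1-\gamma)m(-\theta)$ in $\theta$ and using $\partial_\theta m(-\theta) = -M(-\theta)$ gives $-\gamma m(-\theta)^2 + 2\gamma\theta m(-\theta)M(-\theta) + (1-\gamma)M(-\theta)$, which is exactly the claimed expression. To make this rigorous I would check that the convergence of the $\theta$-indexed family of traces is locally uniform (analyticity plus uniform boundedness of the resolvents on compact subsets of $(0,\infty)$, then Vitali's theorem as used elsewhere in the paper), so that interchanging limit and derivative is justified. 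The main obstacle I anticipate is the second term $\frac{1}{d}\Tr[(\empcov+\theta)^{-1}(\empcov-\Sigma)\Sigma^+]$: naively $\empcov - \Sigma \to 0$ only in a weak spectral sense, not in operator norm, so one genuinely needs the deterministic-equivalent machinery (Ledoit--Péché / Lemma~\ref{lem:mixed_traces}'s cited source) rather than a crude bound, and getting the scalar coefficient $\gamma\theta$ out of it correctly — as opposed to an off-by-$\gamma$ factor of the kind that plagues the plug-in estimator elsewhere in the paper — is where the care is required.
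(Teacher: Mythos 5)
Your second half (the derivative trick plus Vitali to exchange limit and derivative, with the correct sign $\partial_\theta m(-\theta)=-M(-\theta)$) is exactly how the paper handles the second limit, and is fine. The gap is in your first limit. The decomposition $\empcov\Sigma^+=(\empcov-\Sigma)\Sigma^+ + I_d$ is circular: the remainder term $\frac{1}{d}\Tr[(\empcov+\theta)^{-1}(\empcov-\Sigma)\Sigma^+]$ is identically the original mixed trace minus the term $\frac{1}{d}\Tr[(\empcov+\theta)^{-1}]$ you just evaluated, so nothing has been reduced; and the deterministic equivalent you then invoke, for quantities of the form $\frac{1}{d}\Tr[(\empcov+\theta)^{-1}A\empcov B]$, is not what the cited Ledoit--P\'ech\'e result provides (it covers $\frac{1}{d}\Tr[(\empcov-z)^{-1}g(\Sigma)]$ for a function $g$ of $\Sigma$ alone, with no extra $\empcov$ factor sandwiched in). Establishing such a generalized equivalent is essentially the content of the lemma you are trying to prove, so as written the plan assumes what it needs to show. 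The missing idea is the elementary identity $(\empcov+\theta)^{-1}\empcov=I_d-\theta(\empcov+\theta)^{-1}$, which gives $\frac{1}{d}\Tr[(\empcov+\theta)^{-1}\empcov\Sigma^+]=\frac{1}{d}\Tr[\Sigma^{-1}]-\theta\,\frac{1}{d}\Tr[(\empcov+\theta)^{-1}\Sigma^{-1}]$; the second piece is precisely the Ledoit--P\'ech\'e form with $g(\lambda)=1/\lambda$, and a partial-fraction identity together with Eqs.~\eqref{eq:stieltjes_cov_ker} and \eqref{eq:limiting_stieltjes} converts the resulting limit $\frac{1}{z}\stieltjesCovLim(-1/\stieltjesEmpkerLim(z))-\frac{1}{z}\stieltjesCovLim(0)$ into $-\gamma m(-\theta)^2-\frac{1-\gamma}{\theta}m(-\theta)+\frac{1}{\theta}\stieltjesCovLim(0)$ at $z=-\theta$, after which the $\stieltjesCovLim(0)$ contributions cancel against $\frac{1}{d}\Tr[\Sigma^{-1}]\to\stieltjesCovLim(0)$ and the claimed coefficients $\gamma\theta m(-\theta)^2+(1-\gamma)m(-\theta)$ drop out. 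This is the step you flagged as ``bookkeeping'' but never supplied, and it is where the content lies.

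Your alternative route through $\empcov\Sigma^+=\frac{1}{n}MZZ^TM^+$ and a leave-one-out/rank-one-perturbation argument conditioned on $M$ is in principle viable and would amount to re-deriving the Ledoit--P\'ech\'e-type equivalent from scratch, but as sketched it does not engage with the main difficulty: the resolvent $(\empcov+\theta)^{-1}$ depends on every column of $Z$, so Lemma~\ref{lem:QCT} cannot be applied column-by-column without the full interlacing/self-consistency bookkeeping, and the claim that the $\gamma\theta m(-\theta)^2$ term ``should'' emerge from this is exactly the part that needs proof. So the proposal identifies the right external tool and the right closing argument, but the core reduction that makes the tool applicable is missing.
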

\begin{proof}
The asymptotic behavior of these quadratic forms is not covered by Theorem~\ref{thm:sample_covariance_matrix}, because the dependencies between $\empcov$ and $\Sigma$ create complications. To treat these we require an additional result by \citet{Led:2011} combined with Vitali's convergence theorem which, in our notation, states that
\begin{align*}
    \frac{1}{d}\Tr\left((\empcov - z)^{-1}g(\Sigma)\right)\xrightarrow[d\to\infty]{a.s.}-\frac{1}{z}\E_{\lambda\sim\nu}\left[\frac{g(\lambda)}{\stieltjesEmpkerLim(z)\lambda + 1}\right]\,.
\end{align*}
We first use this result to obtain the limit for $\frac{1}{d}\Tr\left((\empcov-z)^{-1}\Sigma^+\right)$ by considering $g(\lambda)=1/\lambda$ and the identity 
\begin{align*}
    -\frac{1}{z\lambda}\frac{1}{\stieltjesEmpkerLim(z)\lambda+1}=\frac{1}{z}\left(\frac{1}{\lambda-\left(-\frac{1}{\stieltjesEmpkerLim(z)}\right)}-\frac{1}{\lambda}\right)\,,
\end{align*}
which yields
\begin{align*}
    \frac{1}{d}\Tr\left((\empcov-z)^{-1}\Sigma^+\right)\xrightarrow[d\to\infty]{a.s.}
    \E_{\lambda\sim\nu}\left[-\frac{1}{z\lambda}\frac{1}{\stieltjesEmpkerLim(z)\lambda + 1}\right]
    = \frac{1}{z}\stieltjesCovLim\left(-\frac{1}{\stieltjesEmpkerLim(z)}\right) - \frac{1}{z}\stieltjesCovLim(0)\,,
\end{align*}
where we recall that $\stieltjesCovLim(z)=\E_{\lambda\sim\nu}[\frac{1}{\lambda-z}]$. To relate the population Stieltjes transform $\stieltjesCovLim$ back to the sample Stieltjes transforms $\stieltjesEmpcovLim$ and $\stieltjesEmpkerLim$, we can use the identities from Theorem~\ref{thm:sample_covariance_matrix} to obtain
\begin{align*}
    \frac{1}{d}\Tr\left((\empcov-z)^{-1}\Sigma^+\right)\xrightarrow[d\to\infty]{a.s.}&
    -\gamma \stieltjesEmpcovLim(z)\stieltjesEmpkerLim(z) - \frac{1}{z}\stieltjesCovLim(0) \tag{Eq.~\eqref{eq:limiting_stieltjes}}\\
    =&-\gamma \stieltjesEmpcovLim(z)^2 + \frac{1-\gamma}{z}\stieltjesEmpcovLim(z) - \frac{1}{z}\stieltjesCovLim(0) \tag{Eq.~\eqref{eq:stieltjes_cov_ker}}\,.
\end{align*}
Evaluating the above expression at $z=-\theta$ then yields
\begin{align*}
    \frac{1}{d}\Tr\left((\empcov+\theta)^{-1}\Sigma^+\right) \xrightarrow[d\to\infty]{a.s.}
    -\gamma \stieltjesEmpcovLim(-\theta)^2 - \frac{1-\gamma}{\theta}\stieltjesEmpcovLim(-\theta) + \frac{1}{\theta}\stieltjesCovLim(0)\,.
\end{align*}
All that remains is to relate $(\empcov+\theta)^{-1}\Sigma^+$ to the terms we are interested in. Using the identity $(\empcov+\theta)^{-1}\empcov=I-\theta(\empcov+\theta)^{-1}$, we get the first statement of this lemma
\begin{align*}
    \frac{1}{d}\Tr\left[\left(\empcov+\theta\right)^{-1}\empcov\Sigma^+\right]
    =&\frac{1}{d}\Tr\left[\Sigma^+\right] - \theta \frac{1}{d}\Tr\left[\left(\empcov+\theta\right)^{-1}\Sigma^+\right]\\
    \xrightarrow[d\to\infty]{a.s.}& \stieltjesCovLim(0) -\theta\left(-\gamma \stieltjesEmpcovLim(-\theta)^2 - \frac{1-\gamma}{\theta}\stieltjesEmpcovLim(-\theta) + \frac{1}{\theta}\stieltjesCovLim(0)\right)\\
    =& \gamma\theta m(-\theta)^2 + (1-\gamma)m(-\theta)\,.
\end{align*}
The second statement of this lemma also follows directly by taking the derivative, which can be exchanged with the limit $d\to\infty$ using similar arguments as in the main paper after Proposition~\ref{thm:ConsistentStieltjes}, to obtain
\begin{align*}
    \frac{1}{d}\Tr\left[\left(\empcov+\theta\right)^{-2}\empcov\Sigma^+\right]
    =&-\partial_\theta \frac{1}{d}\Tr\left[\left(\empcov+\theta\right)^{-1}\empcov\Sigma^+\right]\\
    \xrightarrow[d\to\infty]{a.s.}~&-\partial_\theta \left(\gamma\theta m(-\theta)^2 + (1-\gamma)m(-\theta)\right)\\
    =&-\gamma m(-\theta)^2 +2\gamma\theta m(-\theta)M(-\theta) + (1-\gamma)M(-\theta)\,,
\end{align*}
where the last step used $\partial_\theta m(-\theta)=M(-\theta)$.
\end{proof}

We are now ready to give the full proof of Theorem~\ref{thm:inconsistency_plugin}.

\inconsistencyPlugin*
\begin{proof}
We first show Eq.~\eqref{eq:lim_derivative_plugin}.
This proof for the plug-in quantities $\empcov, \betaMinNorm$ follows the same strategy as the proof of Theorem~\ref{thm:consistency_pop} for $\Sigma, \betaStat$, but additional complications arise because $\betaMinNorm$ asymptotically depends on both the population term $M$ and the empirical quantities.
Similarly as for $\betaStat$, we treat $\betaMinNorm$ by combining the equations $\betaMinNorm=(XX^T)^+XY$, $Y=X^T\betaStat+E$ for $E\sim\mathcal{N}(0,\sigmastat I_n)$, and $\betaStat=\beta+M^{+T}\alpha$ to obtain
\begin{align*}
    \betaMinNorm = \pvector{\sigma_\alpha M^{+T} & \sigma_\beta I_d & \tilde{\sigma}(XX^T)^+X}v\quad\text{for some }v\sim\mathcal{N}(0, I_{l+d+n})\,.
\end{align*} 

As before, we get for $k\in\{1,2\}$ that

\begin{align*}
    &\hspace{15pt}\frac{1}{d}\betaMinNorm^T\empcov(\empcov+\theta)^{-k}\betaMinNorm\\
    &\,=\frac{1}{d}v^T\pvector{\sigma_\alpha M^{+} \\ \sigma_\beta I_d \\ \tilde{\sigma}X^T(XX^T)^+}\empcov(\empcov+\theta)^{-k}\pvector{\sigma_\alpha M^{+T} & \sigma_\beta I_d & \tilde{\sigma}(XX^T)^+X}v\\
    &\asapprox\frac{1}{d}\Tr\left[\pvector{\sigma_\alpha M^{+} \\ \sigma_\beta I_d \\ \tilde{\sigma}X^T(XX^T)^+}\empcov(\empcov+\theta)^{-k}\pvector{\sigma_\alpha M^{+T} & \sigma_\beta I_d & \tilde{\sigma}(XX^T)^+X}\right] \tag{Lemma~\ref{lem:QCT}}\\
    &\,=\frac{1}{d}\Tr\left[\empcov(\empcov+\theta)^{-k}\pvector{\sigma_\alpha M^{+T} & \sigma_\beta I_d & \tilde{\sigma}(XX^T)^+X}\pvector{\sigma_\alpha M^{+} \\ \sigma_\beta I_d \\ \tilde{\sigma}X^T(XX^T)^+}\right] \tag{Trace cyclic}\\
    &\,=\frac{1}{d}\Tr\left[\empcov(\empcov+\theta)^{-k}\left(\sigalpha\Sigma^++\sigbeta I_d+\frac{\sigmastat}{n} \empcov^{-1}\right)\right]\\
    &\,=\frac{1}{d}\Tr\left[\empcov(\empcov+\theta)^{-k}\left(\sigalpha\Sigma^++\sigbeta I_d+\gamma(\tilde{\gamma}-1)\sigalpha \empcov^{-1}\right)\right]\tag{Lemma~\ref{lem:statistical_noise_limit}}\\
    &\,=\frac{\sigbeta}{d}\Tr\left[(\empcov+\theta)^{-k}(\empcov+\gamma(\tilde{\gamma}-1)\thetaTrue)\right] + \thetaTrue \frac{\sigbeta}{d}\Tr\left[(\empcov+\theta)^{-k}\empcov\Sigma^+\right]\,.
\end{align*}
The second term contains both the population term $\Sigma$ and the sample term $\empcov$, which is treated separately in Lemma~\ref{lem:mixed_traces}. For readability, we use the shorthand notation $m=\E_{\lambda\sim\mu}\left[\frac{1}{\lambda+\theta}\right]$ and $M=\E_{\lambda\sim\mu}\left[\frac{1}{(\lambda+\theta)^2}\right]$, under which the limit for the first term is given by
\begin{align*}
    \frac{1}{d}\Tr\left[(\empcov+\theta)^{-k}(\empcov+\gamma(\tilde{\gamma}-1)\thetaTrue)\right]\xrightarrow[d\to\infty]{a.s.}
    \begin{cases}
        1-\theta m + \gamma(\tilde{\gamma}-1)\thetaTrue m, & \text{for }k=1 \\
        m-\theta M + \gamma(\tilde{\gamma}-1)\thetaTrue M, & \text{for }k=2
    \end{cases}\,.
\end{align*}
Combined with Lemma~\ref{lem:mixed_traces}, this yields
\begin{align*}
    \frac{1}{d}\betaMinNorm^T\empcov(\empcov+\theta)^{-k}\betaMinNorm\xrightarrow[d\to\infty]{a.s.}
    \begin{cases}
        1-\theta m + \thetaTrue(\gamma\theta m^2 + (1-2\gamma+\gamma\tilde{\gamma})m), & \text{for }k=1 \\
        m-\theta M + \thetaTrue (-\gamma m^2 +2\gamma\theta mM + (1-2\gamma+\gamma\tilde{\gamma})M), & \text{for }k=2
    \end{cases}\,.    
    % \begin{cases}
    %     1-\theta m + \gamma(\tilde{\gamma}-1)\thetaTrue m + \thetaTrue(\gamma\theta m(-\theta)^2 + (1-\gamma)m(-\theta)), & \text{for }k=1 \\
    %     m-\theta M + \gamma(\tilde{\gamma}-1)\thetaTrue M + \thetaTrue (-\gamma m(-\theta)^2 +2\gamma\theta m(-\theta)M(-\theta) + (1-\gamma)M(-\theta)), & \text{for }k=2
    % \end{cases}\,.
\end{align*}
Together with $m_{\empcov}(-\theta)\xrightarrow[d\to\infty]{a.s.}m$, this covers the individual components of $\partial_\theta\logprobPlugin(\theta)=\stieltjesEmpcov(-\theta) - \frac{1}{d}\betaMinNorm^T\empcov(\empcov+\theta)^{-2}\betaMinNorm / \frac{1}{d}\betaMinNorm^T\empcov(\empcov+\theta)^{-1}\betaMinNorm$. It remains to plug everything in, which we do after factoring out the denominator $\frac{1}{d}\betaMinNorm^T\empcov(\empcov+\theta)^{-1}\betaMinNorm$ to obtain
\begin{align*}
    &\stieltjesEmpcov(-\theta)\cdot \frac{1}{d}\betaMinNorm^T\empcov(\empcov+\theta)^{-1}\betaMinNorm - \frac{1}{d}\betaMinNorm^T\empcov(\empcov+\theta)^{-2}\betaMinNorm\\
    \xrightarrow[d\to\infty]{a.s.}& m \cdot \left[1-\theta m + \thetaTrue(\gamma\theta m^2 + (1-2\gamma+\gamma\tilde{\gamma})m)\right] - \left[m-\theta M + \thetaTrue (-\gamma m^2 +2\gamma\theta mM + (1-2\gamma+\gamma\tilde{\gamma})M)\right]\\
    =&(\theta - (1-2\gamma+\gamma\tilde{\gamma})\thetaTrue)\cdot(M-m^2) + \gamma\thetaTrue \left(\theta m^3 + m^2 - 2\theta m M\right)\\
    =&(\theta - (1-2\gamma+\gamma\tilde{\gamma})\thetaTrue)\cdot(M-m^2) +\gamma\thetaTrue (2m^2-2M - (1-\theta m)m^2 + 2(1-\theta m )M)\\
    =&(\theta - (1+\gamma\tilde{\gamma})\thetaTrue)\cdot(M-m^2) + \gamma\thetaTrue (1-\theta m)(2M-m^2)\\
    =&\left[\theta - (1+\gamma\tilde{\gamma})\thetaTrue + \gamma\thetaTrue(1-\theta m)(1+\frac{M}{M-m^2})\right]\cdot (M-m^2)\,,
\end{align*}
which concludes the first part of the proof.\\

For the second statement, observe that Eq.~\eqref{eq:plugin_consistency_cond} is equivalent to $F^{plg}(\thetaTrue)=0$, where $F^{plg}$ is the function on the right hand side of Eq.~\eqref{eq:lim_derivative_plugin}. The assumption in this theorem therefore states that $F^{plg}(\thetaTrue)\neq 0$. 
Let $(\thetaPlugin_d)_{d\in\N}$ be the sequence described in the theorem. In the case where $\partial_\theta\logprobPlugin_d$ does not have a root infinitely often, we have $\thetaPlugin_d=0$ infinitely often and therefore $\thetaPlugin_d\not\to \thetaTrue$ as $d\to\infty$ since $\thetaTrue\neq 0$. Therefore, now assume that $\thetaPlugin_d$ is a root of $\partial_\theta\logprobPlugin_d$ eventually.
Assume that the claim is false, that is, $\thetaPlugin_d\xrightarrow[d\to\infty]{}\thetaTrue$ with positive probability. 
Similarly to the proof of Theorem~\ref{thm:consistency_pop}, we get that the convergence in Eq.~\eqref{eq:lim_derivative_plugin} holds almost surely uniformly on $[0,C]$ for some $C>\thetaTrue$. The convergence $\thetaPlugin_d\to\thetaTrue$ also implies that $\thetaPlugin_d\in[0,C]$ eventually. Putting everything together, we get for sufficiently large $d$ that 
\begin{align*}
    \abs{F^{plg}(\thetaTrue)}
    &=\abs{F^{plg}(\thetaTrue) - \partial_\theta\logprobPlugin_d(\thetaPlugin_d)} \tag{$\partial_\theta\logprobPlugin_d(\thetaPlugin_d)=0$}\\
    &\leq \abs{\partial_\theta\logprobPlugin_d(\thetaPlugin_d) - F^{plg}(\thetaPlugin_d)} + \abs{F^{plg}(\thetaPlugin_d) - F^{plg}(\thetaTrue)}\\
    &\leq \sup_{\theta\in[0,C]}\abs{\partial_\theta\logprobPlugin_d(\theta) - F^{plg}(\theta)} + \abs{F^{plg}(\thetaPlugin_d) - F^{plg}(\thetaTrue)}\\
    &\xrightarrow[d\to\infty]{}0\,,
\end{align*}
where the first summand goes to 0 by uniform convergence and the second summand goes to 0 by continuity of $F^{plg}$ and $\thetaPlugin_d\to\thetaTrue$. This implies $F^{plg}(\thetaTrue)=0$, which is a contradiction.
\end{proof}

\section{RMT consistent estimators for quantitites of interest}
\label{sec:rmt_proofs}

% \begin{lemma}[\textbf{Trace of $A A^+$}]
% \label{lem:trace_A_Aplus}
% For any $A \in \mathbb{R}^{n \times d}$, the trace of $A A^+$ is given by the rank of $A$.
% \end{lemma}

% \begin{proof}
% \TODO{Move to background section}
% \end{proof}

% \begin{lemma}[\textbf{Quadratic form close to trace}]
% \label{lem:quadform_close_trace}
% \TODO{Cite Bai and Silverstein book}
% For any $x \in \mathbb{R}^d$ with $\Expec{x} = 0$ and $\Cov[x] = I_d$ and for $A \in \mathbb{R}^{d \times d}$ with operator norm bounded by $1$, \TODO{Verify.} 
% \begin{equation*}
%     \frac{1}{d} x^T A x - \frac{1}{d} \Tr(A) \rightarrow 0 \quad \textrm{almost surely}.
% \end{equation*}

% \end{lemma}

\begin{theorem}[\textbf{Consistent estimation of statistical noise}]
\label{thm:statisticalNoise}
Under the model in Eq.~\eqref{eq:causal_model}, 
\begin{equation*}
    \frac{1}{1-\gamma}\frac{\norm{ Y}^2_{I-X^+X}}{nd} - \frac{\sigmastat}{d} \stackrel{a.s}{\longrightarrow} 0\,.
\end{equation*}

\end{theorem}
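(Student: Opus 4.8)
The plan is to show that $\norm{Y}^2_{I-X^+X}/(nd)$, suitably rescaled by $(1-\gamma)^{-1}$, concentrates around $\sigmastat/d$. The starting point is the decomposition $Y = X^T\betaStat + E$ with $E\sim\mathcal{N}(0,\sigmastat I_n)$, which comes from the observational model (Lemma~\ref{lem:observational_dist}). Since $I - X^+X$ is the orthogonal projector onto the kernel of $X$ (equivalently, onto $\ker(X^TX)$), we have $(I-X^+X)X^T = 0$, so the $X^T\betaStat$ term is annihilated and $\norm{Y}^2_{I-X^+X} = \norm{E}^2_{I-X^+X} = E^T(I-X^+X)E$. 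Conditioning on $X$, this is a quadratic form in the Gaussian vector $E$ with the projector matrix $I-X^+X$, which has rank $n-d$ almost surely (since $X\in\R^{d\times n}$ has rank $d$ a.s. under Assumption~\ref{assumptions}).

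\medskip
\noindent The second step is to apply the quadratic-form-close-to-the-trace lemma (Lemma~\ref{lem:QCT}) to $E/\sqrt{\sigmastat}$, whose entries are i.i.d.\ standard normal, hence have bounded eighth moments. With $A = I - X^+X$ we have $\norm{A}\le 1$ and $\Tr(A) = n-d$, so Lemma~\ref{lem:QCT} gives
\begin{align*}
    \frac{1}{n}E^T(I-X^+X)E - \frac{\sigmastat}{n}\Tr(I-X^+X) = \frac{\sigmastat}{n}\left(\frac{1}{n}\tilde E^T A \tilde E - \frac{1}{n}\Tr A\right) \xrightarrow[]{a.s.} 0,
\end{align*}
where $\tilde E = E/\sqrt{\sigmastat}$; one has to be slightly careful here because $\sigmastat$ itself grows linearly in $d$ (by Lemma~\ref{lem:statistical_noise_limit}), so it is cleaner to divide by $nd$ throughout and track that $\sigmastat/d$ stays bounded. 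Concretely, $\tfrac{1}{nd}E^T(I-X^+X)E - \tfrac{\sigmastat}{d}\cdot\tfrac{n-d}{n} \xrightarrow{a.s.} 0$. Since $(n-d)/n \to 1-\gamma$ by Assumption~\ref{ass:dimensions}, and $\sigmastat/d$ is bounded along the sequence, this yields $\tfrac{1}{nd}\norm{Y}^2_{I-X^+X} - (1-\gamma)\tfrac{\sigmastat}{d}\xrightarrow{a.s.}0$, which is the claim after dividing by $1-\gamma$.

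\medskip
\noindent A subtlety worth handling carefully: Lemma~\ref{lem:QCT} as stated is for a \emph{fixed} deterministic matrix $A$, whereas here $A = I - X^+X$ is random and correlated with nothing (since $E$ is independent of $X$). The clean route is to condition on $X$, apply the moment bound $\E[(\tilde E^T A \tilde E - \Tr A)^4 \mid X] \le C(n-d)^2 \le Cn^2$ uniformly over the realization of $X$ (using $\norm{A}\le1$ and $\Tr(AA^T)=\Tr A = n-d$), then take expectation over $X$ and invoke Borel--Cantelli exactly as in the proof of Lemma~\ref{lem:QCT}. I expect this conditioning argument to be the only real obstacle; the rest is the same trace-concentration bookkeeping already used repeatedly in the paper (e.g.\ in Corollary~\ref{cor:conf_concentration} and Lemma~\ref{lem:statistical_noise_limit}), together with the elementary projector identity $(I-X^+X)X^T=0$.
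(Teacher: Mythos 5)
Your proof is correct, and it is a cleaner route than the one the paper takes. You exploit the exact identity $(I-X^+X)X^T=0$ so that the signal term is annihilated identically, leaving $\norm{Y}^2_{I-X^+X}=E^T(I-X^+X)E$, and then you need only one (conditional on $X$) application of Lemma~\ref{lem:QCT} to the rank-$(n-d)$ projector, plus boundedness of $\sigmastat/d$ from Lemma~\ref{lem:statistical_noise_limit} and $(n-d)/n\to 1-\gamma$; your remark on how to handle the randomness of $A=I-X^+X$ by conditioning and a uniform fourth-moment bound is exactly the right fix, and the paper itself uses this "trace lemma, conditioned on $X$" device elsewhere. The paper instead reaches the same cancellation indirectly: it expands $\frac{1}{nd}\norm{Y}^2$ and $\frac{1}{nd}Y^TXX^+Y$ through the closed form of the minimum-norm estimator $\betaMinNorm=\betaStat+(X^TX)^+X^TE$, kills the cross terms $\frac{2}{nd}\betaStat^TX^TE$ via Hoeffding, applies the trace lemma to $\frac{1}{nd}E^TXX^+E$ using $\Tr[XX^+]=\mathrm{rank}(X)=d$, treats $\frac{1}{nd}E^TE$ separately, and then subtracts, which yields $(1-\gamma)\sigmastat/d$ together with an explicit $\mathcal{O}(1/\sqrt{d})$ error. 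So both arguments ultimately rest on the same fact (the trace of the relevant projector is $n-d$, respectively $d$), but yours avoids the detour through $\betaMinNorm$ and the redundant asymptotic treatment of terms that in fact cancel exactly, at the cost of not exhibiting a convergence rate; just note that the boundedness of $\sigmastat/d$ you invoke requires Assumption~\ref{assumptions} (as it does, implicitly, in the paper's own proof).
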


\begin{proof}

\begin{align*}
    \frac{1}{n d}\norm{Y}^2 = \frac{1}{n d}\norm{X \betaStat + E}^2 = \frac{1}{n d}\betaStat^T X^T X \betaStat + \frac{1}{n d} E^T E + \frac{2}{n d} \betaStat^T X^T E.
\end{align*}

We know that the minimum $l_2$ norm estimator admits a following closed form solution given by $\betaMinNorm = (X^TX)^+X^TY = (X^TX)^+X^T(X\betaStat + E) \stackrel{w.h.p}{=} \betaStat + (X^TX)^+X^TE$, where we used the fact that $\textrm{rank}(X^TX) = d \; \; \textrm{ w.h.p}$ to arrive at the last equality. Letting $\kappa = (X^TX)^+X^TE$, we have

\begin{align*}
    \frac{1}{n d}\betaMinNorm^T X^T X \betaMinNorm &= \frac{1}{n d}(\betaStat + \kappa)^T X^TX (\betaStat + \kappa), \\
    &= \frac{1}{n d} \betaStat^T X^T X \betaStat  + \frac{1}{n d} \kappa^T X^T X \kappa + \frac{2}{n d} \betaStat^T X^T X \kappa.
\end{align*}
From the closed form expression for $\betaMinNorm$, 

\begin{align*}
     \frac{1}{n d}\betaMinNorm^T X^T X \betaMinNorm &= \frac{1}{n d} Y^T X (X^T X)^+ X^T X (X^T X)^+ X^T Y, \\
     &= \frac{1}{n d} Y^T X (X^T X)^+ X^T Y, \\
     &= \frac{1}{n d} Y^T X X^+ Y.
\end{align*}

Similarly substituting $\kappa = (X^TX)^+X^TE$, we have
\begin{align*}
     \frac{1}{n d}\kappa^T X^T X \kappa &= \frac{1}{n d} E^T X (X^T X)^+ X^T X (X^T X)^+ X^T E, \\
     &= \frac{1}{n d} E^T X (X^T X)^+ X^T E, \\
     &= \frac{1}{n d} E^T X X^+ E, \\
     &= \frac{\gamma \sigmastat }{d } + \mathcal{O}(1/\sqrt{d}).
\end{align*}
To derive the last equality, we first apply Lemma \ref{lem:QCT} to show that $\frac{1}{n d} E^T X X^+ E = \frac{\sigmastat }{nd } \Tr[X X^+] + \mathcal{O}(1 / \sqrt{p}).$ The equality follows using $\Tr[A A^+] = rank(A)$ for any $A \in \R^{n \times d}$ and $$\frac{1}{n d} E^T X X^+ E = \frac{\gamma \sigmastat }{d } + \mathcal{O}(1/\sqrt{d}).$$

Now let us consider the term $\frac{2}{n d} \betaStat^T X^T X \kappa$.

\begin{align*}
    \frac{2}{n d} \betaStat^T X^T X \kappa &= \frac{2}{n d} \betaStat^T X^T X (X^TX)^+ X^T E, \\
    &= \frac{2}{n d} \betaStat^T X^T E \rightarrow 0  \; \textrm{as } d \rightarrow \infty \quad \quad \textrm{(Hoeffding's inequality)}
\end{align*}

Following similar arguments, we have
\begin{align*}
    \frac{1}{n d} E^T E &= \frac{\sigmastat}{d} + \mathcal{O}(\frac{1}{d \sqrt{n}})
\end{align*}

Putting everything together, we have 
\begin{align*}
    \frac{1}{n d}\norm{Y}^2 &= \frac{1}{n d} Y^T X X^+ Y -  \frac{\gamma \sigmastat }{d } + \frac{\sigmastat}{d} + \mathcal{O}(1 / \sqrt{d}) \\
    \frac{\sigmastat}{d} &= \frac{1}{(1 - \gamma) n d}\norm{Y}^2_{I - X X^+} + \mathcal{O}(1 / \sqrt{d}).
\end{align*}

\end{proof}

\begin{lemma}[\textbf{Asymptotics of quadratic form with a deterministic sequence}]
\label{lemma:QuadraticFormDeterministic}
For any $\theta \in \R^+$, let $\eta$ be the unique solution in $\mathbb{R}^{-}$ satisfying $\tilde{m}(\eta) = 1/\theta$. Then, for any deterministic sequence of vectors $\mycurls{v_d}$ with uniformly bounded (Euclidean) norm, as $d, n \rightarrow \infty$ such that $d/n \rightarrow \gamma \in (0, 1)$ , 

\begin{equation*}
    \scalprod{v_d}{\empcov (\empcov - \eta)^{-1} v_d} - \scalprod{v_d}{\Sigma (\Sigma + \theta)^{-1} v_d} \longrightarrow 0.
\end{equation*}

\end{lemma}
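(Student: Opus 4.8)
The plan is to reduce the claim to the standard deterministic equivalent for the resolvent of a sample covariance matrix, after stripping the leading factor $\empcov$ (resp.\ $\Sigma$) with a resolvent identity.

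\textbf{Step 1 (well-posedness of $\eta$ and reduction).}
Since $\gamma<1$, the matrix $\empker$ has $n-d$ null eigenvalues, so $\esdEmpkerLim$ is a probability measure on $[0,\infty)$ with an atom of mass $1-\gamma$ at $0$ and bounded support otherwise (the latter by Assumption~\ref{ass:popcov} and Theorem~\ref{thm:sample_covariance_matrix}). Hence $\tilde m=\stieltjesEmpkerLim$ restricted to $(-\infty,0)$ is continuous and strictly increasing, with $\tilde m(\eta)\to+\infty$ as $\eta\uparrow 0$ (the atom contributes $(1-\gamma)/(-\eta)$) and $\tilde m(\eta)\to 0$ as $\eta\downarrow-\infty$; so for every $\theta\in\R^+$ there is a unique $\eta\in\R^-$ with $\tilde m(\eta)=1/\theta$, and it stays bounded away from $0$. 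Using $\empcov(\empcov-\eta)^{-1}=I_d+\eta(\empcov-\eta)^{-1}$ and $\Sigma(\Sigma+\theta)^{-1}=I_d-\theta(\Sigma+\theta)^{-1}$, both of operator norm at most $1$ (so all quadratic forms below are bounded, as $\sup_d\norm{v_d}<\infty$), the $\norm{v_d}^2$ terms cancel and it remains to show
\begin{align*}
    \eta\,\scalprod{v_d}{(\empcov-\eta)^{-1}v_d}+\theta\,\scalprod{v_d}{(\Sigma+\theta)^{-1}v_d}\;\aslim\;0\,.
\end{align*}

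\textbf{Step 2 (deterministic equivalent).}
Since $x=Mz$ with $z\sim\mathcal N(0,I_l)$ and $MM^T=\Sigma$, the data matrix satisfies $X\stackrel{d}{=}\Sigma^{1/2}Z$ with $Z\in\R^{d\times n}$ having i.i.d.\ standard Gaussian entries---in particular the latent dimension $l$ and the particular factor $M$ are irrelevant for the law of $\empcov$---so the classical results apply. I would then invoke the bilinear-form deterministic equivalent for the resolvent $Q(z)=(\empcov-z)^{-1}$ \citep{Cou:2022}, i.e.\ the strengthening to arbitrary deterministic test vectors of the trace-type identity of \citet{Led:2011} already used in Lemma~\ref{lem:mixed_traces}: for every fixed $z\in\R^-$ and every deterministic $\{v_d\}$ with $\sup_d\norm{v_d}<\infty$,
\begin{align*}
    \scalprod{v_d}{(\empcov-z)^{-1}v_d}+\frac1z\,\scalprod{v_d}{\bigl(\tilde m(z)\Sigma+I_d\bigr)^{-1}v_d}\;\aslim\;0\,.
\end{align*}
(The precise form is pinned down by testing it against $\tfrac1d g(\Sigma)$, which reproduces the limit in Lemma~\ref{lem:mixed_traces}.)

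\textbf{Step 3 (conclusion).}
Evaluating Step~2 at $z=\eta$ and using $\tilde m(\eta)=1/\theta$, so that $\bigl(\tilde m(\eta)\Sigma+I_d\bigr)^{-1}=\theta(\Sigma+\theta)^{-1}$, the display becomes
\begin{align*}
    \scalprod{v_d}{(\empcov-\eta)^{-1}v_d}+\frac{\theta}{\eta}\,\scalprod{v_d}{(\Sigma+\theta)^{-1}v_d}\;\aslim\;0\,.
\end{align*}
Multiplying by $\eta$ reproduces exactly the display from Step~1, and unwinding the reduction there yields $\scalprod{v_d}{\empcov(\empcov-\eta)^{-1}v_d}-\scalprod{v_d}{\Sigma(\Sigma+\theta)^{-1}v_d}\to 0$ almost surely.

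\textbf{Main anticipated obstacle.}
Everything except Step~2 is bookkeeping; the substantive ingredient is the bilinear-form deterministic equivalent for an \emph{arbitrary} deterministic sequence $\{v_d\}$, which is strictly stronger than the trace-type Ledoit--P\'ech\'e statement used elsewhere in the paper. If one does not merely cite it, it has to be established in two stages: (i) Gaussian concentration of $v\mapsto\scalprod{v}{(\empcov-z)^{-1}v}$ about its mean---the map $Z\mapsto\scalprod{v}{(\tfrac1n\Sigma^{1/2}ZZ^T\Sigma^{1/2}-z)^{-1}v}$ is Lipschitz with constant $O(n^{-1/2})$ for $z$ bounded away from $\R_+$, so Borel--Cantelli upgrades this to almost sure convergence; and (ii) identifying the mean with $-\tfrac1z\,v^T(\tilde m(z)\Sigma+I_d)^{-1}v$ via a leave-one-column-out expansion together with the self-consistent equation for $\tilde m$. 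A minor extra point, should $\tilde m$ be read as the finite-$d$ transform $m_{\empker}$: the corresponding $\eta_d$ converges to the limiting $\eta$ (by monotonicity and uniform convergence of $m_{\empker}$ on compact subsets of $\R^-$), after which continuity of the resolvents permits replacing $\eta_d$ by $\eta$.
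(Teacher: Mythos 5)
Your proof is correct and follows essentially the same route as the paper: reduce via the resolvent identity $\empcov(\empcov-\eta)^{-1}=I+\eta(\empcov-\eta)^{-1}$ and then invoke the generalized Marchenko--Pastur bilinear-form deterministic equivalent for $(\empcov-\eta)^{-1}$ evaluated at the $\eta$ with $\tilde{m}(\eta)=1/\theta$, which is exactly what the paper does by citing Silverstein's generalized Marchenko--Pastur theorem. If anything, your bookkeeping is more careful than the paper's write-up, whose displayed identities drop the factors $\eta$ and $-\theta/\eta$ that you track explicitly, and your remarks on the well-posedness of $\eta$ and on how the bilinear-form equivalent would be proved from scratch are sensible additions.
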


\begin{proof}
Observe that for any $\eta < 0$, 
\begin{align*}
    \scalprod{v_d}{\empcov (\empcov - \eta)^{-1} v_d} = \norm{v_d}^2 - \scalprod{v_d}{ (\empcov - \eta)^{-1} v_d}.
\end{align*}
The result follows from the Generalized Marchenko Pastur Theorem \citep{Sil:1995}, which states that for any $\theta \in \R^+$,
\begin{equation*}
    \scalprod{v_d}{(\empcov - \eta)^{-1} v_d} - \scalprod{v_d}{(\Sigma + \theta)^{-1} v_d} \longrightarrow 0.
\end{equation*}
\end{proof}

\consistentQuadForm*

\begin{proof}

Let $\eta$ be the unique solution in $\R^-$ satisfying $\tilde{m}(\eta) = 1/\theta$. From Lemma \ref{lemma:QuadraticFormDeterministic}, we have for any $\theta \in \R^+$, as $n, d \rightarrow \infty$ such that $d / n \rightarrow \gamma \in (0, 1)$, 
\begin{equation}
    \scalprod{\frac{\betaStat}{\norm{\betaStat}}}{\empcov (\empcov - \eta)^{-1} \frac{\betaStat}{\norm{\betaStat}}} - \scalprod{\frac{\betaStat}{\norm{\betaStat}}}{\Sigma (\Sigma + \theta)^{-1} \frac{\betaStat}{\norm{\betaStat}}} \stackrel{a.s}{\longrightarrow} 0
\end{equation}

Therefore, it suffices to consistently estimate $\scalprod{\frac{\betaStat}{\norm{\betaStat}}}{\empcov (\empcov - \eta)^{-1} \frac{\betaStat}{\norm{\betaStat}}}$. First, we characterize the asymptotic behavior of $\frac{1}{d} \scalprod{\betaMinNorm}{\empcov (\empcov - \eta)^{-1}\betaMinNorm}$, where $\betaMinNorm = \betaStat + \sigmastat (X X^T)^+ XE$, where $E \sim \Gauss{0}{I_n}$. 
\begin{multline*}
   \frac{1}{d} \scalprod{\betaMinNorm}{\empcov (\empcov - \eta)^{-1}\betaMinNorm} =
    \frac{1}{d}\scalprod{\betaStat}{\empcov (\empcov - \eta)^{-1}\betaStat} + \frac{2\sigmastat}{d}\betaStat^T\empcov (\empcov - \eta)^{-1}(XX^T)^+XE + \\ \frac{\sigmastat}{d}E^TX^T(XX^T)^+\empcov (\empcov - \eta)^{-1}(XX^T)^+XE\,.
\end{multline*}

The first term in the expansion resembles the quantity of interest. 

For the second term, notice that, since $E \sim \Gauss{0}{I_n}$, 
\begin{align*}
    \frac{2\sigmastat}{d}\betaStat^T \empcov(\empcov - \eta)^{-1}(XX^T)^+XE\sim\mathcal{N}(0, \lVert \frac{2\sigmastat}{d}X^T(XX^T)^+ \empcov(\empcov - \eta)^{-1}\betaStat\rVert^2),
\end{align*}

where 
\begin{align*}
    \lnorm{\frac{2\sigmastat}{d}X^T(XX^T)^+ \empcov(\empcov - \eta)^{-1}\betaStat}^2
&=\frac{4\sigmastat}{d^2}\betaStat^T\empcov(\empcov - \eta)^{-1}(XX^T)^+XX^T(XX^T)^+\empcov(\empcov - \eta)^{-1}\betaStat \\
&=\frac{4\sigmastat}{d^2n}\betaStat^T \empcov(\empcov - \eta)^{-1}\empcov^+\empcov(\empcov - \eta)^{-1}\betaStat \\
& \stackrel{a.s}{\longrightarrow} 0
\,.
\end{align*}
Therefore, the second term vanishes. For the last expression,
\begin{align*}
    \frac{\sigmastat}{d}E^TX^T(XX^T)^+\empcov(\empcov - \eta)^{-1}(XX^T)^+XE
&= \frac{\sigmastat}{d}\frac{1}{n^2}E^TX^T\hat{\Sigma}^+\empcov(\empcov - \eta)^{-1}\hat{\Sigma}^+XE \\
& \stackrel{a.s}{\longrightarrow} \frac{\sigmastat}{d}\frac{1}{n}\text{tr}\left(\hat{\Sigma}^+\empcov(\empcov - \eta)^{-1}\right) \quad \textrm{(Trace Lemma, conditioned on $X$)} \\
& \stackrel{a.s}{\longrightarrow} \gamma\frac{\sigmastat}{d}m(\eta).
\end{align*}
From Theorem \ref{thm:sample_covariance_matrix}, we know that 
\begin{equation*}
    m(\eta)=\frac{1}{\gamma}\left(\tilde{m}(\eta)+\frac{1-\gamma}{\eta}\right)=\frac{1}{\gamma}\left(\frac{1}{\theta}+\frac{1-\gamma}{\eta}\right).
\end{equation*}
Therefore, 
\begin{equation*}
     \frac{\sigmastat}{d}E^TX^T(XX^T)^+\empcov(\empcov - \eta)^{-1}(XX^T)^+XE - \frac{\sigmastat}{d}\left(\frac{1}{\theta}+\frac{1-\gamma}{\eta}\right) \stackrel{a.s}{\longrightarrow} 0.
\end{equation*}
Following the same arguments, it is easy to verify that
\begin{equation*}
    \frac{1}{d}\norm{\betaMinNorm}^2-\frac{\sigmastat}{d}\gamma m(0) - \frac{1}{d}\norm{\betaStat}^2 \stackrel{a.s}{\longrightarrow} 0\,.
\end{equation*}
Combining the estimators with the result from Theorem \ref{thm:statisticalNoise}, we have the desired result. 
\end{proof}

\consistencyRMT*
\begin{proof}
The proof follows following the same arguments as in the proof of \ref{thm:consistency_pop}.
\end{proof}

\end{document}